%% file: main.tex
\PassOptionsToPackage{table,dvipsnames}{xcolor}
\documentclass{article}
\usepackage{iclr2027_conference,times}

\usepackage{amsmath,amssymb,amsthm}
\usepackage{dsfont}
\usepackage{mathtools}
\usepackage{graphicx}
\usepackage{tikz}
\usetikzlibrary{decorations.pathreplacing,calligraphy,fit,arrows.meta,calc,fadings}
\usepackage{pgfplots}
\pgfplotsset{compat=1.18}
\usepackage{booktabs}
\usepackage{enumitem}
\usepackage{multirow}
\usepackage{tabularx}
\usepackage{float}
\usepackage{algorithm}
\usepackage[noend]{algpseudocode}
\usepackage{caption}
\newcommand{\best}[1]{{\boldmath\textbf{#1}}}

\newcommand{\second}[1]{\underline{#1}}
\newcolumntype{Y}{>{\centering\arraybackslash}X}
\usepackage{xcolor}
\usepackage{xspace}
\usepackage{microtype}
\usepackage{hyperref}
\definecolor{citeblue}{HTML}{1D4ED8}
\definecolor{refred}{HTML}{C62828}
\definecolor{royalblue}{HTML}{4169E1}
\hypersetup{colorlinks=true, linkcolor=MidnightBlue,
            citecolor=ForestGreen, urlcolor=royalblue}
\makeatletter
\renewcommand\hyper@natlinkbreak[2]{#1}
\let\memo@natlinkstart\hyper@natlinkstart\let\memo@natlinkend\hyper@natlinkend
\renewcommand\hyper@natlinkstart[1]{\leavevmode\hbox\bgroup\memo@natlinkstart{#1}}
\renewcommand\hyper@natlinkend{\memo@natlinkend\egroup}
\makeatother
\usepackage{url}
\usepackage[normalem]{ulem}
\input{math_commands.tex}

\newtheorem{theorem}{Theorem}
\newtheorem{proposition}[theorem]{Proposition}

\theoremstyle{remark}

\theoremstyle{plain}

\newcommand{\ours}{\textsc{Memorizon}\xspace}

\newcommand{\spanm}{m}
\newcommand{\band}{\mathcal{B}}
\newcommand{\tgt}{\mathcal{T}}
\newcommand{\rec}{\mathcal{R}}
\newcommand{\anchor}{A}
\newcommand{\SE}{\mathrm{SE}}
\newcommand{\Sco}{\mathcal{S}}
\newcommand{\Ovl}{\mathcal{O}}
\newcommand{\Pdist}{\mathcal{D}}

\definecolor{okgreen}{HTML}{1B7F4B}
\definecolor{amberc}{HTML}{B26B00}
\definecolor{graym}{HTML}{A0A0A0}
\definecolor{hdrgray}{HTML}{555555}
\definecolor{grouprow}{HTML}{F3F4F6}
\definecolor{ourrow}{HTML}{D0F2F3}
\definecolor{ourline}{HTML}{79DBDC}
\definecolor{bandgray}{HTML}{EFEFEF}
\definecolor{cAnF}{HTML}{F6D9B0}\definecolor{cAnD}{HTML}{B9822F}
\definecolor{cBaF}{HTML}{BFE0C8}\definecolor{cBaD}{HTML}{3E8A5B}
\definecolor{cReF}{HTML}{BFD6EE}\definecolor{cReD}{HTML}{2F6DA8}
\definecolor{cTgF}{HTML}{D9CBEC}\definecolor{cTgD}{HTML}{6C4B9E}
\definecolor{cHiF}{HTML}{EDEDED}\definecolor{cHiD}{HTML}{9A9A9A}
\definecolor{cMkD}{HTML}{C0392B}
\definecolor{cK2}{HTML}{E69F00}\definecolor{cK4}{HTML}{56B4E9}
\definecolor{cK6}{HTML}{009E73}\definecolor{cK8}{HTML}{CC79A7}
\definecolor{cK10}{HTML}{0072B2}

\title{Memorizon: Training World Models Beyond Their Context Window}
\author{Tingting Liao\textsuperscript{1}, \ Xuezhi Liang\textsuperscript{1}, \ Hao Li\textsuperscript{2}, \ Guangyi Liu\textsuperscript{1}}

\iclrfinalcopy

\hypersetup{pdftitle={Memorizon: Training World Models Beyond Their Context Window},
            pdfauthor={Tingting Liao, Xuezhi Liang, Hao Li, Guangyi Liu}}

\usepackage{fontawesome5}
\usepackage[most]{tcolorbox}
\definecolor{absbg}{HTML}{FFFDEC}
\makeatletter
\def\@maketitle{\vbox{\hsize\textwidth\centering
  \lhead{Preprint}
  \vskip -18pt
  {\fontsize{13.4}{16}\selectfont\bfseries Memorizon: Training World Models Beyond Their Context Window\par}
  \vskip 0.75em
  {\fontsize{10}{12}\selectfont\bfseries \@author\par}
  \vskip 0.3em
  {\fontsize{8}{10}\selectfont \textsuperscript{1}IFM, MBZUAI \quad \textsuperscript{2}MBZUAI\par}
  \vskip 0.45em
  {\fontsize{8}{10}\selectfont \href{http://tingtingliao.github.io/memorizon}{{\color{ourline}\faGlobe}\ Webpage}\qquad
   \href{https://github.com/TingtingLiao/memorizon}{{\color{black}\faGithub}\ Code}\qquad
   \href{https://huggingface.co/Luffuly/memorizon}{\raisebox{-0.3ex}{\includegraphics[height=1.9ex]{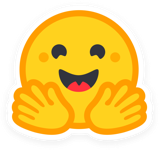}}\ Model}\par}
  \vskip 0.7em
  \hrule height 0.5pt
  \vskip 1.1em}}
\renewenvironment{abstract}{\begin{tcolorbox}[enhanced,colback=absbg,colframe=absbg,boxrule=0pt,arc=2pt,
    left=7pt,right=7pt,top=5pt,bottom=5pt,before skip=0pt,after skip=1.2em]\noindent\textbf{Abstract}\enspace\ignorespaces}
  {\end{tcolorbox}}
\makeatother

\begin{document}

\maketitle

\input{sections/00_abstract}
\input{sections/01_intro}
\input{sections/02_related}
\input{sections/03_method}

\input{sections/04_experiments}

\input{sections/05_discussion}

\bibliography{refs}
\bibliographystyle{iclr2027_conference}

\clearpage
\appendix
\raggedbottom
\input{appendix/00_overview}
\input{appendix/01_method}

\input{appendix/02_data}
\input{appendix/03_experiments}
\input{appendix/04_visual}

\end{document}

%% file: math_commands.tex
\usepackage{amsmath,amsfonts,bm}

\def\eqref#1{equation~\ref{#1}}

\def\1{\bm{1}}

\DeclareMathAlphabet{\mathsfit}{\encodingdefault}{\sfdefault}{m}{sl}
\SetMathAlphabet{\mathsfit}{bold}{\encodingdefault}{\sfdefault}{bx}{n}

%% file: sections/00_abstract.tex
\begin{abstract}
Streaming world models should render a place consistently across repeated visits. Directly supervising such revisits requires training samples that capture both visits, often spanning minutes.
Yet dense attention over the full span incurs quadratic costs, making long-span supervision expensive.
Memorizon breaks this coupling: long spans are needed for supervision, but not for attention
since the two visits can share a forward pass without including every intervening frame.
A training sample covers a span of any length but is scored only on its last $k$ chunks. Instead of tokenizing the history before them, each scored chunk retrieves its own top-$K$ latents by camera co-visibility, and the union of these requests forms a shared bank. The bank is bounded by $kK$, so the sequence stays bounded however long the span; at the shortest span the recipe is exactly conventional training. Adding the bank raises the cost of a step once; beyond that, a longer span costs little, and going from $100$ to $400$\,s adds $12\%$ to the step time. Against a sliding-window baseline, retrieval raises revisit consistency on every split, and a span long enough to reach the first visit of each return adds a further $24\%$ to $30\%$, at some cost in image quality; beyond that span, more length no longer helps. Filling the bank from another episode lowers revisit correlation by $83\%$, so the model uses what it retrieves.
\par\smallskip\noindent\textbf{Project page:} \url{https://tingtingliao.github.io/memorizon}
\end{abstract}

%% file: sections/01_intro.tex
\section{Introduction}
\label{sec:intro}

General video world models should stay consistent over a long rollout: a place the camera leaves and later returns to should look as it did before. The prevailing remedy is architectural --- recurrent states, retrieval banks and learned summaries of history that outlive the attention window \citep{worldmem, contextasmemory, framepack, infiniteworld}.

Whether a model learns to use a memory depends on what its training samples contain, not only on its architecture.
When training is limited to a short video clip, that clip \emph{is} the sequence the transformer attends to, so the span a sample covers equals its context. Attention cost often limits that span to about tens of seconds, while the model is deployed for minutes. If the camera takes longer than a training clip to leave a place and return, no sample contains both visits and no loss term relates them. On our corpus the shortest genuine return spans $13.25$\,s, and stricter definitions push it past $28$\,s (\hyperref[app:windowcount]{App.~\ref*{app:windowcount}}).
Memory-conditioned models
\citep{contextasmemory,worldmem} already relax this by placing frames retrieved from outside the clip into the training sequence. Two questions remain open: how retrieval should be organized when the scored chunks of one sample look at different places, and how long a span must be for what they retrieve to contain the first visit at all.

\input{figures/fig_teaser}

The direct remedy, a longer training window, scales poorly. Attention cost is quadratic and activation memory linear in the context, so a $30$\,s window needs roughly three times the activation memory of a $10$\,s one, yet only $3.8\%$ of such windows on our corpus contain a genuine return. Sparse attention reduces the cost of processing long sequences \citep{vsa,moc}, while history compression reduces the number of tokens representing the past~\citep{framepack}. Neither changes which pairs of visits a sample can relate. Retrieving once for the whole clip, as in Context-as-Memory~\citep{contextasmemory}, gives every chunk the same frames although each draws a different view; we show this loses most of what retrieval buys. We instead let each scored chunk retrieve its own top-$K$ and share their union as one bank, and treat the span as a sampled variable rather than a fixed clip length, so the sequence stays bounded however far back the first visit lies.

\paragraph{\ours.} Each training sample covers a span of $\spanm$ chunks, drawn per sample, and scores only the last $k$; the unretrieved history before them is never tokenized. Instead, each scored chunk retrieves its own top-$K$ latents by camera co-visibility, ranking candidates by frustum overlap penalized by relative pose,
and their union forms a shared memory bank. Within the sequence, attention is causal across chunks with a sliding window of $r{+}1$ chunks, the first frame serves as an attention sink, and a per-chunk mask lets each chunk read only its own retrieved entries, so every chunk directly attends to a bounded number of positions. A rollout reads the same layout one chunk at a time. \ours is thus neither a long-context method, which cheapens attention over more tokens, nor a method for longer rollouts, which are already routine; it is a recipe for training on long video at bounded cost.

\paragraph{Results.} Retrieval raises revisit consistency on every split, already for a model trained on $10$\,s clips, and training on spans of up to $200$\,s raises it further (\hyperref[tab:headline]{Tables~\ref*{tab:headline}} and \hyperref[tab:headline_seen]{\ref*{tab:headline_seen}}). Widening the window without retrieval is no substitute. Longer spans cost some image quality. \textbf{The bank levels off}: for $K = 6$ it holds $28$ entries at $100$\,s and $31.5$ at $400$\,s, while the candidate pool grows tenfold from $50$ to $400$\,s (\hyperref[fig:teaser]{Figure~\ref*{fig:teaser}}b). Against open world models, \ours and CaR are the only two with a clear memory, with \ours ahead on most memory metrics and on returns met mid-path, and replacing the bank's contents at inference shows that it reads what it retrieves (\hyperref[sec:exp]{Sec.~\ref*{sec:exp}}).

\paragraph{Contributions.} (i) We propose \textsc{Memorizon}, a recipe that samples spans of any length and supplies their history through a shared bank formed from per-chunk retrieval, so the training sequence stays bounded however long the span and reduces exactly to ordinary training at the shortest one (\hyperref[sec:method]{Sec.~\ref*{sec:method}}). (ii) We show that its two parts do different jobs. Per-chunk retrieval under a shared positional index lets a model read frames far older than any it was trained on; a span that reaches the first visit turns returns into supervision and adds the gain on the returns that test memory hardest, those met mid-path, which the first frame cannot serve, and those more than $10$\,s apart, with no further gain once the span covers the first visits (\hyperref[sec:span]{Sec.~\ref*{sec:span}}, \hyperref[app:tenswhy]{App.~\ref*{app:tenswhy}}--\ref{app:gapsplit}). (iii) We show that the bank saturates and the step cost barely grows with the span, that the model reads what it retrieves rather than treating it as filler, and we report the cost in image quality this currently carries (\hyperref[sec:exp]{Sec.~\ref*{sec:exp}}).

%% file: figures/fig_teaser.tex
\begin{figure}[t]
\centering
\vspace{-20pt}
\begin{minipage}[t]{0.505\textwidth}
{\scriptsize\bfseries\color{black!80}(a) Why the span must be long}\par\vspace{3pt}
\centering
\resizebox{\linewidth}{!}{%
\begin{tikzpicture}[
  font=\scriptsize,
  tok/.style={draw,line width=0.35pt,rounded corners=0.4pt,
    minimum height=3mm,inner sep=0pt,font=\tiny},
  lbl/.style={font=\tiny,text=black!62},
  mk/.style={circle,fill=cMkD,inner sep=0pt,minimum size=1.3mm},
  first/.style={tok,fill=cAnF,draw=cAnD,text=cAnD,minimum width=3mm},
  recent/.style={tok,fill=cReF,draw=cReD,text=cReD,minimum width=4.4mm},
  target/.style={tok,fill=cTgF,draw=cTgD,text=cTgD,minimum width=14.5mm}
]
\node[lbl,text=cMkD,anchor=south] at (1.30,0.78) {first visit};
\node[lbl,text=cMkD,anchor=south east] at (6.60,0.78) {the camera returns};
\draw[cMkD,line width=0.45pt,dashed] (1.30,0.76) -- (1.30,0.68);
\draw[cMkD,line width=0.45pt,dashed] (5.80,0.76) -- (5.80,0.68);
\draw[fill=black!7,draw=black!35,line width=0.35pt,rounded corners=0.8pt]
  (0.10,0.40) rectangle (6.60,0.68);
\node[lbl,text=black!55] at (3.55,0.54) {one episode};
\node[mk] at (1.30,0.54) {};
\node[mk] at (5.80,0.54) {};

\node[lbl,anchor=east] at (4.70,0.00) {ordinary training};
\node[first] at (4.95,0.00) {$\anchor$};
\node[target] at (5.875,0.00) {};
\draw[cTgD,line width=0.35pt]
  (5.633333,-0.15) -- (5.633333,0.15)
  (6.116667,-0.15) -- (6.116667,0.15);
\node[font=\tiny,text=cTgD,inner sep=0pt] at (5.391667,0.00) {$T_1$};
\node[font=\tiny,text=cTgD,inner sep=0pt] at (5.875,0.00) {$\cdots$};
\node[font=\tiny,text=cTgD,inner sep=0pt] at (6.358333,0.00) {$T_k$};
\draw[black!40,line width=0.35pt,dotted] (5.125,0.16) -- (5.125,-1.54);

\node[lbl,anchor=south west,inner sep=0pt] at (0.10,-0.32)
  {sampled training span};
\node[first] at (0.25,-0.52) {$\anchor$};
\draw[fill=cHiF,draw=cHiD,line width=0.35pt,rounded corners=0.8pt]
  (0.46,-0.67) rectangle (4.60,-0.37);
\node[lbl,text=cHiD] at (2.53,-0.52) {history};
\node[recent] at (4.88,-0.52) {$\rec$};
\node[target] at (5.875,-0.52) {};
\draw[cTgD,line width=0.35pt]
  (5.633333,-0.67) -- (5.633333,-0.37)
  (6.116667,-0.67) -- (6.116667,-0.37);
\node[font=\tiny,text=cTgD,inner sep=0pt] at (5.391667,-0.52) {$T_1$};
\node[font=\tiny,text=cTgD,inner sep=0pt] at (5.875,-0.52) {$\cdots$};
\node[font=\tiny,text=cTgD,inner sep=0pt] at (6.358333,-0.52) {$T_k$};

\draw[decorate,decoration={brace,amplitude=2pt,mirror},line width=0.45pt,cBaD]
  (0.46,-0.72) -- (4.60,-0.72);
\node[lbl,text=cBaD,anchor=north] (sel) at (2.10,-0.78)
  {union of per-chunk top-$K$};
\draw[-{Stealth[length=3pt,width=3pt]},line width=0.55pt,cBaD]
  (sel.east) to[out=0,in=115,looseness=0.9] (3.9874,-1.19);

\node[lbl,anchor=east] at (2.91,-1.36) {model input};
\node[first] at (3.1648,-1.36) {$\anchor$};
\node[tok,fill=cBaF,draw=cBaD,text=cBaD,minimum width=12.252mm]
  at (3.9874,-1.36) {$\band$};
\node[recent] at (4.88,-1.36) {$\rec$};
\node[target] at (5.875,-1.36) {};
\draw[cTgD,line width=0.35pt]
  (5.633333,-1.51) -- (5.633333,-1.21)
  (6.116667,-1.51) -- (6.116667,-1.21);
\node[font=\tiny,text=cTgD,inner sep=0pt] at (5.391667,-1.36) {$T_1$};
\node[font=\tiny,text=cTgD,inner sep=0pt] at (5.875,-1.36) {$\cdots$};
\node[font=\tiny,text=cTgD,inner sep=0pt] at (6.358333,-1.36) {$T_k$};

\draw[decorate,decoration={brace,amplitude=1.8pt,mirror},line width=0.45pt,cTgD]
  (5.15,-1.56) -- (6.60,-1.56);
\node[lbl,text=cTgD,anchor=north,inner sep=1pt]
  at (5.875,-1.61) {loss on $\tgt$ only};

\coordinate (kz) at (0.60,-2.04);
\foreach \f/\d/\m/\t in {
  cAnF/cAnD/\anchor/first frame,
  cBaF/cBaD/\band/bank,
  cReF/cReD/\rec/recent,
  cTgF/cTgD/\tgt/target chunks}{
  \node[draw=\d,fill=\f,text=\d,line width=0.35pt,rounded corners=0.4pt,
    minimum width=2.3mm,minimum height=2.3mm,inner sep=0pt,font=\tiny,
    anchor=west] (ks) at (kz) {$\m$};
  \node[lbl,anchor=west,inner sep=2pt,text height=1.4ex,text depth=0.3ex]
    (kt) at (ks.east) {\t};
  \coordinate (kz) at ([xshift=4mm]kt.east);
}
\end{tikzpicture}%
}
\end{minipage}\hfill
\begin{minipage}[t]{0.475\textwidth}
{\scriptsize\bfseries\color{black!80}(b) Why it stays affordable}\par\vspace{3pt}
\centering
\resizebox{\linewidth}{!}{%
\begin{tikzpicture}[font=\scriptsize]
\begin{axis}[
  width=7.10cm,height=4.30cm,
  xmode=log,ymode=log,xmin=28,xmax=2400,ymin=4.6,ymax=3400,
  xlabel={training span (latent frames)},
  ylabel={bank size $|\band|$},
  xlabel style={font=\tiny,yshift=2pt},
  ylabel style={font=\tiny,yshift=-3pt},
  tick label style={font=\tiny},
  axis line style={black!50,line width=0.45pt},
  grid=major,grid style={black!9,line width=0.3pt},
  xtick={41,101,201,401,801,1601},xticklabels={41,101,201,401,801,1601},
  extra x ticks={81},extra x tick labels={81},
  extra x tick style={tick label style={xshift=-3.5pt}},
  ytick={10,100,1000},yticklabels={10,100,1000},
  log ticks with fixed point,axis on top,
  legend columns=3,
  legend style={font=\tiny,draw=none,fill=white,
    at={(0.02,0.98)},anchor=north west,
    inner sep=2pt,column sep=3pt},
  legend image post style={xscale=0.55,mark size=0.75pt},
  legend style={
    font=\fontsize{4.5}{5.2}\selectfont,
    draw=none,fill=none,
    at={(0.025,0.97)},anchor=north west,
    inner sep=0.5pt,column sep=1pt,row sep=0pt,
    nodes={inner xsep=1pt,inner ysep=0pt}
  },
  legend cell align=left
]
\fill[black!6] (axis cs:401,4.6) rectangle (axis cs:2400,3400);
\node[font=\tiny,text=black!42,anchor=north,align=center]
  at (axis cs:800,500) {beyond $100$\,s};

\addplot[black!60,line width=0.9pt,densely dashed,mark=none,forget plot]
  coordinates {(81,36)(101,56)(201,156)(401,356)(801,756)(1601,1556)};
\node[font=\tiny,text=black!60,anchor=south,rotate=15]
  at (axis cs:150,152) {candidate pool};

\addplot[cK2,line width=0.9pt,mark=pentagon*,mark size=1.1pt,mark options={fill=cK2}]
  coordinates {(81,6.5)(101,7.5)(201,10.0)(401,11.6)(801,12.3)(1601,12.7)};
\addplot[cK4,line width=0.9pt,mark=diamond*,mark size=1.3pt,mark options={fill=cK4}]
  coordinates {(81,10.9)(101,12.8)(201,18.3)(401,21.3)(801,22.3)(1601,22.2)};
\addplot[cK6,line width=1.15pt,mark=triangle*,mark size=1.3pt,mark options={fill=cK6}]
  coordinates {(81,14.2)(101,17.5)(201,22.9)(401,27.9)(801,30.7)(1601,31.5)};
\addplot[cK8,line width=0.9pt,mark=square*,mark size=1pt,mark options={fill=cK8}]
  coordinates {(81,17.9)(101,19.2)(201,27.7)(401,34.8)(801,37.8)(1601,39.6)};
\addplot[cK10,line width=0.9pt,mark=*,mark size=1.1pt,mark options={fill=cK10}]
  coordinates {(81,19.3)(101,23.2)(201,33.9)(401,40.2)(801,45.5)(1601,46.3)};

\node[font=\tiny,text=black!55,anchor=south,align=center] at (axis cs:41,5.4) {bank\\empty};
\legend{$K{=}2$,$K{=}4$,$K{=}6$,$K{=}8$,$K{=}10$}
\end{axis}
\end{tikzpicture}%
}
\end{minipage}

\caption{\textbf{Long-horizon training at bounded cost.} \textbf{(a)} Ordinary training
covers only a short tail of the episode, so the first visit falls outside it; \ours
samples a span that holds both visits, scores only its tail and supplies the history
through a bank $\band$ of retrieved latents. \textbf{(b)} Bank size against training span:
the bank levels off at every $K$ while the candidate pool grows; at $41$ latents there is
no history. Shading marks spans beyond $100$\,s.}
\label{fig:teaser}
\end{figure}

%% file: sections/02_related.tex
\section{Related Work}
\label{sec:related}

\paragraph{Long-Horizon Video World Models.} Recent interactive world models generate for a minute or more yet train on clips of a few seconds, a length set by attention cost, and rely on the model to generalize across the gap \citep{xiang2024pandorageneralworldmodel,genie, genie3, matrixgame2, hyworld15, lingbotworld, lingbotworldv2, yume, dreamxworld, wonder,panteam2026panworldmodelgeneral}. Infinite-World \citep{infiniteworld} observes that memory collapses beyond the temporal window seen in training and answers it with a pose-free hierarchical memory that compresses the history into a compact state, trained on revisit-dense data; we instead remove the need for a return to fit inside the window at all.

\paragraph{Memory in Video World Models.} Memory mechanisms either compress history or select from it. Compression folds history into a compact state \citep{framepack, infiniteworld, yume, relic}, in the limit into a fixed summary of the opening chunk \citep{streamingt2v}. CaR \citep{car} sits between the two: it compresses the history with a lightweight network and retrieves from it implicitly, through attention over viewpoints injected by positional encoding, so what it reads grows with the history it attends to; we instead select retrieved latents explicitly by camera co-visibility and keep the sequence bounded however long the span. Selection keeps a few past frames, retrieved by frustum overlap \citep{worldmem, contextasmemory, worldpack}, by 3D structure such as point maps or image patches lifted to 3D \citep{vmem, memoryforcing, spmem, mosaicmem}, by camera-aware scores or gating \citep{hyworld15, matrixgame3, memorizewhenneeded}, as retrieval-augmented context \citep{vrag}, or through a learned query \citep{memlearner}; training-free variants select inside the KV cache \citep{worldkv, tethercache, closingloop, worldtrace}, and so reach far back only at rollout.
Context-as-Memory \citep{contextasmemory} retrieves clean frames for each predicted segment based on field-of-view overlap, enabling a bidirectional model to generate longer videos. WorldMem \citep{worldmem}, in contrast, trains a causal window with memory frames sampled from anywhere in the same video based on pose proximity. However, it is trained solely on Minecraft, leaving its memory mechanism specialized to a single environment without demonstrating the ability to generalize across environments.

\paragraph{Efficient Attention and Long-Sequence Training.} Attention sinks \citep{streamingllm}, sparse attention \citep{vsa, moc, sparseforcing} and history routing \citep{resamplingforcing} make each token cheaper, and sequence parallelism shards one sequence across devices, as in LWM \citep{lwm}. Both still pay for every token in the sequence, whereas we keep almost none of a long span there, so the sequence does not grow with it. The closer precedent is retrieval-augmented language modelling, which trains on short subsequences while retrieving from a long document \citep{memtransformers, landmark, focusedtransformer}; we retrieve by camera co-visibility instead of learned similarity. Diffusion forcing \citep{diffusionforcing} and self-forcing or distribution-matching distillation \citep{selfforcing, causvid, dmd2}, as used by RELIC \citep{relic}, narrow the gap at rollout, but all leave the span of a sample equal to its length.

%% file: sections/03_method.tex
\section{Method}
\label{sec:method}

\subsection{Overview}
\label{sec:overview}

Let $c$ be the chunk size in latents. A training sample covers the first frame $\anchor$ and $\spanm$ chunks behind it, with $\spanm$ drawn per sample from $[\spanm_{\min}, \spanm_{\max}]$; the span is the only quantity that varies between samples. Two constants partition it: the last $k$ chunks are \emph{scored}, the $r$ chunks before them form the \emph{recent} block, and the remaining $\eta = \spanm - k - r$ are \emph{history}, \[ \underbrace{\anchor}_{\text{first frame}}\;\Big|\; \underbrace{H_1 \dots H_{\eta}}_{\text{history}}\;\Big|\; \underbrace{R_{1} \dots R_{r}}_{\rec,\ r\ \text{chunks}}\;\Big|\; \underbrace{T_{1} \dots T_{k}}_{\tgt,\ k\ \text{chunks}}. \] A span shorter than $k + r$ chunks has fewer recent chunks and no history. Only the history grows with $\spanm$, and it is never tokenized. It is a pool from which the scored chunks retrieve, so the transformer reads
\begin{equation}
[\;\underbrace{\anchor}_{1}\;\mid\;\underbrace{\band}_{|\band|}\;\mid\;
   \underbrace{\rec}_{r\,c}\;\mid\;\underbrace{\tgt}_{k\,c}\;],
\qquad
L \;=\; 1 + |\band| + rc + kc \ \text{slots},
\label{eq:layout}
\end{equation}
where the bank $\band$ holds the retrieved history latents (\hyperref[sec:bank]{Sec.~\ref*{sec:bank}}) and never exceeds $k \cdot K$ entries. \textbf{$L$ is bounded independently of $\spanm$}: the only term that answers to the span at all is $|\band|$, which is bounded by how many chunks ask rather than by how much history exists. The recent block is sized to the attention window of \hyperref[sec:masks]{Sec.~\ref*{sec:masks}}: a window of $r{+}1$ chunks reaches $r$ chunks back from $T_1$, which is exactly $\rec$. We use $r = 1$ throughout.

\paragraph{Ordinary Training as a Special Case.} At $\spanm = k$ the recent block and the history are both empty, the span is $\tgt$ itself, and $L = 1 + kc$ is the standard training sample. With $k = 10$ and $r = 1$, a draw of $\spanm = 10$ therefore carries no recent chunk and $T_1$ opens the window, while $\spanm = 11$ is the shortest draw at which $\rec$ is full and $\spanm = 12$ the shortest with any history at all. Setting $\spanm_{\min} = k$ makes ordinary training the shortest draw of our sampler, so every comparison against standard practice changes a single integer.

\subsection{Attention and Memory Bank}
\label{sec:masks}

\input{figures/fig_overview}

\paragraph{Attention Sink.} Every chunk and latent attend to the first frame $\anchor$, which serves as an attention sink \citep{streamingllm}. The first frame only attends to itself.

\paragraph{Causal Sliding Window.} Attention is bidirectional within a chunk and causal across chunks. Each scored chunk $T_j$ sees itself and the $r$ chunks before it, a window of $r{+}1$ chunks. The window of $T_1$ covers $\rec$, and it then slides right by one chunk for each subsequent $T_j$. The first frame, bank, and recent chunks are pure context and never attend to $\tgt$.

\paragraph{Per-Chunk Retrieval.} Beyond its window, $T_j$ reads its own top-$K$ latents, ranked by \hyperref[eq:score]{Eq.~\ref*{eq:score}} over everything completed before its window: the bank, which carries the history; the recent chunks outside the window; and the scored chunks $T_1 \dots T_{j-r-1}$. Candidates already in the sequence are read in place through the mask, so retrieving them costs no slots: those in $\rec$ are read as the clean context they are, those in $T_1 \dots T_{j-r-1}$ as the noisy latents diffusion forcing has made them. No chunk ever sees a \emph{scored} latent of its own window in clean form --- the clean latents $T_j$ can reach are all pure context, which is never a target of the loss. $T_j$ therefore attends to at most $1 + K + (r{+}1)c$ positions, and to exactly that many once $K$ candidates have completed before its window, independent of the span and of $|\band|$ (\hyperref[fig:overview]{Figure~\ref*{fig:overview}}; \hyperref[app:bounds]{App.~\ref*{app:bounds}}).

\paragraph{Memory Bank.}
\label{sec:bank}
The bank collects what the scored chunks retrieve from history: each $T_j$ contributes its own top-$K$ over the history, and the bank is their union. All $k$ cameras are known when a sample is drawn, so the union is formed before the forward pass and shared by every scored chunk, while the per-chunk mask leaves each chunk with only its own entries. Sharing costs $1.82$ slots per scored latent against $3.75$ for a private copy per chunk (\hyperref[app:layout]{App.~\ref*{app:layout}}), and because the bank draws only from history, no latent the model is scored on ever enters the sequence in clean form. The union also beats curating a memory in advance: it contains each scored chunk's own top-$K$, so a chunk re-selecting inside it recovers exactly the set it would have chosen from the whole history (\hyperref[prop:lossless]{Proposition~\ref*{prop:lossless}}), which no rule that admits entries before knowing who will ask can guarantee.

\paragraph{RoPE Index.} The first frame takes temporal index $0$, every bank entry shares index $1$, and the recent and scored chunks follow consecutively. A shared index reflects that a retrieved set has no order, and it keeps every position in the scored block fixed as $|\band|$ varies across samples and again at inference; methods that retrieve a fixed number of frames can enumerate them instead \citep{contextasmemory}. It also withholds a frame's age: a bank entry carries its camera but not how long ago it was drawn. Positions therefore never depend on the length of the history, so they neither grow without bound nor leave the range seen in training, however long the rollout. What the model learns on entries of one age carries over to entries of any other: a model trained only on $10$\,s clips reads, at inference, bank entries up to $59$\,s old and gains on returns $20$ to $60$\,s apart (App.~\ref{app:tenswhy}). Numbering the bank in temporal order instead is not a clear win (Sec.~\ref{sec:span}).

\subsection{Retrieval Criterion}
\label{sec:criterion}

\input{figures/fig_selector}

A scored chunk should retrieve the frames that saw the place it is about to draw. We estimate this co-visibility from camera poses alone, with two cues. \emph{Frustum overlap} $\Ovl(q,c) \in [0,1]$ is the fraction of points sampled in the query view $q$ that fall inside the frustum of candidate $c$ (\hyperref[eq:overlap]{Eq.~\ref*{eq:overlap}}); it measures how much of what is about to be rendered the candidate already holds, and requires intrinsics and a depth range (\hyperref[app:criterion]{App.~\ref*{app:criterion}} gives the grid and the range we use). \emph{Pose distance} $\Pdist(q,c)$ needs neither, though its translation term, like the depth range, is in the units of the corpus and so depends on the scene scale (\hyperref[app:scale]{App.~\ref*{app:scale}}). Candidates are ranked by
\begin{align}
\Sco(q,c) &\;=\; \Ovl(q,c) \;-\; \lambda\, \Pdist(q,c), \label{eq:score}\\
\Pdist(q,c) &\;=\; \lVert t_q - t_c \rVert \;+\; w\, \angle\!\left(f_q, f_c\right), \label{eq:posedist}
\end{align}
with camera centres $t$ in the units of the corpus, optical axes $f$, the angle $\angle(\cdot,\cdot)$ between them in radians, $\lambda = 0.2$ and $w = 4$, so that turning by $15^\circ$ weighs about as much as moving one unit ($4$\,m). Each chunk keeps its top $K$, ties in $\Sco$ broken in favour of the candidate nearer the query in time, so a candidate further back never displaces an equal-scoring incumbent. The form is not new: WorldMem \citep{worldmem} and WorldPack \citep{worldpack} subtract a time penalty from frustum overlap, and HY-WorldPlay~1.5 \citep{hyworld15} combines overlap with camera distance. We penalize relative pose rather than elapsed time, since a camera can return to a place long after it left.

\paragraph{Ground-Truth Evaluation.} The ground-truth latents of the view being drawn and of every candidate are available at training time, so a rule can be graded directly: by the cosine similarity between the frames it selects and that view, expressed as the fraction of achievable gain between random selection and the best candidates that exist (\hyperref[fig:selector]{Figure~\ref*{fig:selector}}). Three results follow. Overlap alone reaches $47.4\%$, because exact ties are common: any candidate that contains the whole query view scores $\Ovl = 1$. The mixed score reaches $57.2\%$ at $\lambda = 0.2$, level with the best value ($57.3\%$ at $\lambda = 0.1$). Pose distance reaches $55.9\%$ on its own, mostly through its orientation term: ranking by camera centres alone gives $19.8\%$, since these cameras turn far more than they move. Neither cue accounts for occlusion or for motion along the viewing axis (\hyperref[app:selector]{App.~\ref*{app:selector}}).

\subsection{Distillation}
\label{sec:distill}

Training conditions every scored chunk on clean context, while inference conditions it on the model's own output; this exposure bias is what costs image quality (\hyperref[app:selfcond]{App.~\ref*{app:selfcond}}). To reduce it, we distill the causal model into a four-step generator with Self Forcing \citep{selfforcing} and distribution matching distillation \citep{dmd2}, initializing the generator, the critic and the teacher all from the trained model. The generator rolls out the $k$ scored chunks of a sample one after another from its own outputs, as at inference, so the recent chunk and, further into the rollout, the retrieved entries hold generated rather than ground-truth latents. The generated chunks are then placed back into the packed layout of \hyperref[eq:layout]{Eq.~\ref*{eq:layout}}, each noised at its own timestep as in training, and denoised by the teacher, with guidance at scale $4$, and by the critic. The difference between the two estimates, normalized per chunk, is the gradient applied to the generator's output, and the critic is trained on the same rollouts with the flow-matching loss, four critic updates for every generator update. The final denoising step of all $k$ chunks is recomputed with gradient in a single packed forward under a block-diagonal mask. We train for $1{,}000$ steps, $200$ of them generator updates, with learning rates of $5 \times 10^{-7}$ for the generator and $4 \times 10^{-7}$ for the critic.

%% file: figures/fig_overview.tex
\begin{figure}[t]
\centering
\vspace{-20pt}
\colorlet{mskF}{black!5}
\newcommand{\legendrowa}{\tiny\color{black!75}%
  \tikz[baseline=-0.6ex]\node[fill=cAnD!80,draw=none,minimum width=2.3mm,minimum height=2.3mm,inner sep=0pt]{};\,first frame\hspace{1.4em}%
  \tikz[baseline=-0.6ex]\node[fill=cTgD!85,draw=none,minimum width=2.3mm,minimum height=2.3mm,inner sep=0pt]{};\,itself (scored)\hspace{1.4em}%
  \tikz[baseline=-0.6ex]\node[fill=cTgD!40,draw=none,minimum width=2.3mm,minimum height=2.3mm,inner sep=0pt]{};\,predecessor\hspace{1.4em}%
  \tikz[baseline=-0.6ex]\node[fill=cReD!70,draw=none,minimum width=2.3mm,minimum height=2.3mm,inner sep=0pt]{};\,$\rec$, predecessor of $T_1$\hspace{1.4em}%
}
\newcommand{\legendrowb}{\tiny\color{black!75}%
  \tikz[baseline=-0.6ex]\node[fill=cBaD!85,draw=none,minimum width=2.3mm,minimum height=2.3mm,inner sep=0pt]{};\,top-$K$ from the bank (clean)\hspace{1.4em}%
  \tikz[baseline=-0.6ex]\node[fill=cBaF,draw=cBaD!90,line width=0.7pt,minimum width=2.3mm,minimum height=2.3mm,inner sep=0pt]{};\,top-$K$ in the window (noisy)\hspace{1.4em}%
  \tikz[baseline=-0.6ex]\node[fill=cBaF!42,draw=none,minimum width=2.3mm,minimum height=2.3mm,inner sep=0pt]{};\,candidate\hspace{1.4em}%
  \tikz[baseline=-0.6ex]\node[fill=mskF,draw=none,minimum width=2.3mm,minimum height=2.3mm,inner sep=0pt]{};\,masked\hspace{1.4em}%
}
\begin{tikzpicture}[x=1cm,y=1cm,
  rl/.style={font=\tiny,anchor=east,text=cTgD,inner sep=1pt},
  cnt/.style={font=\tiny,text=black!60,inner sep=1pt},
  bh/.style={font=\scriptsize,anchor=south,inner sep=1.5pt,text height=1.6ex,text depth=0.4ex},
  ch/.style={font=\tiny,inner sep=1pt,text=cTgD!90},
  dt/.style={font=\tiny,inner sep=0pt,text=black!45},
]
\path (0,0) (13.700,0);
\draw[cAnD,line width=0.6pt] (0.630,0.505) -- (0.630,0.575) -- (0.882,0.575) -- (0.882,0.505);
\node[bh,text=cAnD] at (0.756,0.595) {$\anchor$};
\draw[cBaD,line width=0.6pt] (0.952,0.505) -- (0.952,0.575) -- (4.745,0.575) -- (4.745,0.505);
\node[bh,text=cBaD] at (2.849,0.595) {bank $\band$};
\draw[cReD,line width=0.6pt] (4.815,0.505) -- (4.815,0.575) -- (5.884,0.575) -- (5.884,0.505);
\node[bh,text=cReD] at (5.350,0.595) {$\rec$};
\draw[cTgD,line width=0.6pt] (5.954,0.505) -- (5.954,0.575) -- (13.070,0.575) -- (13.070,0.505);
\node[bh,text=cTgD] at (9.512,0.595) {scored block $\tgt$};
\node[ch] at (6.489,0.295) {$T_{1}$};
\node[ch] at (7.608,0.295) {$T_{2}$};
\node[ch] at (8.727,0.295) {$T_{3}$};
\node[ch] at (9.847,0.295) {$T_{4}$};
\node[ch] at (10.631,0.295) {$\cdots$};
\node[ch] at (11.416,0.295) {$T_{k-1}$};
\node[ch] at (12.535,0.295) {$T_{k}$};
\node[ch,text=black!55] at (13.390,0.295) {slots};
\node[ch,text=black!55,anchor=east] at (0.600,0.295) {query};
\draw[black!15,line width=0.4pt] (0.620,-2.215) -- (13.080,-2.215);
\node[ch,text=black!60] at (0.756,-2.385) {$0$};
\node[ch,text=black!60] at (2.849,-2.385) {$1$, shared};
\node[ch,text=black!60] at (5.350,-2.385) {$2$--$5$};
\node[ch,text=black!60] at (9.512,-2.385) {$6, 7, \dots$};
\node[ch,text=black!60,anchor=east] at (0.600,-2.385) {RoPE};
\fill[cAnD!80] (0.633,-0.142) rectangle (0.879,0.142);
\fill[cBaF!42] (0.955,-0.142) rectangle (1.202,0.142);
\fill[cBaD!85] (1.228,-0.142) rectangle (1.474,0.142);
\fill[cBaD!85] (1.500,-0.142) rectangle (1.746,0.142);
\fill[cBaD!85] (1.772,-0.142) rectangle (2.019,0.142);
\fill[cBaF!42] (2.045,-0.142) rectangle (2.291,0.142);
\fill[cBaF!42] (2.317,-0.142) rectangle (2.563,0.142);
\fill[cBaF!42] (2.589,-0.142) rectangle (2.836,0.142);
\fill[cBaD!85] (2.862,-0.142) rectangle (3.108,0.142);
\fill[cBaF!42] (3.134,-0.142) rectangle (3.380,0.142);
\fill[cBaF!42] (3.406,-0.142) rectangle (3.653,0.142);
\fill[cBaF!42] (3.679,-0.142) rectangle (3.925,0.142);
\fill[cBaD!85] (3.951,-0.142) rectangle (4.197,0.142);
\fill[cBaF!42] (4.223,-0.142) rectangle (4.470,0.142);
\fill[cBaD!85] (4.496,-0.142) rectangle (4.742,0.142);
\fill[cReD!70] (4.818,-0.142) rectangle (5.064,0.142);
\fill[cReD!70] (5.090,-0.142) rectangle (5.337,0.142);
\fill[cReD!70] (5.363,-0.142) rectangle (5.609,0.142);
\fill[cReD!70] (5.635,-0.142) rectangle (5.881,0.142);
\fill[cTgD!85] (5.957,-0.142) rectangle (6.204,0.142);
\fill[cTgD!85] (6.230,-0.142) rectangle (6.476,0.142);
\fill[cTgD!85] (6.502,-0.142) rectangle (6.748,0.142);
\fill[cTgD!85] (6.774,-0.142) rectangle (7.020,0.142);
\fill[mskF] (7.076,-0.142) rectangle (7.323,0.142);
\fill[mskF] (7.349,-0.142) rectangle (7.595,0.142);
\fill[mskF] (7.621,-0.142) rectangle (7.867,0.142);
\fill[mskF] (7.893,-0.142) rectangle (8.140,0.142);
\fill[mskF] (8.196,-0.142) rectangle (8.442,0.142);
\fill[mskF] (8.468,-0.142) rectangle (8.714,0.142);
\fill[mskF] (8.740,-0.142) rectangle (8.987,0.142);
\fill[mskF] (9.013,-0.142) rectangle (9.259,0.142);
\fill[mskF] (9.315,-0.142) rectangle (9.561,0.142);
\fill[mskF] (9.587,-0.142) rectangle (9.834,0.142);
\fill[mskF] (9.860,-0.142) rectangle (10.106,0.142);
\fill[mskF] (10.132,-0.142) rectangle (10.378,0.142);
\fill[mskF] (10.434,-0.142) rectangle (10.828,0.142);
\node[dt,text=black!45] at (10.631,0.000) {$\cdots$};
\fill[mskF] (10.884,-0.142) rectangle (11.131,0.142);
\fill[mskF] (11.157,-0.142) rectangle (11.403,0.142);
\fill[mskF] (11.429,-0.142) rectangle (11.675,0.142);
\fill[mskF] (11.701,-0.142) rectangle (11.948,0.142);
\fill[mskF] (12.004,-0.142) rectangle (12.250,0.142);
\fill[mskF] (12.276,-0.142) rectangle (12.522,0.142);
\fill[mskF] (12.548,-0.142) rectangle (12.795,0.142);
\fill[mskF] (12.821,-0.142) rectangle (13.067,0.142);
\node[rl] at (0.540,0.000) {$T_{1}$};
\node[cnt] at (13.390,0.000) {$15$};
\fill[cAnD!80] (0.633,-0.452) rectangle (0.879,-0.168);
\fill[cBaF!42] (0.955,-0.452) rectangle (1.202,-0.168);
\fill[cBaD!85] (1.228,-0.452) rectangle (1.474,-0.168);
\fill[cBaF!42] (1.500,-0.452) rectangle (1.746,-0.168);
\fill[cBaF!42] (1.772,-0.452) rectangle (2.019,-0.168);
\fill[cBaF!42] (2.045,-0.452) rectangle (2.291,-0.168);
\fill[cBaF!42] (2.317,-0.452) rectangle (2.563,-0.168);
\fill[cBaF!42] (2.589,-0.452) rectangle (2.836,-0.168);
\fill[cBaF!42] (2.862,-0.452) rectangle (3.108,-0.168);
\fill[cBaD!85] (3.134,-0.452) rectangle (3.380,-0.168);
\fill[cBaD!85] (3.406,-0.452) rectangle (3.653,-0.168);
\fill[cBaD!85] (3.679,-0.452) rectangle (3.925,-0.168);
\fill[cBaD!85] (3.951,-0.452) rectangle (4.197,-0.168);
\fill[cBaF!42] (4.223,-0.452) rectangle (4.470,-0.168);
\fill[cBaF!42] (4.496,-0.452) rectangle (4.742,-0.168);
\fill[cBaF!42] (4.818,-0.452) rectangle (5.064,-0.168);
\fill[cBaF!42] (5.090,-0.452) rectangle (5.337,-0.168);
\draw[fill=cBaF,draw=cBaD!90,line width=0.75pt] (5.102,-0.440) rectangle (5.325,-0.180);
\fill[cBaF!42] (5.363,-0.452) rectangle (5.609,-0.168);
\fill[cBaF!42] (5.635,-0.452) rectangle (5.881,-0.168);
\fill[cTgD!40] (5.957,-0.452) rectangle (6.204,-0.168);
\fill[cTgD!40] (6.230,-0.452) rectangle (6.476,-0.168);
\fill[cTgD!40] (6.502,-0.452) rectangle (6.748,-0.168);
\fill[cTgD!40] (6.774,-0.452) rectangle (7.020,-0.168);
\fill[cTgD!85] (7.076,-0.452) rectangle (7.323,-0.168);
\fill[cTgD!85] (7.349,-0.452) rectangle (7.595,-0.168);
\fill[cTgD!85] (7.621,-0.452) rectangle (7.867,-0.168);
\fill[cTgD!85] (7.893,-0.452) rectangle (8.140,-0.168);
\fill[mskF] (8.196,-0.452) rectangle (8.442,-0.168);
\fill[mskF] (8.468,-0.452) rectangle (8.714,-0.168);
\fill[mskF] (8.740,-0.452) rectangle (8.987,-0.168);
\fill[mskF] (9.013,-0.452) rectangle (9.259,-0.168);
\fill[mskF] (9.315,-0.452) rectangle (9.561,-0.168);
\fill[mskF] (9.587,-0.452) rectangle (9.834,-0.168);
\fill[mskF] (9.860,-0.452) rectangle (10.106,-0.168);
\fill[mskF] (10.132,-0.452) rectangle (10.378,-0.168);
\fill[mskF] (10.434,-0.452) rectangle (10.828,-0.168);
\node[dt,text=black!45] at (10.631,-0.310) {$\cdots$};
\fill[mskF] (10.884,-0.452) rectangle (11.131,-0.168);
\fill[mskF] (11.157,-0.452) rectangle (11.403,-0.168);
\fill[mskF] (11.429,-0.452) rectangle (11.675,-0.168);
\fill[mskF] (11.701,-0.452) rectangle (11.948,-0.168);
\fill[mskF] (12.004,-0.452) rectangle (12.250,-0.168);
\fill[mskF] (12.276,-0.452) rectangle (12.522,-0.168);
\fill[mskF] (12.548,-0.452) rectangle (12.795,-0.168);
\fill[mskF] (12.821,-0.452) rectangle (13.067,-0.168);
\node[rl] at (0.540,-0.310) {$T_{2}$};
\node[cnt] at (13.390,-0.310) {$15$};
\fill[cAnD!80] (0.633,-0.762) rectangle (0.879,-0.478);
\fill[cBaD!85] (0.955,-0.762) rectangle (1.202,-0.478);
\fill[cBaF!42] (1.228,-0.762) rectangle (1.474,-0.478);
\fill[cBaF!42] (1.500,-0.762) rectangle (1.746,-0.478);
\fill[cBaD!85] (1.772,-0.762) rectangle (2.019,-0.478);
\fill[cBaD!85] (2.045,-0.762) rectangle (2.291,-0.478);
\fill[cBaF!42] (2.317,-0.762) rectangle (2.563,-0.478);
\fill[cBaF!42] (2.589,-0.762) rectangle (2.836,-0.478);
\fill[cBaF!42] (2.862,-0.762) rectangle (3.108,-0.478);
\fill[cBaF!42] (3.134,-0.762) rectangle (3.380,-0.478);
\fill[cBaF!42] (3.406,-0.762) rectangle (3.653,-0.478);
\fill[cBaF!42] (3.679,-0.762) rectangle (3.925,-0.478);
\fill[cBaF!42] (3.951,-0.762) rectangle (4.197,-0.478);
\fill[cBaD!85] (4.223,-0.762) rectangle (4.470,-0.478);
\fill[cBaF!42] (4.496,-0.762) rectangle (4.742,-0.478);
\fill[cBaF!42] (4.818,-0.762) rectangle (5.064,-0.478);
\draw[fill=cBaF,draw=cBaD!90,line width=0.75pt] (4.830,-0.750) rectangle (5.052,-0.490);
\fill[cBaF!42] (5.090,-0.762) rectangle (5.337,-0.478);
\fill[cBaF!42] (5.363,-0.762) rectangle (5.609,-0.478);
\fill[cBaF!42] (5.635,-0.762) rectangle (5.881,-0.478);
\fill[cBaF!42] (5.957,-0.762) rectangle (6.204,-0.478);
\fill[cBaF!42] (6.230,-0.762) rectangle (6.476,-0.478);
\draw[fill=cBaF,draw=cBaD!90,line width=0.75pt] (6.242,-0.750) rectangle (6.464,-0.490);
\fill[cBaF!42] (6.502,-0.762) rectangle (6.748,-0.478);
\fill[cBaF!42] (6.774,-0.762) rectangle (7.020,-0.478);
\fill[cTgD!40] (7.076,-0.762) rectangle (7.323,-0.478);
\fill[cTgD!40] (7.349,-0.762) rectangle (7.595,-0.478);
\fill[cTgD!40] (7.621,-0.762) rectangle (7.867,-0.478);
\fill[cTgD!40] (7.893,-0.762) rectangle (8.140,-0.478);
\fill[cTgD!85] (8.196,-0.762) rectangle (8.442,-0.478);
\fill[cTgD!85] (8.468,-0.762) rectangle (8.714,-0.478);
\fill[cTgD!85] (8.740,-0.762) rectangle (8.987,-0.478);
\fill[cTgD!85] (9.013,-0.762) rectangle (9.259,-0.478);
\fill[mskF] (9.315,-0.762) rectangle (9.561,-0.478);
\fill[mskF] (9.587,-0.762) rectangle (9.834,-0.478);
\fill[mskF] (9.860,-0.762) rectangle (10.106,-0.478);
\fill[mskF] (10.132,-0.762) rectangle (10.378,-0.478);
\fill[mskF] (10.434,-0.762) rectangle (10.828,-0.478);
\node[dt,text=black!45] at (10.631,-0.620) {$\cdots$};
\fill[mskF] (10.884,-0.762) rectangle (11.131,-0.478);
\fill[mskF] (11.157,-0.762) rectangle (11.403,-0.478);
\fill[mskF] (11.429,-0.762) rectangle (11.675,-0.478);
\fill[mskF] (11.701,-0.762) rectangle (11.948,-0.478);
\fill[mskF] (12.004,-0.762) rectangle (12.250,-0.478);
\fill[mskF] (12.276,-0.762) rectangle (12.522,-0.478);
\fill[mskF] (12.548,-0.762) rectangle (12.795,-0.478);
\fill[mskF] (12.821,-0.762) rectangle (13.067,-0.478);
\node[rl] at (0.540,-0.620) {$T_{3}$};
\node[cnt] at (13.390,-0.620) {$15$};
\fill[cAnD!80] (0.633,-1.072) rectangle (0.879,-0.788);
\fill[cBaF!42] (0.955,-1.072) rectangle (1.202,-0.788);
\fill[cBaD!85] (1.228,-1.072) rectangle (1.474,-0.788);
\fill[cBaF!42] (1.500,-1.072) rectangle (1.746,-0.788);
\fill[cBaF!42] (1.772,-1.072) rectangle (2.019,-0.788);
\fill[cBaD!85] (2.045,-1.072) rectangle (2.291,-0.788);
\fill[cBaF!42] (2.317,-1.072) rectangle (2.563,-0.788);
\fill[cBaD!85] (2.589,-1.072) rectangle (2.836,-0.788);
\fill[cBaD!85] (2.862,-1.072) rectangle (3.108,-0.788);
\fill[cBaF!42] (3.134,-1.072) rectangle (3.380,-0.788);
\fill[cBaF!42] (3.406,-1.072) rectangle (3.653,-0.788);
\fill[cBaF!42] (3.679,-1.072) rectangle (3.925,-0.788);
\fill[cBaF!42] (3.951,-1.072) rectangle (4.197,-0.788);
\fill[cBaF!42] (4.223,-1.072) rectangle (4.470,-0.788);
\fill[cBaF!42] (4.496,-1.072) rectangle (4.742,-0.788);
\fill[cBaF!42] (4.818,-1.072) rectangle (5.064,-0.788);
\fill[cBaF!42] (5.090,-1.072) rectangle (5.337,-0.788);
\fill[cBaF!42] (5.363,-1.072) rectangle (5.609,-0.788);
\fill[cBaF!42] (5.635,-1.072) rectangle (5.881,-0.788);
\fill[cBaF!42] (5.957,-1.072) rectangle (6.204,-0.788);
\fill[cBaF!42] (6.230,-1.072) rectangle (6.476,-0.788);
\fill[cBaF!42] (6.502,-1.072) rectangle (6.748,-0.788);
\draw[fill=cBaF,draw=cBaD!90,line width=0.75pt] (6.514,-1.060) rectangle (6.736,-0.800);
\fill[cBaF!42] (6.774,-1.072) rectangle (7.020,-0.788);
\fill[cBaF!42] (7.076,-1.072) rectangle (7.323,-0.788);
\fill[cBaF!42] (7.349,-1.072) rectangle (7.595,-0.788);
\fill[cBaF!42] (7.621,-1.072) rectangle (7.867,-0.788);
\fill[cBaF!42] (7.893,-1.072) rectangle (8.140,-0.788);
\draw[fill=cBaF,draw=cBaD!90,line width=0.75pt] (7.905,-1.060) rectangle (8.128,-0.800);
\fill[cTgD!40] (8.196,-1.072) rectangle (8.442,-0.788);
\fill[cTgD!40] (8.468,-1.072) rectangle (8.714,-0.788);
\fill[cTgD!40] (8.740,-1.072) rectangle (8.987,-0.788);
\fill[cTgD!40] (9.013,-1.072) rectangle (9.259,-0.788);
\fill[cTgD!85] (9.315,-1.072) rectangle (9.561,-0.788);
\fill[cTgD!85] (9.587,-1.072) rectangle (9.834,-0.788);
\fill[cTgD!85] (9.860,-1.072) rectangle (10.106,-0.788);
\fill[cTgD!85] (10.132,-1.072) rectangle (10.378,-0.788);
\fill[mskF] (10.434,-1.072) rectangle (10.828,-0.788);
\node[dt,text=black!45] at (10.631,-0.930) {$\cdots$};
\fill[mskF] (10.884,-1.072) rectangle (11.131,-0.788);
\fill[mskF] (11.157,-1.072) rectangle (11.403,-0.788);
\fill[mskF] (11.429,-1.072) rectangle (11.675,-0.788);
\fill[mskF] (11.701,-1.072) rectangle (11.948,-0.788);
\fill[mskF] (12.004,-1.072) rectangle (12.250,-0.788);
\fill[mskF] (12.276,-1.072) rectangle (12.522,-0.788);
\fill[mskF] (12.548,-1.072) rectangle (12.795,-0.788);
\fill[mskF] (12.821,-1.072) rectangle (13.067,-0.788);
\node[rl] at (0.540,-0.930) {$T_{4}$};
\node[cnt] at (13.390,-0.930) {$15$};
\node[dt] at (0.756,-1.270) {$\vdots$};
\node[dt] at (2.849,-1.270) {$\vdots$};
\node[dt] at (5.350,-1.270) {$\vdots$};
\node[dt] at (6.489,-1.270) {$\vdots$};
\node[dt] at (7.608,-1.270) {$\vdots$};
\node[dt] at (8.727,-1.270) {$\vdots$};
\node[dt] at (9.847,-1.270) {$\vdots$};
\node[dt] at (10.631,-1.270) {$\vdots$};
\node[dt] at (11.416,-1.270) {$\vdots$};
\node[dt] at (12.535,-1.270) {$\vdots$};
\node[rl] at (0.540,-1.270) {$\vdots$};
\fill[cAnD!80] (0.633,-1.822) rectangle (0.879,-1.538);
\fill[cBaD!85] (0.955,-1.822) rectangle (1.202,-1.538);
\fill[cBaF!42] (1.228,-1.822) rectangle (1.474,-1.538);
\fill[cBaF!42] (1.500,-1.822) rectangle (1.746,-1.538);
\fill[cBaD!85] (1.772,-1.822) rectangle (2.019,-1.538);
\fill[cBaF!42] (2.045,-1.822) rectangle (2.291,-1.538);
\fill[cBaF!42] (2.317,-1.822) rectangle (2.563,-1.538);
\fill[cBaF!42] (2.589,-1.822) rectangle (2.836,-1.538);
\fill[cBaF!42] (2.862,-1.822) rectangle (3.108,-1.538);
\fill[cBaF!42] (3.134,-1.822) rectangle (3.380,-1.538);
\fill[cBaF!42] (3.406,-1.822) rectangle (3.653,-1.538);
\fill[cBaF!42] (3.679,-1.822) rectangle (3.925,-1.538);
\fill[cBaF!42] (3.951,-1.822) rectangle (4.197,-1.538);
\fill[cBaD!85] (4.223,-1.822) rectangle (4.470,-1.538);
\fill[cBaF!42] (4.496,-1.822) rectangle (4.742,-1.538);
\fill[cBaF!42] (4.818,-1.822) rectangle (5.064,-1.538);
\fill[cBaF!42] (5.090,-1.822) rectangle (5.337,-1.538);
\fill[cBaF!42] (5.363,-1.822) rectangle (5.609,-1.538);
\draw[fill=cBaF,draw=cBaD!90,line width=0.75pt] (5.375,-1.810) rectangle (5.597,-1.550);
\fill[cBaF!42] (5.635,-1.822) rectangle (5.881,-1.538);
\fill[cBaF!42] (5.957,-1.822) rectangle (6.204,-1.538);
\fill[cBaF!42] (6.230,-1.822) rectangle (6.476,-1.538);
\fill[cBaF!42] (6.502,-1.822) rectangle (6.748,-1.538);
\fill[cBaF!42] (6.774,-1.822) rectangle (7.020,-1.538);
\fill[cBaF!42] (7.076,-1.822) rectangle (7.323,-1.538);
\draw[fill=cBaF,draw=cBaD!90,line width=0.75pt] (7.088,-1.810) rectangle (7.311,-1.550);
\fill[cBaF!42] (7.349,-1.822) rectangle (7.595,-1.538);
\fill[cBaF!42] (7.621,-1.822) rectangle (7.867,-1.538);
\fill[cBaF!42] (7.893,-1.822) rectangle (8.140,-1.538);
\fill[cBaF!42] (8.196,-1.822) rectangle (8.442,-1.538);
\fill[cBaF!42] (8.468,-1.822) rectangle (8.714,-1.538);
\draw[fill=cBaF,draw=cBaD!90,line width=0.75pt] (8.480,-1.810) rectangle (8.702,-1.550);
\fill[cBaF!42] (8.740,-1.822) rectangle (8.987,-1.538);
\fill[cBaF!42] (9.013,-1.822) rectangle (9.259,-1.538);
\fill[cBaF!42] (9.315,-1.822) rectangle (9.561,-1.538);
\fill[cBaF!42] (9.587,-1.822) rectangle (9.834,-1.538);
\fill[cBaF!42] (9.860,-1.822) rectangle (10.106,-1.538);
\fill[cBaF!42] (10.132,-1.822) rectangle (10.378,-1.538);
\fill[cTgD!40] (10.434,-1.822) rectangle (10.828,-1.538);
\node[dt,text=black!45] at (10.631,-1.680) {$\cdots$};
\fill[cTgD!85] (10.884,-1.822) rectangle (11.131,-1.538);
\fill[cTgD!85] (11.157,-1.822) rectangle (11.403,-1.538);
\fill[cTgD!85] (11.429,-1.822) rectangle (11.675,-1.538);
\fill[cTgD!85] (11.701,-1.822) rectangle (11.948,-1.538);
\fill[mskF] (12.004,-1.822) rectangle (12.250,-1.538);
\fill[mskF] (12.276,-1.822) rectangle (12.522,-1.538);
\fill[mskF] (12.548,-1.822) rectangle (12.795,-1.538);
\fill[mskF] (12.821,-1.822) rectangle (13.067,-1.538);
\node[rl] at (0.540,-1.680) {$T_{k-1}$};
\node[cnt] at (13.390,-1.680) {$15$};
\fill[cAnD!80] (0.633,-2.132) rectangle (0.879,-1.848);
\fill[cBaF!42] (0.955,-2.132) rectangle (1.202,-1.848);
\fill[cBaF!42] (1.228,-2.132) rectangle (1.474,-1.848);
\fill[cBaD!85] (1.500,-2.132) rectangle (1.746,-1.848);
\fill[cBaF!42] (1.772,-2.132) rectangle (2.019,-1.848);
\fill[cBaD!85] (2.045,-2.132) rectangle (2.291,-1.848);
\fill[cBaD!85] (2.317,-2.132) rectangle (2.563,-1.848);
\fill[cBaF!42] (2.589,-2.132) rectangle (2.836,-1.848);
\fill[cBaF!42] (2.862,-2.132) rectangle (3.108,-1.848);
\fill[cBaF!42] (3.134,-2.132) rectangle (3.380,-1.848);
\fill[cBaF!42] (3.406,-2.132) rectangle (3.653,-1.848);
\fill[cBaF!42] (3.679,-2.132) rectangle (3.925,-1.848);
\fill[cBaF!42] (3.951,-2.132) rectangle (4.197,-1.848);
\fill[cBaF!42] (4.223,-2.132) rectangle (4.470,-1.848);
\fill[cBaF!42] (4.496,-2.132) rectangle (4.742,-1.848);
\fill[cBaF!42] (4.818,-2.132) rectangle (5.064,-1.848);
\fill[cBaF!42] (5.090,-2.132) rectangle (5.337,-1.848);
\fill[cBaF!42] (5.363,-2.132) rectangle (5.609,-1.848);
\fill[cBaF!42] (5.635,-2.132) rectangle (5.881,-1.848);
\fill[cBaF!42] (5.957,-2.132) rectangle (6.204,-1.848);
\draw[fill=cBaF,draw=cBaD!90,line width=0.75pt] (5.969,-2.120) rectangle (6.192,-1.860);
\fill[cBaF!42] (6.230,-2.132) rectangle (6.476,-1.848);
\draw[fill=cBaF,draw=cBaD!90,line width=0.75pt] (6.242,-2.120) rectangle (6.464,-1.860);
\fill[cBaF!42] (6.502,-2.132) rectangle (6.748,-1.848);
\fill[cBaF!42] (6.774,-2.132) rectangle (7.020,-1.848);
\fill[cBaF!42] (7.076,-2.132) rectangle (7.323,-1.848);
\fill[cBaF!42] (7.349,-2.132) rectangle (7.595,-1.848);
\fill[cBaF!42] (7.621,-2.132) rectangle (7.867,-1.848);
\fill[cBaF!42] (7.893,-2.132) rectangle (8.140,-1.848);
\fill[cBaF!42] (8.196,-2.132) rectangle (8.442,-1.848);
\fill[cBaF!42] (8.468,-2.132) rectangle (8.714,-1.848);
\fill[cBaF!42] (8.740,-2.132) rectangle (8.987,-1.848);
\fill[cBaF!42] (9.013,-2.132) rectangle (9.259,-1.848);
\fill[cBaF!42] (9.315,-2.132) rectangle (9.561,-1.848);
\fill[cBaF!42] (9.587,-2.132) rectangle (9.834,-1.848);
\fill[cBaF!42] (9.860,-2.132) rectangle (10.106,-1.848);
\draw[fill=cBaF,draw=cBaD!90,line width=0.75pt] (9.872,-2.120) rectangle (10.094,-1.860);
\fill[cBaF!42] (10.132,-2.132) rectangle (10.378,-1.848);
\fill[cBaF!42] (10.434,-2.132) rectangle (10.828,-1.848);
\node[dt,text=black!45] at (10.631,-1.990) {$\cdots$};
\fill[cTgD!40] (10.884,-2.132) rectangle (11.131,-1.848);
\fill[cTgD!40] (11.157,-2.132) rectangle (11.403,-1.848);
\fill[cTgD!40] (11.429,-2.132) rectangle (11.675,-1.848);
\fill[cTgD!40] (11.701,-2.132) rectangle (11.948,-1.848);
\fill[cTgD!85] (12.004,-2.132) rectangle (12.250,-1.848);
\fill[cTgD!85] (12.276,-2.132) rectangle (12.522,-1.848);
\fill[cTgD!85] (12.548,-2.132) rectangle (12.795,-1.848);
\fill[cTgD!85] (12.821,-2.132) rectangle (13.067,-1.848);
\node[rl] at (0.540,-1.990) {$T_{k}$};
\node[cnt] at (13.390,-1.990) {$15$};
\node[inner sep=0pt] at (6.850,-2.785) {\legendrowa};
\node[inner sep=0pt] at (6.850,-3.085) {\legendrowb};
\end{tikzpicture}
\caption{\textbf{The training sequence and what each scored chunk reads.} \emph{Top:} the
packed sequence and its RoPE index; bank entries share one index. \emph{Grid:} the
attention mask, one row per scored chunk. Each chunk attends to the first frame, its
window of $r{+}1$ chunks and its own top-$K$: $1{+}K{+}(r{+}1)c = 15$ latents, whatever
the span. Drawn for $K{=}6$, $r{=}1$, $c{=}4$.}
\label{fig:overview}
\end{figure}

%% file: figures/fig_selector.tex
\definecolor{selOv}{HTML}{EB6834}
\definecolor{selMix}{HTML}{2A78D6}
\definecolor{selPose}{HTML}{1BAF7A}
\begin{figure}[t]
\centering
\begin{minipage}[c]{0.43\textwidth}
\centering
\begin{tikzpicture}[font=\scriptsize]
\begin{axis}[
  xbar, bar width=3.1pt, bar shift=0pt,
  width=2.75cm, height=1.38cm, scale only axis=true,
  xmin=0, xmax=100, enlarge y limits=0.09,
  symbolic y coords={trans,rot,pose,ovl,l01,l02},
  ytick=data,
  yticklabels={},
  y dir=normal,
  xtick={0,25,50,75,100}, xticklabel={\pgfmathprintnumber{\tick}\%},
  tick label style={font=\tiny,text=black!60},
  xlabel={fraction of achievable gain}, xlabel style={font=\tiny,text=black!70,yshift=3pt},
  axis x line*=bottom, axis y line*=left,
  axis line style={black!35,line width=0.4pt},
  xmajorgrids, grid style={black!8,line width=0.3pt},
  major tick length=2pt, ytick style={draw=none},
  clip=false,
  nodes near coords, nodes near coords style={font=\tiny,text=black!75,anchor=west,xshift=5pt},
  point meta=explicit symbolic,
]
  \addplot[fill=selMix,draw=none,error bars/.cd,x dir=both,x explicit,error bar style={black!55,line width=0.4pt},error mark options={black!55,mark size=1pt,line width=0.4pt,rotate=90}]        coordinates {(57.2,l02) +- (2.07,0) [57.2]};
  \addplot[fill=selMix!55,draw=none,error bars/.cd,x dir=both,x explicit,error bar style={black!55,line width=0.4pt},error mark options={black!55,mark size=1pt,line width=0.4pt,rotate=90}]     coordinates {(57.3,l01) +- (2.07,0) [57.3]};
  \addplot[fill=selOv,draw=none,error bars/.cd,x dir=both,x explicit,error bar style={black!55,line width=0.4pt},error mark options={black!55,mark size=1pt,line width=0.4pt,rotate=90}]         coordinates {(47.4,ovl) +- (2.04,0) [47.4]};
  \addplot[fill=selPose,draw=none,error bars/.cd,x dir=both,x explicit,error bar style={black!55,line width=0.4pt},error mark options={black!55,mark size=1pt,line width=0.4pt,rotate=90}]       coordinates {(55.9,pose) +- (2.1,0) [55.9]};
  \addplot[fill=selPose!55,draw=none,error bars/.cd,x dir=both,x explicit,error bar style={black!55,line width=0.4pt},error mark options={black!55,mark size=1pt,line width=0.4pt,rotate=90}]    coordinates {(53.7,rot) +- (2.16,0) [53.7]};
  \addplot[fill=selPose!30,draw=none,error bars/.cd,x dir=both,x explicit,error bar style={black!55,line width=0.4pt},error mark options={black!55,mark size=1pt,line width=0.4pt,rotate=90}]    coordinates {(19.8,trans) +- (2.27,0) [19.8]};
  \draw[black!45,line width=0.5pt,densely dashed] ([yshift=-6pt]axis cs:100,trans) -- ([yshift=6pt]axis cs:100,l02);
  \node[font=\tiny,text=black!55,anchor=south] at ([yshift=6pt]axis cs:100,l02) {ideal};
  \node[font=\tiny,text=black!80,anchor=east] at ([xshift=-3pt]axis cs:0,l02) {$\Ovl-0.2\,\Pdist$ \textbf{(ours)}};
  \node[font=\tiny,text=black!80,anchor=east] at ([xshift=-3pt]axis cs:0,l01) {$\Ovl-0.1\,\Pdist$};
  \node[font=\tiny,text=black!80,anchor=east] at ([xshift=-3pt]axis cs:0,ovl) {$\Ovl$ alone};
  \node[font=\tiny,text=black!80,anchor=east] at ([xshift=-3pt]axis cs:0,pose) {$\Pdist$ alone};
  \node[font=\tiny,text=black!80,anchor=east] at ([xshift=-3pt]axis cs:0,rot) {$\Pdist$, rotation only};
  \node[font=\tiny,text=black!80,anchor=east] at ([xshift=-3pt]axis cs:0,trans) {$\Pdist$, translation only};
  \draw[black!18,line width=0.4pt] ([xshift=-2.75cm,yshift=-3.3pt]axis cs:0,ovl) -- ([yshift=-3.3pt]axis cs:100,ovl);
\end{axis}
\end{tikzpicture}
\end{minipage}\hfill
\begin{minipage}[c]{0.56\textwidth}
\centering
\begin{tikzpicture}[
  img/.style={inner sep=0pt,outer sep=0pt},
  ttl/.style={font=\tiny,inner sep=1pt,anchor=south,text=black!80},
  val/.style={font=\tiny,inner sep=1pt,anchor=north,text=black!55},
]
\def\W{1.79}\def\G{0.12}
\node[img,anchor=west] (q) at (0,0) {\includegraphics[width=\W cm]{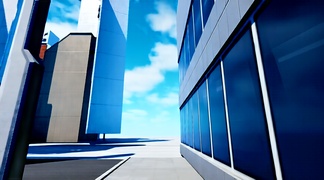}};
\node[ttl] at (q.north) {view being drawn};
\node[val] at (q.south) {$152$ candidates};
\foreach \n/\k/\col/\lab/\ov/\dd in {%
  lam0/1/selOv/{$\Ovl$ alone}/0.95/4.67,
  lam02/2/selMix/{$\Ovl-0.2\,\Pdist$ (ours)}/0.84/1.49,
  dist/3/selPose/{$\Pdist$ alone}/0.75/1.12}{
  \node[img,anchor=west] (f\n) at ({\k*(\W+\G)+0.06},0) {\includegraphics[width=\W cm]{figures/f3_\n.jpg}};
  \draw[\col,line width=1.1pt] (f\n.north west) rectangle (f\n.south east);
  \node[ttl] at (f\n.north) {\lab};
  \node[val] at (f\n.south) {$\Ovl{=}\ov$, $\Pdist{=}\dd$};}
\draw[black!20,line width=0.4pt] ({\W+\G/2+0.03},0.62) -- ({\W+\G/2+0.03},-0.62);
\end{tikzpicture}
\end{minipage}
\caption{\textbf{Retrieval criterion graded against ground truth.} \emph{Left:} fraction of
achievable gain over $960$ scored chunks, from random ($0\%$) to the best candidates
($100\%$); $\lambda = 0.1$ and $0.2$ are level within the standard error; bars show $\pm 1$ standard error over chunks. \emph{Right:} the
frame each rule ranks first for one chunk.}
\label{fig:selector}
\end{figure}

%% file: sections/04_experiments.tex
\section{Experiments}
\label{sec:exp}

We first describe training and evaluation (\hyperref[sec:setup]{Sec.~\ref*{sec:setup}}), then compare \ours with open world models (\hyperref[sec:sota]{Sec.~\ref*{sec:sota}}). An ablation adds retrieval, the bank and a longer span one at a time (\hyperref[sec:span]{Sec.~\ref*{sec:span}}).

\subsection{Experimental Setup}
\label{sec:setup}

\input{tables/sota}

\paragraph{Training Details.} We initialize from Wan2.2-TI2V-5B \citep{wan}, add a camera branch with PRoPE \citep{prope}, and train the model as a chunked causal diffusion transformer using diffusion forcing \citep{diffusionforcing}. Each chunk contains $c = 4$ latents. Each sample scores $k = 10$ chunks behind $r = 1$ recent chunk, with the span sampled as $\spanm \sim U[10, \spanm_{\max}]$. Each scored chunk independently selects its top-$K$ entries, with $K = 6$, using \hyperref[eq:score]{Eq.~\ref*{eq:score}}. Entries preceding the window are accessed through the union bank, while entries within it are accessed through the mask. All runs use the same initialization, data, and training schedule, and train for $6{,}000$ steps with a batch size of $32$, using one sample per GPU across $32$ H200 GPUs. \hyperref[app:trainingdetail]{App.~\ref*{app:trainingdetail}} details the optimizer, conditioning, and sampler; \hyperref[app:layout]{App.~\ref*{app:layout}} reports the resulting sequence sizes.

\paragraph{Evaluation.} For each model, we average five rollouts with different seeds, using $20$ denoising steps and classifier-free guidance at scale $4$. The evaluation set comprises $20$ $60$\,s clips from ten training scenes, $16$ clips from four held-out scenes, and $20$ web photographs. Rollouts on the latter two splits last $400$\,s; all tables report the first $60$\,s.

\paragraph{Metrics.} Two latents form a \emph{return} when their cameras lie within $0.5$ units and $15^\circ$ of each other, at least $8$\,s apart, with the camera away in between. \emph{Revisit} is the Pearson correlation between the frames a model generates at the two ends: it asks whether a model draws a place as it drew it before. Frames of one video correlate even without a return, so \emph{Gain} subtracts the correlation of control pairs at the same time gaps whose cameras differ. \emph{DINO} scores the same returns by the cosine similarity of DINO ViT-B/16 features \citep{dino} instead, which tolerates the small misalignments a pixel correlation penalizes. A return is \emph{to the starting pose} when either end meets the first frame's camera and \emph{mid-path} otherwise; the first frame is in every sequence, so only the second kind needs the bank. We also report PSNR and LPIPS \citep{lpips} against the rendered ground truth where it exists, and VBench \citep{vbench} for consistency and quality. \hyperref[app:eval]{App.~\ref*{app:eval}} gives the rest.

\subsection{Comparison with SOTA}
\label{sec:sota}

We compare against open world models including: LingBot-World~2.0 \citep{lingbotworldv2}, DreamX-World~1.0 \citep{dreamxworld}, HY-WorldPlay~1.5 \citep{hyworld15}, Matrix-Game~3.0 \citep{matrixgame3}, Infinite-World \citep{infiniteworld} and CaR \citep{car}.
In \hyperref[tab:sota]{Table~\ref*{tab:sota}}, every method rolls out $60$\,s from the same photographs along the same path, using its released weights, default sampler, native resolution and frame rate, and its own intrinsics; a method driven by discrete actions receives the path converted to its action vocabulary. For Revisit, frames are matched to the path by timestamp and resized to $864\times480$. The action vocabulary of Matrix-Game~3.0 turns more slowly than the fastest paths, which it therefore follows only approximately.

\ours and CaR are the only systems with a clear memory. \ours is highest in five of the eight
memory columns and CaR in the other three: $0.365$ Revisit at the starting pose and $0.447$
mid-path for \ours, $0.350$ and $0.450$ for CaR, against $0.254$ and $0.283$ for Matrix-Game~3.0,
the strongest of the other baselines. In DINO features the two split by kind of return: CaR is
ahead at the starting pose, which the first frame can serve, and \ours on mid-path returns,
which need what was generated along the way.
The Gain columns are the sharper statement, because Gain removes what any two frames of one
video share at the same time gap. For the other five baselines it is at most $0.089$, and for
LingBot-World~2.0 at the starting pose it is $-0.006$: what these models draw at a return is
about what an ordinary pair of frames already agrees on, so little of it can be credited to
having been at that place before. Ours is $0.230$ and $0.329$, and CaR's $0.220$ and $0.312$. \hyperref[fig:revisit]{Figure~\ref*{fig:revisit}} shows one return, and \hyperref[app:visual]{App.~\ref*{app:visual}} (\hyperref[fig:sotamore]{Figure~\ref*{fig:sotamore}}) three more for every system.

A natural objection is that this margin measures camera control rather than memory: the five other baselines follow the prescribed path with correlations of $0.70$ to $0.88$ against our
$0.95$ to $0.97$ (\hyperref[app:camfollow]{App.~\ref*{app:camfollow}}). It cannot explain
the gap. Of the $173$ returns in this window, $125$ ($72\%$) are fold-backs, where the path
out is retraced on the way back, and on a fold-back any consistent error in the scale of a
model's turns and steps largely cancels, so the camera returns close to the pose it held on the first
visit (\hyperref[app:eval]{App.~\ref*{app:eval}}). What
differs at that pose is what the model draws there.

We do not lead the VBench columns and state it plainly. HY-WorldPlay~1.5 scores higher on
subject and background consistency, $0.839$ and $0.908$ against our $0.778$ and $0.857$,
and four of the six baselines score higher on imaging quality, up to $0.761$ against our
$0.705$. Part of this is what the columns measure: both consistency
scores compare neighbouring frames, so a rollout that drifts smoothly away from what it
drew a minute earlier scores well on them and poorly on Revisit. The rest is a real cost of
training on a longer span, which \hyperref[sec:span]{Sec.~\ref*{sec:span}} reports and
\hyperref[app:selfcond]{App.~\ref*{app:selfcond}} traces to conditioning on generated
frames rather than to a weaker generator.

\input{figures/fig_revisit}

\subsection{Ablation Study}
\label{sec:span}

\hyperref[tab:headline]{Table~\ref*{tab:headline}} adds one ingredient per row (seen scenes in \hyperref[tab:headline_seen]{Table~\ref*{tab:headline_seen}}). The first three rows train on $10$\,s clips: a sliding window of the first frame, the predecessor and the chunk itself; a window widened to the whole clip, which measures what more attention buys without retrieval; and the sliding window plus per-chunk top-$K$, whose candidates lie inside the clip in training but span the whole generated history at inference, as for every row. The next three raise the span to $100$\,s and differ only in how the bank is formed: one retrieval for the whole scored block, as in Context-as-Memory; per-chunk retrieval with the bank numbered in temporal order; and ours, with one shared index. The last two raise $\spanm_{\max}$ alone. All rows share the initialization, data, optimizer, step count and five rollout seeds of \hyperref[sec:setup]{Sec.~\ref*{sec:setup}}.

\input{tables/headline}

\paragraph{Effect of Retrieval.} Letting each chunk retrieve raises Revisit on every split and multiplies Gain several times over, already for a model trained on $10$\,s clips (\hyperref[fig:ablation]{Figure~\ref*{fig:ablation}}; more returns from \ours in \hyperref[fig:teasergrid0]{Figure~\ref*{fig:teasergrid0}}). At inference, that model retrieves from the whole generated history like every other row, and since retrieval is by pose and the bank shares one index, nothing tells it how old a retrieved frame is: what it learns on frames less than $10$\,s old carries over to frames retrieved from far further back (\hyperref[app:tenswhy]{App.~\ref*{app:tenswhy}}). The rise is smallest on seen scenes, whose places the model has already learned in training and can partly redraw without retrieval, and largest on web photographs, where only the retrieved frames tell the model what the place looked like. More attention is no substitute. Attending to every earlier chunk of the clip without retrieval helps on unseen scenes, is level on seen ones and lowers Revisit on web photographs, on $16$ of the $20$ photographs and in every one of the five rollout seeds. Apart from the first frame, such a window holds only frames the model generated itself. In the rendered domain those frames stay close to real; from a photograph outside it, each chunk drifts slightly, and a longer window keeps conditioning on that drift.

Who retrieves matters as much as whether. Sharing one retrieval across the whole scored block, the arrangement of Context-as-Memory, lowers Revisit and Gain on every split at the same span, slightly on rendered scenes and by about a third on web photographs: one retrieval cannot match the view of every chunk in the block, so most chunks read frames chosen for another pose. The model also reads what the bank holds, not only how many frames it has: varying only the contents of the bank's slots at inference, with checkpoint, trajectory and seed fixed, mid-path Revisit halves with random frames from the same history and falls further with an empty bank or frames from another episode (\hyperref[app:bankuse]{App.~\ref*{app:bankuse}}).

\paragraph{Effect of Training Span.} Training on longer spans raises Revisit and Gain further on every split, most at $200$\,s, and the gain over $10$\,s top-$K$ lies mainly in returns more than $10$\,s apart and in mid-path returns (\hyperref[app:gapsplit]{App.~\ref*{app:gapsplit}}). At $400$\,s both fall back on every split, to about the level of $100$\,s and below it on web photographs. Once the span covers the first visits of returns, more length only adds candidates no chunk chooses: a span of $100$, $200$ and $400$\,s reaches the first visit of $64\%$, $84\%$ and $93\%$ of returning latents in our corpus (\hyperref[app:data]{App.~\ref*{app:data}}), and the bank barely grows past $200$\,s, since it stops changing once new history offers no candidate that outscores a chunk's current $K$-th (\hyperref[fig:teaser]{Figure~\ref*{fig:teaser}}b). The span is a coverage setting, not a quantity to maximize. Image quality moves the other way. Imaging quality falls as the span grows to $200$\,s, most on unseen scenes and web photographs, and recovers at $400$\,s, where Revisit falls back. The cause is not a weaker generator: given a ground-truth history every model we tested, from $10$ to $200$\,s, reaches the quality of the rendered videos, and the loss comes from conditioning on its own output, most of it through the bank (\hyperref[app:selfcond]{App.~\ref*{app:selfcond}}). The $400$\,s run reverses both trends at once, with Revisit falling back while imaging quality recovers, which is consistent with a model that relies less on its bank, the path through which most of the quality loss enters.

\input{figures/fig_ablation}

\paragraph{Effect of the Bank Index.} Numbering the bank's entries in temporal order rather than giving them one shared index is not a clear win either way. At $100$\,s it raises Revisit on seen and unseen scenes, close to the $200$\,s run, and is level on web photographs. DINO Revisit, however, is higher only on seen scenes, and imaging quality is lower on every split: the ordered index reproduces the retrieved frames more closely in pixels without drawing the place better in features, and at a cost in image quality.

\paragraph{Returns Beyond the Training Window.} Our splits score the first $60$\,s, where returns are at most a minute apart, so they cannot separate spans longer than that. We therefore roll the same web photographs out to $400$\,s and score every return by the interval between its two visits, over the same five rollout seeds (\hyperref[fig:interval]{Figure~\ref*{fig:interval}}). The $10$\,s top-$K$ model, whose retrieval also reaches the whole history, decays as the interval grows, from $0.390 \pm 0.023$ under half a minute to $0.303 \pm 0.025$ at two to four minutes and $0.145 \pm 0.044$ beyond four. On the same returns the $200$\,s model is ahead of it in every bin, by $0.106 \pm 0.022$ under half a minute, $0.148 \pm 0.040$ at one to two minutes and $0.103 \pm 0.061$ beyond four, and the $100$\,s model is ahead in four of the five bins and level at two to four minutes ($-0.005 \pm 0.026$). The $400$\,s model does not extend this. It is level with $10$\,s top-$K$ under a minute ($-0.003 \pm 0.026$ and $+0.017 \pm 0.016$) and at two to four minutes ($-0.015 \pm 0.017$), and ahead only at one to two minutes ($+0.055 \pm 0.042$) and beyond four ($+0.087 \pm 0.052$), as in \hyperref[tab:headline]{Table~\ref*{tab:headline}}: a span that already covers the first visits gains nothing from more length. Returns more than four minutes apart are few, $30$ per seed, so the last bin is the least certain.

\input{figures/fig_interval}

%% file: tables/sota.tex
\begin{table}[t]
\caption{\textbf{Comparison with open world models} on the web photographs, with returns to the
starting pose and to mid-path poses scored apart. Mean over five seeds; \hyperref[tab:sota_std]{Table~\ref*{tab:sota_std}} gives the spreads. \ours: the $100$\,s run of \hyperref[tab:headline]{Table~\ref*{tab:headline}}. DINO and D.\,Gain
use DINO ViT-B/16 cosine in place of pixel correlation. $^\dagger$Tracks our walks only to within
$1.1$\,m and $34^\circ$.}
\label{tab:sota}
\centering\scriptsize
\setlength{\tabcolsep}{4pt}
\renewcommand{\arraystretch}{1.12}
\resizebox{\textwidth}{!}{%
\begin{tabular}{@{}l cccc cccc ccc@{}}
\toprule
& \multicolumn{4}{c}{Starting pose} & \multicolumn{4}{c}{Mid-path} & \multicolumn{3}{c}{VBench} \\
\cmidrule(lr){2-5}\cmidrule(lr){6-9}\cmidrule(l){10-12}
Method & Revisit$\uparrow$ & Gain$\uparrow$ & DINO$\uparrow$ & D.\,Gain$\uparrow$ & Revisit$\uparrow$ & Gain$\uparrow$ & DINO$\uparrow$ & D.\,Gain$\uparrow$ & Subj.$\uparrow$ & Backg.$\uparrow$ & Imag.$\uparrow$ \\
\midrule
LingBot-World~2.0 & 0.143 & $-$0.006 & 0.526 & $-$0.001 & 0.147 & 0.011 & 0.546 & 0.017 & 0.765 & 0.852 & 0.741 \\
DreamX-World~1.0 & 0.146 & 0.033 & 0.547 & $-$0.045 & 0.109 & 0.009 & 0.591 & 0.007 & 0.781 & 0.868 & 0.684 \\
HY-WorldPlay~1.5$^\dagger$ & 0.243 & 0.040 & 0.679 & 0.025 & 0.194 & 0.037 & 0.674 & 0.027 & \best{0.839} & \best{0.908} & 0.702 \\
Matrix-Game~3.0 & 0.254 & 0.075 & 0.717 & 0.042 & 0.283 & 0.089 & \second{0.760} & 0.100 & \second{0.813} & \second{0.895} & \best{0.761} \\
Infinite-World & 0.109 & 0.019 & 0.610 & 0.013 & 0.168 & 0.070 & 0.646 & 0.052 & 0.789 & 0.856 & \second{0.746} \\
CaR & \second{0.350} & \second{0.220} & \best{0.843} & \best{0.305} & \best{0.450} & \second{0.312} & 0.759 & \second{0.211} & 0.787 & 0.846 & 0.715 \\
\rowcolor{ourrow}
\multicolumn{1}{@{}>{\columncolor{ourrow}[0pt][\tabcolsep]}l}{\textbf{\ours (ours)}} & \best{0.365} & \best{0.230} & \second{0.787} & \second{0.219} & \second{0.447} & \best{0.329} & \best{0.816} & \best{0.258} & 0.778 & 0.857 & \multicolumn{1}{>{\columncolor{ourrow}[\tabcolsep][0pt]}c@{}}{0.705} \\
\bottomrule
\end{tabular}}
\end{table}

%% file: figures/fig_revisit.tex
\begin{figure}[t]
\centering
\resizebox{\textwidth}{!}{%
\begin{tikzpicture}[
  img/.style={inner sep=0pt,outer sep=0pt},
  tag/.style={font=\fontsize{4.6}{5.6}\selectfont,text=white,fill opacity=0.85,text opacity=1,rounded corners=1.2pt,inner sep=1.4pt},
  mname/.style={font=\fontsize{5.0}{6}\selectfont,text=white,inner sep=1.5pt,anchor=south west},
]
\node[img,anchor=north west] (in) at (0,0) {\includegraphics[width=4.868cm]{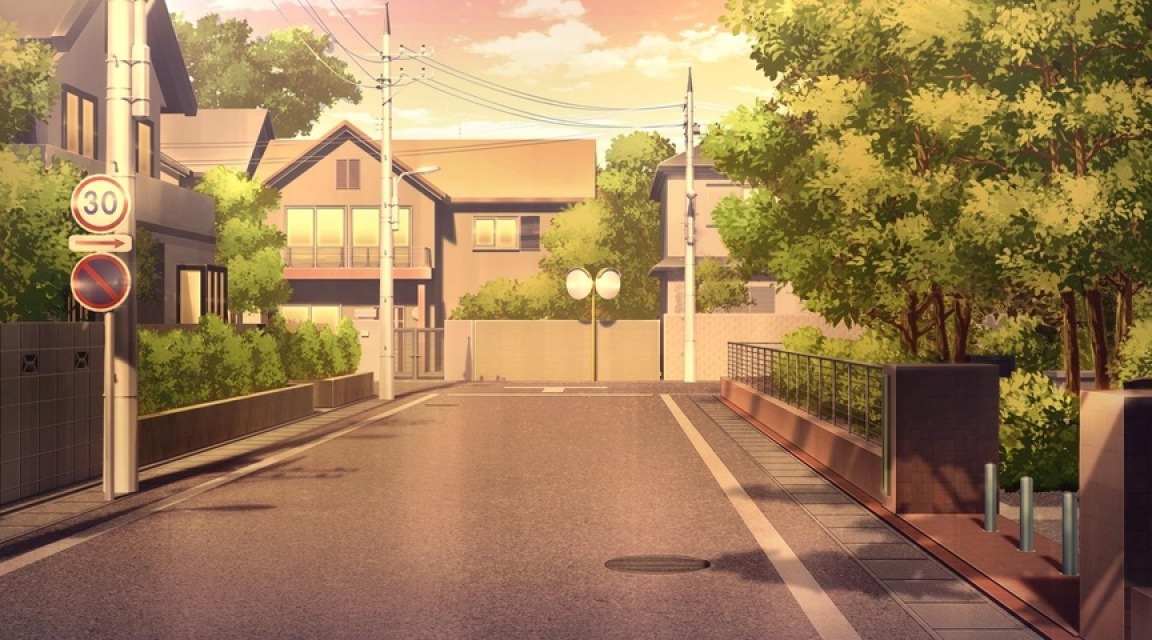}};
\fill[black,path fading=north,fill opacity=0.85] (0,-2.705) rectangle (4.868,-2.155);
\node[mname,font=\fontsize{6.0}{7}\selectfont] at ([shift={(0.06,0.05)}]in.south west) {Input};
\node[tag,fill=selMix,anchor=north east,font=\fontsize{5.4}{6.4}\selectfont] at ([shift={(-0.06,-0.06)}]in.north east) {$t = 0$\,s};
\node[img,anchor=north west] (nours) at (5.008,0.000) {\includegraphics[width=2.4cm]{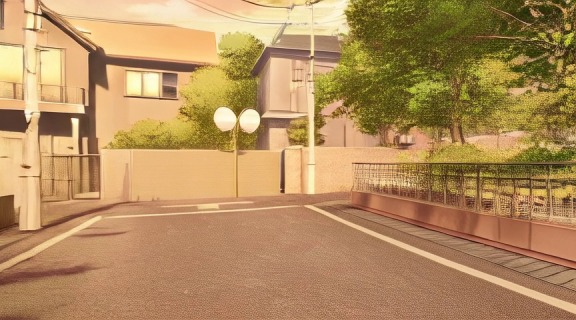}};
\fill[black,path fading=north,fill opacity=0.85] (5.008,-1.333) rectangle (7.408,-0.913);
\draw[ourline,line width=1.0pt] (nours.north west) rectangle (nours.south east);
\node[mname,font=\fontsize{5.0}{6}\selectfont\bfseries] at ([shift={(0.04,0.03)}]nours.south west) {\ours};
\node[tag,fill=selMix,anchor=north east] at ([shift={(-0.04,-0.04)}]nours.north east) {$t = 54.5$\,s};
\node[img,anchor=north west] (ncar) at (7.446,0.000) {\includegraphics[width=2.4cm]{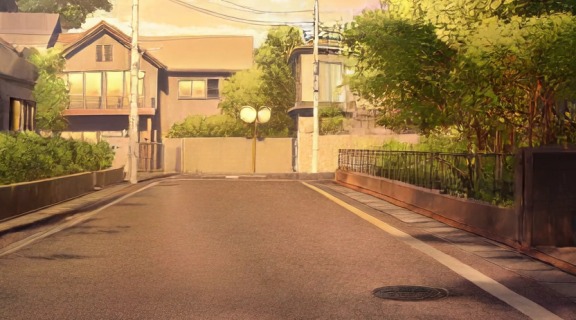}};
\fill[black,path fading=north,fill opacity=0.85] (7.446,-1.333) rectangle (9.846,-0.913);
\node[mname] at ([shift={(0.04,0.03)}]ncar.south west) {CaR};
\node[img,anchor=north west] (nlingbot) at (9.884,0.000) {\includegraphics[width=2.4cm]{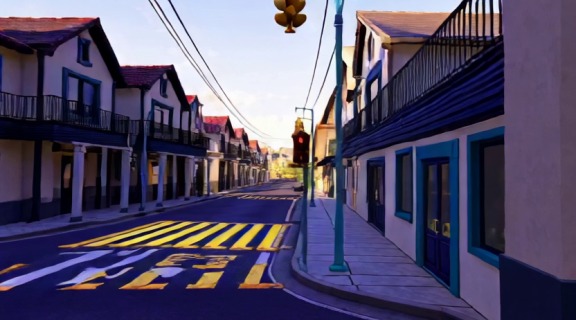}};
\fill[black,path fading=north,fill opacity=0.85] (9.884,-1.333) rectangle (12.284,-0.913);
\node[mname] at ([shift={(0.04,0.03)}]nlingbot.south west) {LingBot-World~2.0};
\node[img,anchor=north west] (nmatrix) at (5.008,-1.371) {\includegraphics[width=2.4cm]{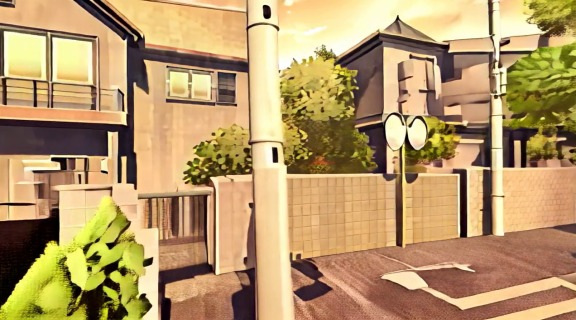}};
\fill[black,path fading=north,fill opacity=0.85] (5.008,-2.705) rectangle (7.408,-2.285);
\node[mname] at ([shift={(0.04,0.03)}]nmatrix.south west) {Matrix-Game~3.0};
\node[img,anchor=north west] (ndreamx) at (7.446,-1.371) {\includegraphics[width=2.4cm]{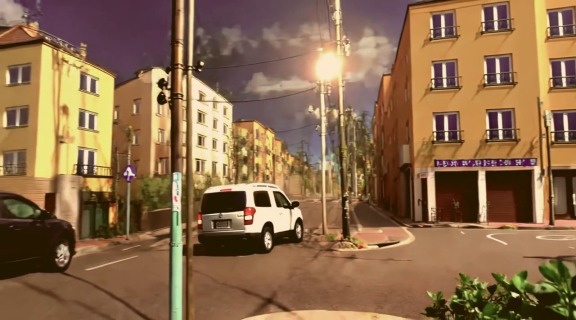}};
\fill[black,path fading=north,fill opacity=0.85] (7.446,-2.705) rectangle (9.846,-2.285);
\node[mname] at ([shift={(0.04,0.03)}]ndreamx.south west) {DreamX-World~1.0};
\node[img,anchor=north west] (nhyworld) at (9.884,-1.371) {\includegraphics[width=2.4cm]{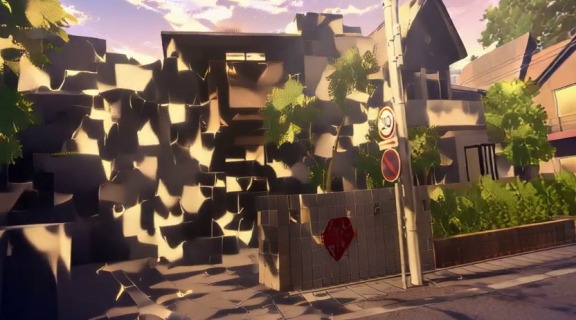}};
\fill[black,path fading=north,fill opacity=0.85] (9.884,-2.705) rectangle (12.284,-2.285);
\node[mname] at ([shift={(0.04,0.03)}]nhyworld.south west) {HY-WorldPlay~1.5};
\end{tikzpicture}}
\caption{\textbf{Comparison with SOTA} on a web photograph. Each system is shown late in the
walk, when the camera is back near the input view; the input is the reference.}
\label{fig:revisit}
\end{figure}

%% file: tables/headline.tex
\begin{table}[t]
\caption{\textbf{Ablation on training span, bank and retrieval.} Rows add one ingredient at
a time (\hyperref[sec:span]{Sec.~\ref*{sec:span}}); spans are drawn uniformly from $10$\,s
to the length given. PSNR and LPIPS need rendered ground truth, which web photographs
lack. All runs at step $6{,}000$; mean over five rollout seeds, standard deviation of the
per-seed means in small type. DINO is Revisit with the cosine of DINO ViT-B/16 features in place of pixel correlation. Best per split in bold, second underlined. Seen scenes are in \hyperref[tab:headline_seen]{Table~\ref*{tab:headline_seen}} (\hyperref[app:seen]{App.~\ref*{app:seen}}).}
\label{tab:headline}
\centering\scriptsize
\setlength{\tabcolsep}{4pt}
\renewcommand{\arraystretch}{1.1}
\resizebox{\textwidth}{!}{%
\begin{tabular}{@{}l cc ccc ccc@{}}
\toprule
& \multicolumn{2}{c}{Fidelity to GT} & \multicolumn{3}{c}{Memory} & \multicolumn{3}{c}{VBench} \\
\cmidrule(lr){2-3}\cmidrule(lr){4-6}\cmidrule(l){7-9}
Training & PSNR$\uparrow$ & LPIPS$\downarrow$ & Revisit$\uparrow$ & Gain$\uparrow$ & DINO$\uparrow$ & Subj.$\uparrow$ & Backg.$\uparrow$ & Imag.$\uparrow$ \\
\midrule
\rowcolor{grouprow}
\multicolumn{9}{@{}l}{\textit{Unseen scenes}} \\
$10$\,s, no retrieval & 10.60{\fontsize{5}{5}\selectfont$\,\pm$.06} & 0.646{\fontsize{5}{5}\selectfont$\,\pm$.005} & 0.311{\fontsize{5}{5}\selectfont$\,\pm$.018} & 0.049{\fontsize{5}{5}\selectfont$\,\pm$.020} & 0.557{\fontsize{5}{5}\selectfont$\,\pm$.020} & 0.751{\fontsize{5}{5}\selectfont$\,\pm$.006} & 0.873{\fontsize{5}{5}\selectfont$\,\pm$.001} & 0.666{\fontsize{5}{5}\selectfont$\,\pm$.014} \\
$10$\,s, $10$-chunk window & 10.47{\fontsize{5}{5}\selectfont$\,\pm$.32} & 0.662{\fontsize{5}{5}\selectfont$\,\pm$.006} & 0.363{\fontsize{5}{5}\selectfont$\,\pm$.033} & 0.114{\fontsize{5}{5}\selectfont$\,\pm$.033} & 0.572{\fontsize{5}{5}\selectfont$\,\pm$.020} & 0.726{\fontsize{5}{5}\selectfont$\,\pm$.008} & 0.863{\fontsize{5}{5}\selectfont$\,\pm$.003} & 0.643{\fontsize{5}{5}\selectfont$\,\pm$.007} \\
$10$\,s, top-$K$, no bank & 10.71{\fontsize{5}{5}\selectfont$\,\pm$.20} & 0.630{\fontsize{5}{5}\selectfont$\,\pm$.005} & 0.503{\fontsize{5}{5}\selectfont$\,\pm$.022} & 0.255{\fontsize{5}{5}\selectfont$\,\pm$.026} & 0.735{\fontsize{5}{5}\selectfont$\,\pm$.011} & 0.757{\fontsize{5}{5}\selectfont$\,\pm$.003} & 0.873{\fontsize{5}{5}\selectfont$\,\pm$.003} & \second{0.675}{\fontsize{5}{5}\selectfont$\,\pm$.016} \\
$100$\,s, per-segment retrieval, bank & 11.27{\fontsize{5}{5}\selectfont$\,\pm$.22} & 0.636{\fontsize{5}{5}\selectfont$\,\pm$.013} & 0.511{\fontsize{5}{5}\selectfont$\,\pm$.040} & 0.259{\fontsize{5}{5}\selectfont$\,\pm$.023} & 0.743{\fontsize{5}{5}\selectfont$\,\pm$.032} & 0.763{\fontsize{5}{5}\selectfont$\,\pm$.003} & 0.873{\fontsize{5}{5}\selectfont$\,\pm$.002} & 0.671{\fontsize{5}{5}\selectfont$\,\pm$.022} \\
$100$\,s, top-$K$, ordered bank & \best{11.78}{\fontsize{5}{5}\selectfont$\,\pm$.15} & \second{0.618}{\fontsize{5}{5}\selectfont$\,\pm$.010} & \second{0.600}{\fontsize{5}{5}\selectfont$\,\pm$.022} & 0.299{\fontsize{5}{5}\selectfont$\,\pm$.011} & 0.809{\fontsize{5}{5}\selectfont$\,\pm$.016} & \second{0.775}{\fontsize{5}{5}\selectfont$\,\pm$.008} & \second{0.879}{\fontsize{5}{5}\selectfont$\,\pm$.004} & 0.630{\fontsize{5}{5}\selectfont$\,\pm$.010} \\
$100$\,s, top-$K$, bank & \second{11.62}{\fontsize{5}{5}\selectfont$\,\pm$.21} & 0.643{\fontsize{5}{5}\selectfont$\,\pm$.005} & 0.559{\fontsize{5}{5}\selectfont$\,\pm$.037} & \second{0.300}{\fontsize{5}{5}\selectfont$\,\pm$.020} & 0.822{\fontsize{5}{5}\selectfont$\,\pm$.022} & 0.772{\fontsize{5}{5}\selectfont$\,\pm$.006} & 0.878{\fontsize{5}{5}\selectfont$\,\pm$.002} & 0.668{\fontsize{5}{5}\selectfont$\,\pm$.013} \\
$200$\,s, top-$K$, bank & 11.55{\fontsize{5}{5}\selectfont$\,\pm$.30} & \best{0.616}{\fontsize{5}{5}\selectfont$\,\pm$.009} & \best{0.623}{\fontsize{5}{5}\selectfont$\,\pm$.026} & \best{0.343}{\fontsize{5}{5}\selectfont$\,\pm$.014} & \best{0.854}{\fontsize{5}{5}\selectfont$\,\pm$.003} & \best{0.776}{\fontsize{5}{5}\selectfont$\,\pm$.008} & \best{0.881}{\fontsize{5}{5}\selectfont$\,\pm$.003} & 0.627{\fontsize{5}{5}\selectfont$\,\pm$.013} \\
$400$\,s, top-$K$, bank & 11.36{\fontsize{5}{5}\selectfont$\,\pm$.19} & 0.642{\fontsize{5}{5}\selectfont$\,\pm$.008} & 0.558{\fontsize{5}{5}\selectfont$\,\pm$.012} & 0.293{\fontsize{5}{5}\selectfont$\,\pm$.019} & \second{0.830}{\fontsize{5}{5}\selectfont$\,\pm$.011} & \best{0.776}{\fontsize{5}{5}\selectfont$\,\pm$.005} & 0.875{\fontsize{5}{5}\selectfont$\,\pm$.003} & \best{0.690}{\fontsize{5}{5}\selectfont$\,\pm$.016} \\
\addlinespace[1pt]
\rowcolor{grouprow}
\multicolumn{9}{@{}l}{\textit{Web photographs}} \\
$10$\,s, no retrieval & -- & -- & 0.183{\fontsize{5}{5}\selectfont$\,\pm$.009} & 0.060{\fontsize{5}{5}\selectfont$\,\pm$.011} & 0.571{\fontsize{5}{5}\selectfont$\,\pm$.010} & 0.769{\fontsize{5}{5}\selectfont$\,\pm$.004} & 0.849{\fontsize{5}{5}\selectfont$\,\pm$.002} & \best{0.736}{\fontsize{5}{5}\selectfont$\,\pm$.003} \\
$10$\,s, $10$-chunk window & -- & -- & 0.134{\fontsize{5}{5}\selectfont$\,\pm$.025} & 0.032{\fontsize{5}{5}\selectfont$\,\pm$.023} & 0.540{\fontsize{5}{5}\selectfont$\,\pm$.018} & 0.742{\fontsize{5}{5}\selectfont$\,\pm$.006} & 0.839{\fontsize{5}{5}\selectfont$\,\pm$.006} & 0.702{\fontsize{5}{5}\selectfont$\,\pm$.008} \\
$10$\,s, top-$K$, no bank & -- & -- & 0.359{\fontsize{5}{5}\selectfont$\,\pm$.014} & 0.227{\fontsize{5}{5}\selectfont$\,\pm$.010} & 0.725{\fontsize{5}{5}\selectfont$\,\pm$.008} & 0.774{\fontsize{5}{5}\selectfont$\,\pm$.003} & 0.855{\fontsize{5}{5}\selectfont$\,\pm$.002} & 0.722{\fontsize{5}{5}\selectfont$\,\pm$.003} \\
$100$\,s, per-segment retrieval, bank & -- & -- & 0.302{\fontsize{5}{5}\selectfont$\,\pm$.026} & 0.193{\fontsize{5}{5}\selectfont$\,\pm$.030} & 0.687{\fontsize{5}{5}\selectfont$\,\pm$.025} & 0.767{\fontsize{5}{5}\selectfont$\,\pm$.009} & 0.847{\fontsize{5}{5}\selectfont$\,\pm$.002} & 0.715{\fontsize{5}{5}\selectfont$\,\pm$.017} \\
$100$\,s, top-$K$, ordered bank & -- & -- & \second{0.443}{\fontsize{5}{5}\selectfont$\,\pm$.014} & 0.307{\fontsize{5}{5}\selectfont$\,\pm$.025} & 0.779{\fontsize{5}{5}\selectfont$\,\pm$.012} & 0.767{\fontsize{5}{5}\selectfont$\,\pm$.002} & 0.854{\fontsize{5}{5}\selectfont$\,\pm$.003} & 0.675{\fontsize{5}{5}\selectfont$\,\pm$.012} \\
$100$\,s, top-$K$, bank & -- & -- & 0.434{\fontsize{5}{5}\selectfont$\,\pm$.013} & \second{0.318}{\fontsize{5}{5}\selectfont$\,\pm$.016} & \second{0.811}{\fontsize{5}{5}\selectfont$\,\pm$.017} & 0.778{\fontsize{5}{5}\selectfont$\,\pm$.007} & \second{0.857}{\fontsize{5}{5}\selectfont$\,\pm$.004} & 0.705{\fontsize{5}{5}\selectfont$\,\pm$.012} \\
$200$\,s, top-$K$, bank & -- & -- & \best{0.467}{\fontsize{5}{5}\selectfont$\,\pm$.010} & \best{0.359}{\fontsize{5}{5}\selectfont$\,\pm$.016} & \best{0.831}{\fontsize{5}{5}\selectfont$\,\pm$.006} & \second{0.782}{\fontsize{5}{5}\selectfont$\,\pm$.002} & \best{0.863}{\fontsize{5}{5}\selectfont$\,\pm$.003} & 0.654{\fontsize{5}{5}\selectfont$\,\pm$.013} \\
$400$\,s, top-$K$, bank & -- & -- & 0.396{\fontsize{5}{5}\selectfont$\,\pm$.024} & 0.285{\fontsize{5}{5}\selectfont$\,\pm$.015} & 0.806{\fontsize{5}{5}\selectfont$\,\pm$.017} & \best{0.783}{\fontsize{5}{5}\selectfont$\,\pm$.006} & \second{0.857}{\fontsize{5}{5}\selectfont$\,\pm$.002} & \second{0.733}{\fontsize{5}{5}\selectfont$\,\pm$.009} \\
\bottomrule
\end{tabular}}
\end{table}

%% file: figures/fig_ablation.tex
\begin{figure}[t]
\centering
\resizebox{\textwidth}{!}{%
\begin{tikzpicture}[
  img/.style={inner sep=0pt,outer sep=0pt},
  hd/.style={font=\fontsize{3.4}{4.0}\selectfont,anchor=south,inner sep=1.2pt,text=black!85,align=center},
  tag/.style={font=\fontsize{4.4}{5.2}\selectfont,text=white,fill opacity=0.85,text opacity=1,rounded corners=1.2pt,inner sep=1.6pt,anchor=north west},
]
\node[img,anchor=north west] (in) at (0,-0.970) {\includegraphics[width=1.62cm]{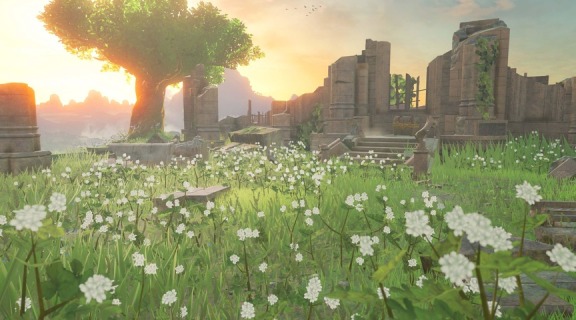}};
\node[hd] at (in.north) {Input};
\node[tag,fill=selMix,anchor=north] at ([yshift=-0.06cm]in.south) {$t = 0$\,s};
\draw[black!20,line width=0.4pt] (1.740,0.42) -- (1.740,-2.890);
\node[img,anchor=north west] (a0sw) at (1.860,0.000) {\includegraphics[width=1.62cm]{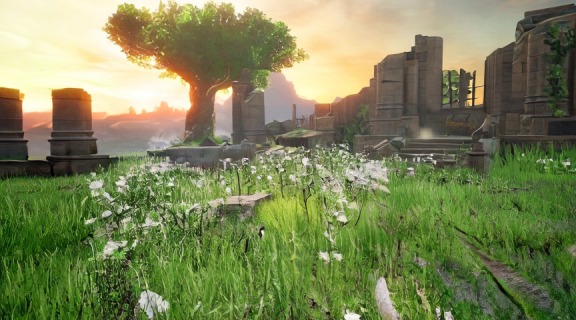}};
\node[tag,fill=selMix] at ([shift={(0.04,-0.04)}]a0sw.north west) {$t = 11$\,s};
\node[hd] at ([yshift=0.03cm]a0sw.north) {10\,s, no retrieval};
\node[img,anchor=north west] (a0sw10) at (3.550,0.000) {\includegraphics[width=1.62cm]{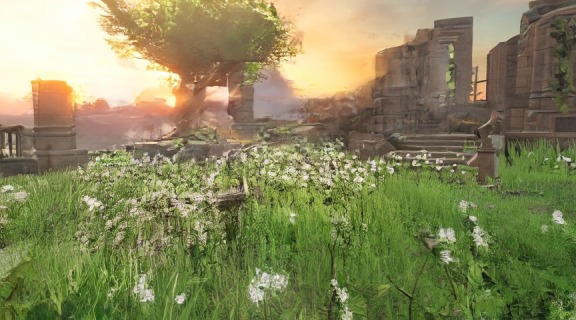}};
\node[hd] at ([yshift=0.03cm]a0sw10.north) {10\,s, 10-chunk window};
\node[img,anchor=north west] (a0topk) at (5.240,0.000) {\includegraphics[width=1.62cm]{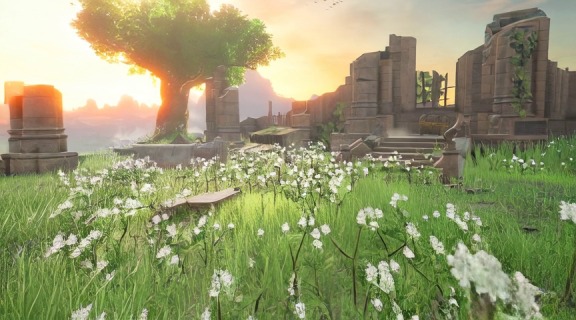}};
\node[hd] at ([yshift=0.03cm]a0topk.north) {10\,s, top-\textit{K}};
\node[img,anchor=north west] (a0ours) at (6.930,0.000) {\includegraphics[width=1.62cm]{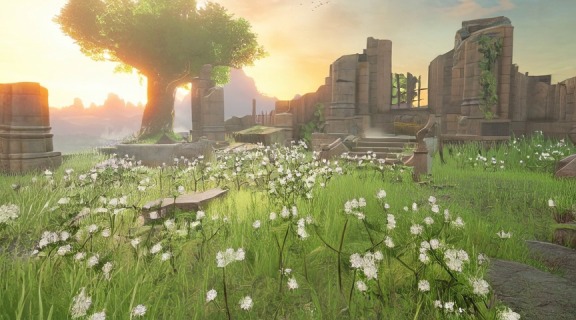}};
\node[hd,font=\fontsize{3.4}{4.0}\selectfont\bfseries,fill=ourrow,rounded corners=1.2pt] at ([yshift=0.03cm]a0ours.north) {100\,s, top-\textit{K}, bank};
\node[img,anchor=north west] (a0m199) at (8.620,0.000) {\includegraphics[width=1.62cm]{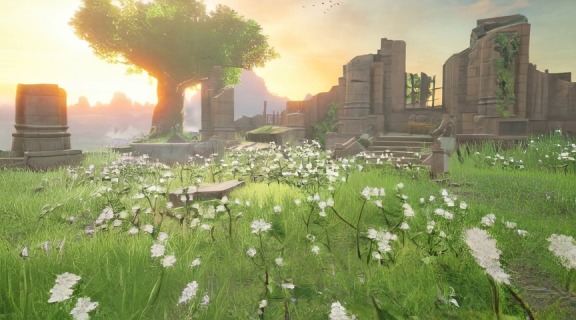}};
\node[hd] at ([yshift=0.03cm]a0m199.north) {200\,s, top-\textit{K}, bank};
\node[img,anchor=north west] (a1sw) at (1.860,-0.970) {\includegraphics[width=1.62cm]{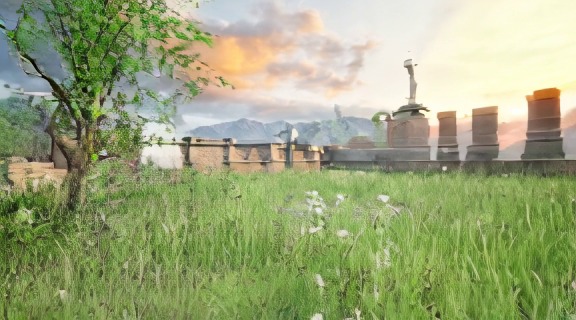}};
\node[tag,fill=selOv] at ([shift={(0.04,-0.04)}]a1sw.north west) {$t = 5$\,s};
\node[img,anchor=north west] (a1sw10) at (3.550,-0.970) {\includegraphics[width=1.62cm]{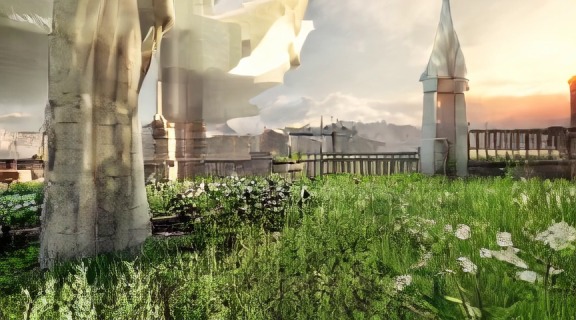}};
\node[img,anchor=north west] (a1topk) at (5.240,-0.970) {\includegraphics[width=1.62cm]{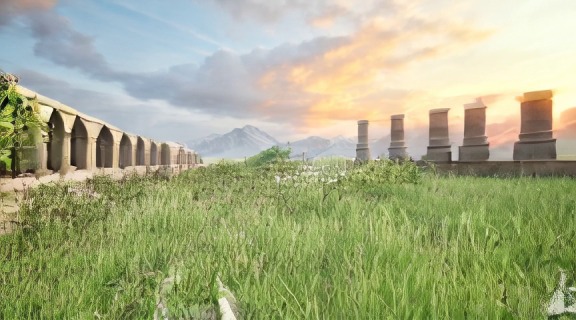}};
\node[img,anchor=north west] (a1ours) at (6.930,-0.970) {\includegraphics[width=1.62cm]{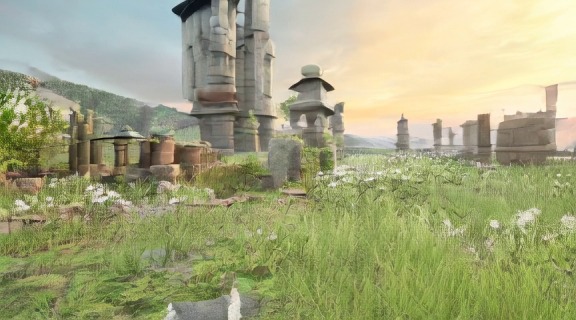}};
\node[img,anchor=north west] (a1m199) at (8.620,-0.970) {\includegraphics[width=1.62cm]{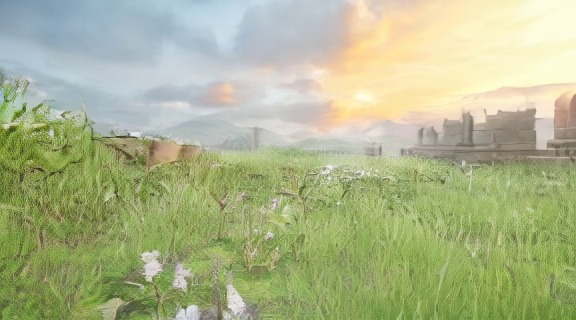}};
\node[img,anchor=north west] (a2sw) at (1.860,-1.940) {\includegraphics[width=1.62cm]{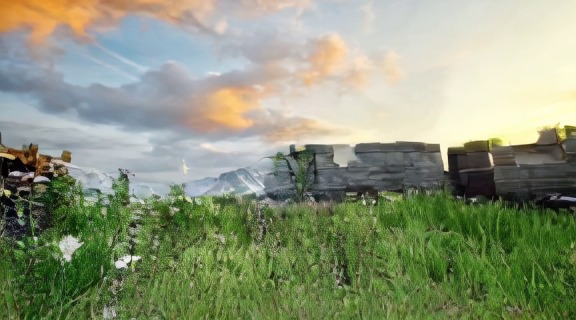}};
\node[tag,fill=selOv] at ([shift={(0.04,-0.04)}]a2sw.north west) {$t = 18.75$\,s};
\node[img,anchor=north west] (a2sw10) at (3.550,-1.940) {\includegraphics[width=1.62cm]{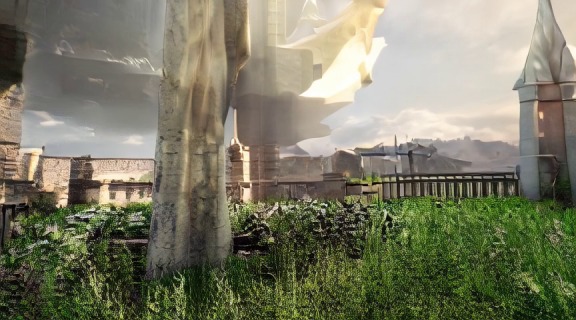}};
\node[img,anchor=north west] (a2topk) at (5.240,-1.940) {\includegraphics[width=1.62cm]{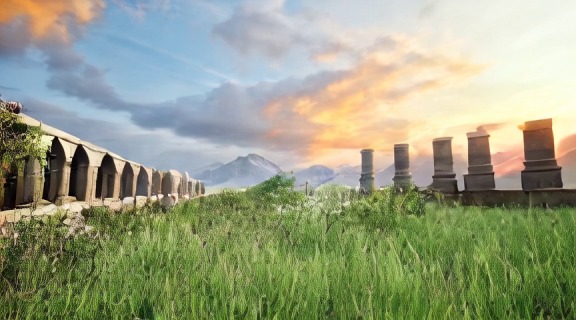}};
\node[img,anchor=north west] (a2ours) at (6.930,-1.940) {\includegraphics[width=1.62cm]{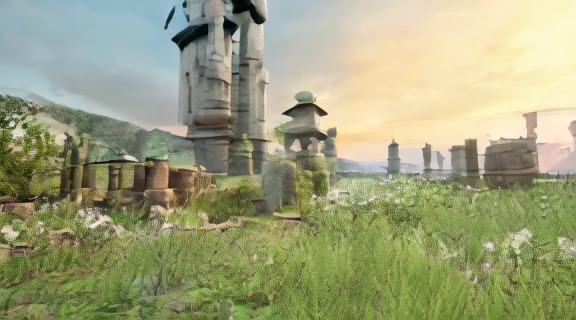}};
\node[img,anchor=north west] (a2m199) at (8.620,-1.940) {\includegraphics[width=1.62cm]{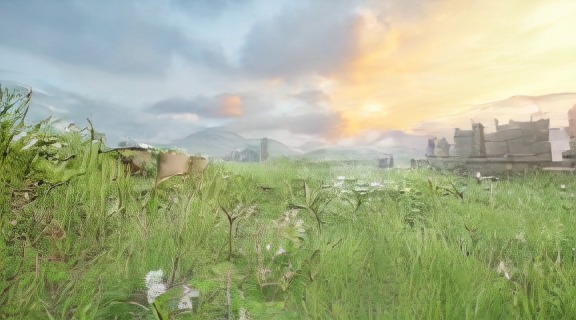}};
\end{tikzpicture}}
\caption{\textbf{Ablation study} on five models of
\hyperref[tab:headline]{Table~\ref*{tab:headline}} on a web photograph.
Top: back near the input view. Middle: a place a little further along. Bottom:
the return to it.}
\label{fig:ablation}
\end{figure}

%% file: figures/fig_interval.tex
\begin{figure}[t]
\centering
\includegraphics[width=\textwidth]{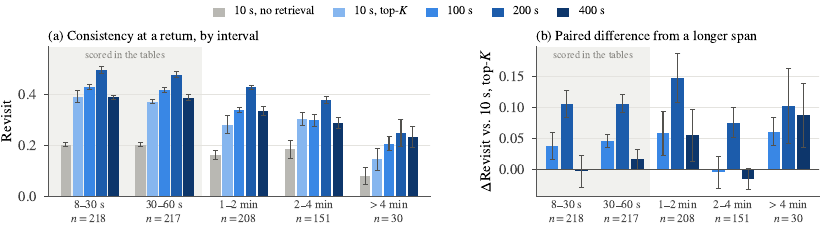}
\caption{\textbf{Revisit against return interval} over $400$\,s rollouts on the web
photographs; error bars are the standard deviation over five rollout seeds.
\textbf{(a)} Revisit per model, grouped by the interval between the two visits; $n$ counts
returns per seed. \textbf{(b)} Difference from $10$\,s top-$K$ on the same returns. Shading marks the first $60$\,s, which the tables score.}
\label{fig:interval}
\end{figure}

%% file: sections/05_discussion.tex
\section{Discussion}
\label{sec:disc}

\ours decouples the span a world model is trained on from the sequence it attends over: each scored chunk retrieves its own top-$K$ latents from the history by camera co-visibility, and the union of these requests forms a bank. Lengthening the span from $100$ to $400$\,s adds only $12\%$ to the step time. Retrieval raises revisit consistency over a sliding window on every split, and a span that reaches the first visits raises it further, here up to $200$\,s and not beyond, at a cost in image quality that we trace to conditioning on generated frames. Against open world models, \ours leads five of the eight memory columns of \hyperref[tab:sota]{Table~\ref*{tab:sota}} and CaR the other three, with \ours ahead on the returns the first frame alone cannot serve.

\paragraph{Limitations.} We mainly focus on static scenes: nothing moves but the camera, so a return is always to a place that should look the same, and Revisit asks only whether a model draws it the same way again. This leaves out scenes in which objects move or the place itself changes between visits, where drawing a place differently can be correct and consistency has to be judged against what should have changed. The cost in image quality comes from a mismatch between training, where the bank and the recent chunks are clean ground truth, and inference, where they hold the model's own output (\hyperref[app:selfcond]{App.~\ref*{app:selfcond}}). This is the exposure bias that Self Forcing \citep{selfforcing} removes by training on the model's own rollouts, and our distillation (\hyperref[sec:distill]{Sec.~\ref*{sec:distill}}) applies it to the bank as well as to the recent window; how much of the gap it closes remains to be measured.

%% file: appendix/00_overview.tex
\section*{Appendix}
\addcontentsline{toc}{section}{Appendix}

The appendix is organized in three parts. \hyperref[app:impl]{App.~\ref*{app:impl}} gives pseudocode for training and inference, shows that the shared bank is lossless and bounded, gives the overlap formula, grades the retrieval cues and chooses the retrieval width and bank capacity against ground truth with no trained model, and lists all settings. \hyperref[app:data]{App.~\ref*{app:data}} describes the training and test data and counts how far apart revisits lie. \hyperref[app:analysis]{App.~\ref*{app:analysis}} defines the revisit metrics, gives the seen-scene ablation, explains where the memory of a model trained on $10$\,s clips comes from and splits the gain by return interval, measures the cost of a training step, explains the baselines left out, checks how closely each system follows the camera path, tests what the bank must hold at inference, traces why image quality falls with span, and shows more returns from \ours and the open world models.

%% file: appendix/01_method.tex
\section{Method Details}
\label{app:impl}

\subsection{Training and Inference Procedures}
\label{app:algo}

\hyperref[alg:train]{Algorithm~\ref*{alg:train}} builds and scores one training sample, and \hyperref[alg:rollout]{Algorithm~\ref*{alg:rollout}} generates a video chunk by chunk. $\mathrm{TopK}\big(q, \mathcal{C}\big)$ returns the $\min(K, |\mathcal{C}|)$ latents of the candidate set $\mathcal{C}$ with the highest score $\Sco(q, \cdot)$ of \hyperref[eq:score]{Eq.~\ref*{eq:score}}, ties broken in favour of the candidate nearer $q$ in time, where $q$ is the camera of the last latent of the requesting chunk.

\paragraph{Inference.}
\label{sec:inference}
A rollout generates one chunk at a time, and the chunk being generated occupies the position of $T_1$: the model reads the first frame, a bank of $K$ entries, the $r$ most recent generated chunks and the chunk being denoised, $1 + K + rc + c$ slots in all. Retrieval follows the training rule with a single requesting chunk, so the bank is that chunk's own top-$K$ by \hyperref[eq:score]{Eq.~\ref*{eq:score}} over every latent generated before its window, recomputed at each step from the cameras seen so far. Nothing in the rollout depends on $\spanm$, which only sets how much history a training sample carries, so a rollout can run far beyond the longest training span: every generated chunk is attended over the same $1 + K + rc + c$ slots, whatever has been generated before it. The ranking is the one part that does grow. \hyperref[eq:score]{Eq.~\ref*{eq:score}} is evaluated over every latent generated so far, so selecting the bank for chunk $n$ costs $O(n)$ score evaluations against attention that costs nothing extra; it is a comparison of poses outside the network, and we leave it unindexed.

\begin{algorithm}[!ht]
\caption{One training step}
\label{alg:train}
\small
\begin{algorithmic}[1]
\Require episode latents $x_{0:N}$ with cameras $P_{0:N}$; constants $c, k, r, K$;
span range $[\spanm_{\min}, \spanm_{\max}]$
\State $\spanm \sim U[\spanm_{\min}, \spanm_{\max}]$; pick a start $s$ so the span of $1 + \spanm c$ latents fits
\State $\anchor \gets x_s$; $|\rec| \gets \min(r, \spanm - k)$, $|\mathcal{H}| \gets \max(0, \spanm - k - r)$ chunks \Comment{empty when $\spanm = k$}
\State split the next $\spanm$ chunks into history $\mathcal{H}$, recent $\rec$ and scored $T_1, \dots, T_k$
\For{$j = 1, \dots, k$}
  \State $S_j^{\mathcal{H}} \gets \mathrm{TopK}\big(q_j, \mathcal{H}\big)$ \Comment{$T_j$'s top-$K$ over the history; empty if $\mathcal{H} = \emptyset$}
\EndFor
\State $\band \gets \bigcup_j S_j^{\mathcal{H}}$ \Comment{$|\band| \le kK$}
\State sequence $z \gets [\,\anchor \mid \band \mid \rec \mid T_1 \dots T_k\,]$; temporal indices $0 \mid 1 \dots 1 \mid 2, 3, \dots$
\State express all cameras relative to the first latent of $T_1$
\State mask $M$: $\anchor, \band, \rec$ attend within the context and never into $\tgt$; each $T_j$ attends to
\Statex \hspace{\algorithmicindent} $\anchor$, itself, the $r$ chunks before it, and $\mathrm{TopK}\big(q_j, \mathcal{C}_j\big)$ over the candidates
\Statex \hspace{\algorithmicindent} $\mathcal{C}_j = \band \cup \{\text{chunks of } \rec \text{ outside its window}\} \cup \{T_1 \dots T_{j-r-1}\}$ completed before its window
\State draw a noise level $\sigma_j$ per scored chunk and replace $T_j$ in $z$ by its noised $\tilde T_j$ \Comment{diffusion forcing}
\State $\mathcal{L} \gets \sum_j \big\lVert f_\theta(z, M)_{T_j} - v_j \big\rVert^2$, with $v_j$ the flow-matching target of $T_j$ \Comment{loss on $\tgt$ only}
\State update $\theta$ with $\nabla_\theta \mathcal{L}$
\end{algorithmic}
\end{algorithm}

\begin{algorithm}[!ht]
\caption{Rollout}
\label{alg:rollout}
\small
\begin{algorithmic}[1]
\Require first-frame latent $x_0$; cameras $P_{0:nc}$ for $n$ chunks; constants $c, r, K$; denoising steps $S$
\State $\mathcal{G} \gets [\,x_0\,]$ \Comment{generated latents, in temporal order}
\For{$i = 1, \dots, n$}
  \State $\rec \gets$ the last $r$ generated chunks (fewer at the start)
  \State $\band \gets \mathrm{TopK}\big(q_i, \mathcal{G} \setminus (\{x_0\} \cup \rec)\big)$ \Comment{recomputed every chunk; $O(|\mathcal{G}|)$ scores}
  \State context $[\,x_0 \mid \band \mid \rec\,]$ at the positions of training; cache its keys and values
  \State $y \sim \mathcal{N}(0, I)$ for the new chunk, placed where training places $T_1$
  \For{$t = 1, \dots, S$}
    \State $y \gets$ one denoising step of $f_\theta$ on $y$, attending to the cached context
  \EndFor
  \State $\mathcal{G} \gets [\,\mathcal{G};\, y\,]$ \Comment{append the $c$ latents of $y$}
\EndFor
\State \Return decoded $\mathcal{G}$
\end{algorithmic}
\end{algorithm}

\subsection{The Shared Bank Is Lossless and Bounded}
\label{app:bounds}

Let $S_j^{\mathcal{H}} = \mathrm{TopK}\big(q_j, \mathcal{H}\big)$ be the top-$K$ of scored chunk $T_j$ over the history, so that $\band = \bigcup_{j=1}^{k} S_j^{\mathcal{H}}$ (\hyperref[alg:train]{Algorithm~\ref*{alg:train}}), and let $\mathcal{W}_j$ be the candidates already in the sequence: the recent chunks outside the window of $T_j$ and $T_1 \dots T_{j-r-1}$. The chunk reads $S_j = \mathrm{TopK}\big(q_j, \band \cup \mathcal{W}_j\big)$ (\hyperref[sec:masks]{Sec.~\ref*{sec:masks}}).

\begin{proposition}[The shared bank is lossless]
\label{prop:lossless}
For every $j$, $S_j = \mathrm{TopK}\big(q_j, \mathcal{H} \cup \mathcal{W}_j\big)$: reading from the shared bank selects exactly what $T_j$ would select from the whole history.
\end{proposition}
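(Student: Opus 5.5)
The argument rests on one observation: top-$K$ under a fixed total order is determined by the set of elements that beat a given candidate. First I fix the order precisely. For the query $q_j$, rank candidates by $\Sco(q_j,\cdot)$ and break ties in favour of the candidate nearer $q_j$ in time. I assume distinct latents occupy distinct time positions, so this is a strict total order $\succ_j$ on all latents of the sample. Then $\mathrm{TopK}(q_j,\mathcal{C})$ is the set of $\min(K,|\mathcal{C}|)$ largest elements of $\mathcal{C}$ under $\succ_j$. The key characterization is this: $x\in\mathrm{TopK}(q_j,\mathcal{C})$ if and only if $x\in\mathcal{C}$ and $|\{y\in\mathcal{C}: y\succ_j x\}|<K$.

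Write $U=\mathcal{H}\cup\mathcal{W}_j$ and $V=\band\cup\mathcal{W}_j$, so $V\subseteq U$ because $\band\subseteq\mathcal{H}$. I show both inclusions of $\mathrm{TopK}(q_j,V)=\mathrm{TopK}(q_j,U)$.

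\emph{($\supseteq$).} Take $x\in\mathrm{TopK}(q_j,U)$. If $x\in\mathcal{W}_j$, then $x\in V$. Otherwise $x\in\mathcal{H}$. Fewer than $K$ elements of $U$ beat $x$, so fewer than $K$ elements of $\mathcal{H}$ beat it. Hence $x\in S_j^{\mathcal{H}}\subseteq\band\subseteq V$. In both cases $x\in V$. Since $V\subseteq U$, the elements of $V$ beating $x$ are among those of $U$ beating $x$, so there are fewer than $K$ of them, and $x\in\mathrm{TopK}(q_j,V)$.

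\emph{($\subseteq$).} Take $x\in\mathrm{TopK}(q_j,V)$ and suppose, for contradiction, that at least $K$ elements of $U$ beat $x$. Choose the $K$ largest such elements under $\succ_j$; each of them lies in $\mathrm{TopK}(q_j,U)$. By the first inclusion, all $K$ lie in $V$. So $K$ elements of $V$ beat $x$, which contradicts $x\in\mathrm{TopK}(q_j,V)$. The same argument also gives the cardinality claim: when $|U|\ge K$, we have $|\mathrm{TopK}(q_j,V)|=K$, since $V$ contains the $K$ elements of $\mathrm{TopK}(q_j,U)$.

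The only delicate step is the tie-breaking. Two things must hold. First, the order $\succ_j$ must be the same in both calls to $\mathrm{TopK}$; it is, because it depends only on $q_j$ and on the candidate itself, not on which set the candidate sits in. Second, the order must be strict. If ties could remain unresolved, $\mathrm{TopK}$ would not be well defined, and the bank could have kept a different member of a tied class than the one the whole-history selection keeps. I would therefore state explicitly that the time-based tie-break makes $\succ_j$ a strict total order. The rest is bookkeeping: $\band\subseteq\mathcal{H}$, and for the requesting chunk $T_j$ its own $S_j^{\mathcal{H}}$ is contained in the union $\band$, so admitting other chunks' entries never hurts.
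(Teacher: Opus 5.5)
Your proof is correct and is essentially the paper's. Both use the strict score-then-time order to say a candidate is in a top-$K$ iff fewer than $K$ members of the set precede it, show that $\mathrm{TopK}(q_j,\mathcal{H}\cup\mathcal{W}_j)$ contains nothing from $\mathcal{H}\setminus\band$, and conclude because the top-$K$ of a subset that contains the whole set's top-$K$ is that same set. You prove that last fact explicitly in your second inclusion and absorb the case $|\mathcal{H}|<K$ rather than splitting it off. One small correction: strictness is needed only on $\mathcal{H}\cup\mathcal{W}_j$, not on all latents of the sample. All of that set precedes $q_j$, so there ``nearer in time'' simply means ``later''.
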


\begin{proof}
$\mathrm{TopK}$ ranks by score and breaks ties by time, a strict order, so a candidate is in the top-$K$ of a set only if fewer than $K$ members of that set precede it. If $|\mathcal{H}| < K$ then $S_j^{\mathcal{H}} = \mathcal{H} \subseteq \band$ and there is nothing to show. Otherwise a latent of $\mathcal{H} \setminus \band$ lies outside $S_j^{\mathcal{H}}$, so the $K$ members of $S_j^{\mathcal{H}} \subseteq \band$ all precede it and it is not in $\mathrm{TopK}\big(q_j, \mathcal{H} \cup \mathcal{W}_j\big)$. That set therefore lies in $\band \cup \mathcal{W}_j \subseteq \mathcal{H} \cup \mathcal{W}_j$, and the top-$K$ of a subset that contains the top-$K$ of the whole is the same set.
\end{proof}

The bank is also bounded. As a union of $k$ sets of at most $K$ latents, $|\band| \le kK$ for every history and span, so the sequence of \hyperref[eq:layout]{Eq.~\ref*{eq:layout}} holds $L \le 1 + kK + rc + kc$ slots and a step costs $O\big((1 + kK + rc + kc)^2\big)$ attention at any span; through the mask each $T_j$ attends to at most $1 + K + rc + c$ positions, whatever $|\band|$ and $\spanm$.

\label{app:layout}
With the constants of \hyperref[sec:setup]{Sec.~\ref*{sec:setup}}, packing the $k$ scored chunks of a sample into one sequence costs $1.82$ slots per scored latent at the measured bank size, against $(1 + K + rc + c)/c = 3.75$ when each chunk is run as its own sequence (\hyperref[tab:hparams]{Table~\ref*{tab:hparams}}; bank sizes from the training sampler on $20$ episodes, $30$ draws per span).

\subsection{Frustum Overlap}
\label{app:criterion}

Let $P_q, P_c \in \SE(3)$ be the camera-to-world poses of the query chunk and a candidate, and let $G$ be a grid of points in the query's camera frame, spread over the image and over a depth range $[z_{\mathrm{near}}, z_{\mathrm{far}}]$. Then
\begin{equation}
\Ovl(q,c) \;=\; \frac{1}{|G|}\sum_{g \in G}
  \mathds{1}\!\left[\,\pi\!\left(P_c^{-1} P_q\, g\right) \in \Omega
  \;\wedge\; z\!\left(P_c^{-1} P_q\, g\right) \in [z_{\mathrm{near}}, z_{\mathrm{far}}]\,\right],
\label{eq:overlap}
\end{equation}
where $\pi$ projects through the shared intrinsics and $\Omega$ is the image rectangle. $G$ is an $8\times8$ grid of cell centres over the image rectangle at $4$ depths, the centres of four log-uniform bins over $[z_{\mathrm{near}}, z_{\mathrm{far}}] = [0.5, 15]$ in the units of the corpus ($2$ to $60$\,m, one unit being $4$\,m), so the samples lie at $0.77$, $1.79$, $4.19$ and $9.81$ units; $256$ points in all. Bin centres keep every sample strictly inside the range, so a camera scores $\Ovl = 1$ against itself. The depth range is the one quantity in \hyperref[eq:overlap]{Eq.~\ref*{eq:overlap}} that carries a scene scale; in \hyperref[eq:score]{Eq.~\ref*{eq:score}} the translation term of $\Pdist$ carries one too. In $\Ovl$, a candidate is credited with a query point only if that point falls inside its frustum \emph{and} within this range of it, so $z_{\mathrm{far}}$ sets how far behind the camera's own position a candidate may be and still be said to have seen the place. \hyperref[app:scale]{App.~\ref*{app:scale}} measures what a mis-set scale costs.

\subsection{Retrieval Cues}
\label{app:selector}

Rules are graded as in \hyperref[sec:criterion]{Sec.~\ref*{sec:criterion}}, over $960$ scored chunks from $20$ episodes with each chunk's candidates thinned by an even stride to between $120$ and $240$ (fewer when the history is shorter), each rule's six picks against the six most similar candidates of that chunk.

\paragraph{Failure Modes.}
\label{app:blindspot}
Overlap cannot separate candidates along the viewing axis: moving the camera straight forward nests the new frustum inside the old one, and overlap still reads $1.000$ after $1.80$ units ($7.2$\,m) of travel.

\paragraph{Scale Sensitivity.}
\label{app:scale}
Under a global rescaling of a scene, the translation term of \hyperref[eq:posedist]{Eq.~\ref*{eq:posedist}} scales while the rotation term does not, whereas \hyperref[eq:overlap]{Eq.~\ref*{eq:overlap}} is invariant if the depth range scales with the scene. \hyperref[tab:scale]{Table~\ref*{tab:scale}} rescales translations while holding each cue's constants fixed. Over a hundredfold scale error, pose distance varies by $10.2$ points and overlap, with its depth range held fixed, by $13.5$. Every scene of our corpus shares metric units scaled by one factor, which removes most of this sensitivity. Shrinking translations tenfold helps pose distance, as does raising $w$, which weighs the same trade from the other side: every value from $4$ to $32$ lies within $1.7$ points, and the best, $w = 8$ to $16$, is $1.7$ points above the $w = 4$ we train with, chosen on an earlier version of the corpus.

\input{tables/scale}

\subsection{Retrieval Width and Bank Capacity}
\label{app:capacity}

\input{tables/capacity}

\hyperref[tab:capacity]{Table~\ref*{tab:capacity}} varies the number of frames a chunk retrieves and, separately, a hard cap on the bank, both at $\lambda = 0.2$ over $240$ query chunks from $12$ episodes. Capping the bank costs quality: moving from ten to sixty-four entries gains six points, and sixty-four comes within a point of an unbounded bank. Achievable gain falls slowly with width because overlap saturates at $1$ for any candidate containing the query view, leaving little to order beyond the first few picks. It falls from $60.0\%$ at $K = 2$ to $55.4\%$ at $K = 6$ and $45.9\%$ at $K = 24$. These percentages are on a different scale from \hyperref[fig:selector]{Figure~\ref*{fig:selector}}: each column is graded against its own ceiling, the $K$ most similar candidates, over $240$ query chunks from $12$ episodes, while \hyperref[fig:selector]{Figure~\ref*{fig:selector}} grades the six retrieved frames against the six best over $960$ chunks from $20$ episodes. The arms of each study are comparable with each other, not across the two, which is why $K = 6$ reads $55.4\%$ here and $57.2\%$ there. We use $K = 6$, which keeps most of that gain while giving each chunk several frames to read.
\label{app:kceiling}

\subsection{Settings}
\label{app:trainingdetail}

Each chunk draws its own noise level, and the flow-matching loss is taken on the scored chunks only. The first frame enters as a single-frame latent, as a photograph does at inference. \hyperref[tab:hparams]{Table~\ref*{tab:hparams}} lists all other settings.

\input{tables/hparams}

%% file: tables/scale.tex
\begin{table}[!htbp]
\caption{\textbf{Sensitivity to pose scale.} Achievable gain when translations are
rescaled with each cue's constants held fixed.}
\label{tab:scale}
\centering\scriptsize
\setlength{\tabcolsep}{6pt}
\renewcommand{\arraystretch}{1.0}
\begin{tabular}{@{}l cc@{}}
\toprule
Translations rescaled by & Pose distance $\Pdist$ ($w = 4$) & Frustum overlap $\Ovl$ \\
\midrule
$0.1\times$ & \best{$57.1\%$} & $39.5\%$ \\
$1\times$ (as posed) & \second{$55.9\%$} & \second{$47.4\%$} \\
$10\times$ & $46.9\%$ & \best{$53.0\%$} \\
\bottomrule
\end{tabular}
\end{table}

%% file: tables/capacity.tex
\begin{table}[!htbp]
\caption{\textbf{Retrieval width and bank capacity.} \emph{Top:} each width $K$ against
its own ceiling, the $K$ most similar candidates. \emph{Bottom:} six frames read from a
bank that admits frames by coverage, dropping near-duplicates, and is capped at the given size, together
with the latents already completed in the chunk's window, which supply the rest when the cap is below six;
$\infty$ lifts the cap but not the de-duplication, so it differs slightly from $K = 6$ above.}
\label{tab:capacity}
\centering\scriptsize
\setlength{\tabcolsep}{5pt}
\renewcommand{\arraystretch}{1.0}
\begin{tabular}{@{}l *{9}{c}@{}}
\toprule
Per-chunk retrieval $K$ & $1$ & $2$ & $3$ & $4$ & $6$ & $8$ & $12$ & $16$ & $24$ \\
\midrule
Achievable gain (\%)$\uparrow$ & \second{$59.6$} & \best{$60.0$} & $58.9$ & $57.4$ & $55.4$ & $54.2$ & $51.1$ & $49.1$ & $45.9$ \\
\bottomrule
\addlinespace[6pt]
\toprule
Bank capacity & $4$ & $6$ & $10$ & $16$ & $24$ & $40$ & $64$ & $\infty$ & \\
\midrule
Achievable gain (\%)$\uparrow$ & $46.5$ & $46.9$ & $49.1$ & $49.7$ & $50.4$ & $51.4$ & \second{$55.0$} & \best{$55.6$} & \\
\bottomrule
\end{tabular}
\end{table}

%% file: tables/hparams.tex
\begin{table}[!htbp]
\caption{\textbf{Settings.} Everything the runs of \hyperref[tab:headline]{Table~\ref*{tab:headline}} share; the ablation rows differ only as \hyperref[sec:span]{Sec.~\ref*{sec:span}} describes.}
\label{tab:hparams}
\centering\scriptsize
\setlength{\tabcolsep}{6pt}
\renewcommand{\arraystretch}{1.0}
\begin{tabular}{@{}l l >{\raggedright\arraybackslash}p{5.6cm}@{}}
\toprule
Group & Setting & Value \\
\midrule
Model and data & Initialization & Wan2.2-TI2V-5B \citep{wan} \\
 & Camera conditioning & PRoPE \citep{prope} \\
 & Video & $864\times480$ at $16$\,fps \\
\midrule
Training sample & Latents per chunk $c$ & $4$ \\
 & Scored chunks $k$, recent chunks $r$ & $10$, $1$ \\
 & Span $\spanm$ (chunks) & $U[10, \spanm_{\max}]$, $\spanm_{\max} = 100$ for \ours{} ($41$--$401$ latents) \\
 & First frame & single-frame latent \\
\midrule
Retrieval & Entries per chunk $K$ & $6$ \\
 & Score weights $\lambda$, $w$ (\hyperref[eq:score]{Eq.~\ref*{eq:score}}) & $0.2$, $4$ \\
 & Overlap grid, depth range & $8\times8\times4$ points, $[0.5, 15]$ units \\
\midrule
Sequence (derived) & Bank $|\band|$, bound / measured at $100$\,s & $kK = 60$ / ${\approx}\,28$ \\
 & Slots $L$, bound / measured at $100$\,s & $105$ / ${\approx}\,73$ (run median $65$, \hyperref[tab:cost]{Table~\ref*{tab:cost}}) \\
 & Slots each scored chunk reads & $1 + K + (r{+}1)c = 15$ \\
 & Slots per scored latent, packed / separate & $1.82$ / $3.75$ \\
\midrule
Optimization & Optimizer, schedule & AdamW, constant, no warmup \\
 & Learning rate (backbone, camera) & $2\times10^{-5}$, $10^{-4}$ \\
 & Gradient clipping & $1.0$ \\
 & Steps, batch size & $6{,}000$, $32$ ($32$ H200, bfloat16, FSDP) \\
 & Caption dropout & $10\%$ \\
\midrule
Evaluation & Denoising steps, guidance scale & $20$, $4$ \\
 & Negative prompt & Wan2.2 default (English) \\
 & Rollout seeds, scored window & $5$, first $60$\,s \\
 & Return & within $0.5$ units and $15^\circ$, $\ge 8$\,s apart, away $> 1.5$ units or $90^\circ$ between \\
 & Control pair & same time gap, cameras $> 1$ unit or $30^\circ$ apart \\
\bottomrule
\end{tabular}
\end{table}

%% file: appendix/02_data.tex
\section{Dataset}
\label{app:data}

\subsection{Training Data}
\label{app:pipeline}

\paragraph{Source.} The corpus is rendered in Unreal Engine~5.8 through UnrealCV along scripted coverage walks, with frozen trajectories, no physics, fixed exposure, and no motion blur, depth of field or on-screen overlays. A \emph{scene} is one engine map, and a \emph{walk} is one continuous capture of up to $6.7$\,h at $1280\times720$ and $24$\,fps through a pinhole camera with a $90^\circ$ horizontal field of view, at an eye height of $1.7$\,m. Each walk is cut into non-overlapping $15$\,min segments, the \emph{episodes} used throughout, and a shorter remainder is dropped.

\paragraph{Camera Poses.} Camera poses are the engine's ground-truth camera-to-world transforms, read per frame in metres. Each trajectory is recentered on its first frame and divided by a fixed factor of $4$, so one unit is $4$\,m in every scene. A latent takes the pose of the central frame of its four. When the camera translates, which it does on $47\%$ of latents, it moves a constant $0.0625$ units ($0.25$\,m, so $1$\,m/s) per latent; it turns by at most $7.5^\circ$ per latent, pitches by up to about $29^\circ$ and never rolls.

\paragraph{VAE Encoding.} Video is resampled from $24$ to $16$\,fps by nearest frame, with the same map for images and poses, and resized to $864\times480$ with Lanczos filtering and no crop, giving intrinsics $f_x = 432$, $f_y = 426.7$, $c_x = 432$ and $c_y = 240$. The Wan2.2-TI2V-5B VAE \citep{wan} encodes it with $16\times$ spatial and $4\times$ temporal compression into $48$ channels, so a $15$\,min episode holds $3{,}600$ latents. Every fourth latent also carries a caption of its frame, written by Qwen3-VL-8B and embedded with umT5, the text encoder of Wan2.2, and a single-frame encoding of that frame, so that the first frame of a training sample enters exactly as a photograph does at inference. No episode is filtered by camera motion, and the $41$-latent floor of the sampler removes none.

\paragraph{Split.} Training uses $380$ episodes from $51$ walks over $50$ scenes, $95.0$\,h in total, frozen as a list because the corpus keeps growing. The last episode of every training scene that has more than one is held out for the seen split below ($44$ episodes, $11.0$\,h; six scenes have a single episode and keep it), and the four scenes with the shortest captures are held out entirely for the unseen split.

\subsection{Revisit Statistics}
\label{app:revisitstats}
\label{app:windowcount}

\paragraph{Definition.} Two latents close in pose and far apart in time are not by themselves a revisit, since a camera that pauses satisfies both conditions without going anywhere. A latent $i$ is a return to an earlier latent $j$ when the camera centres are close, $\lVert t_i - t_j \rVert \le \tau$, and the camera left in between, $\max_{j < l < i} \lVert t_l - t_j \rVert > \varepsilon$, with both thresholds in the units of the corpus. For every returning latent we record its shortest interval, $(i - j^\star)/4$ seconds with $j^\star$ the latest qualifying $j$, since a training window must contain both ends to supervise the return. We count every latent of the $380$ training episodes, so no interval exceeds $15$\,min. One unit is $4$\,m, so $\tau = 0.5$ is $2$\,m and $\varepsilon = 1.5$ is $6$\,m.

\paragraph{Why This Differs From the Evaluation Metric.} \hyperref[tab:returndefs]{Table~\ref*{tab:returndefs}} sets the definitions side by side: the corpus count leaves out the heading because it asks whether a training window could hold both visits, not whether two frames show the same view. Adding the heading can only drop returning latents and move each partner earlier, so the intervals below are a lower bound on the supervision gap; at $\varepsilon = 3.0$ the shortest one moves from $13.25$ to $28.25$\,s.

\input{tables/returndefs}

\begin{figure}[!htbp]
\centering
\includegraphics[width=0.8\textwidth]{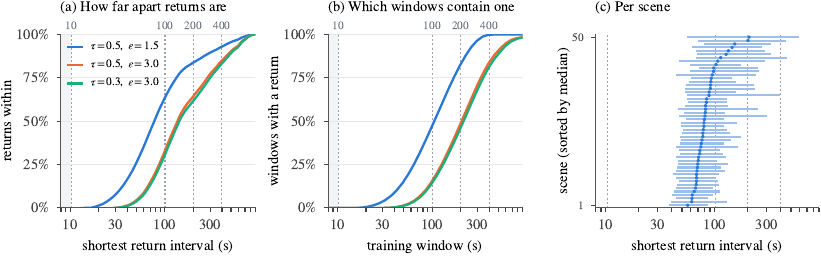}
\caption{\textbf{Revisits in the training corpus.} \textbf{(a)} Shortest return interval for
three definitions. \textbf{(b)} Share of training windows of each length that contain a
return. \textbf{(c)} Per scene, sorted by median, at $\tau = 0.5$, $\varepsilon = 1.5$.
Dashed lines mark spans of $10$, $100$, $200$ and $400$\,s.}
\label{fig:revisitstats}
\end{figure}

\paragraph{Findings.} At $\tau = 0.5$ and $\varepsilon = 1.5$ the corpus holds $528{,}231$ returning latents. None returns within $13.25$\,s, so no $10$\,s training window contains a return, and the median return takes $77.5$\,s (\hyperref[fig:revisitstats]{Figure~\ref*{fig:revisitstats}}a). Stricter definitions lengthen both: the shortest interval becomes $28.25$\,s and the median $136$\,s at $\varepsilon = 3.0$. A window must therefore reach well past ten seconds to contain a return: $3.8\%$ of $30$\,s windows and $22\%$ of $60$\,s windows contain one, against $47\%$ of $100$\,s windows and nearly every $400$\,s window, and $64\%$ of returns fit within $100$\,s, $84\%$ within $200$\,s and $93\%$ within $400$\,s (\hyperref[fig:revisitstats]{Figure~\ref*{fig:revisitstats}}b). The pattern holds across scenes, whose median intervals range from $56$ to $206$\,s (\hyperref[fig:revisitstats]{Figure~\ref*{fig:revisitstats}}c).

%% file: tables/returndefs.tex
\begin{table}[!htbp]
\caption{\textbf{Three definitions of a return.} Distances in the units of the corpus (one unit is $4$\,m).}
\label{tab:returndefs}
\centering\scriptsize
\setlength{\tabcolsep}{5pt}
\renewcommand{\arraystretch}{1.0}
\begin{tabular}{@{}>{\raggedright\arraybackslash}p{2.9cm} l l >{\raggedright\arraybackslash}p{3.6cm} >{\raggedright\arraybackslash}p{2.9cm}@{}}
\toprule
Definition & Position & Heading & Also required & Used for \\
\midrule
Corpus return (\hyperref[app:revisitstats]{App.~\ref*{app:revisitstats}}) & $\le \tau = 0.5$ & --- & camera away $> \varepsilon = 1.5$ in between & could one window hold both visits \\
Evaluation return (\hyperref[app:eval]{App.~\ref*{app:eval}}) & $\le 0.5$ & $\le 15^\circ$ & $\ge 8$\,s apart; away $> 1.5$ or $90^\circ$ & Revisit and Gain \\
Fold-back (\hyperref[app:eval]{App.~\ref*{app:eval}}) & \multicolumn{3}{>{\raggedright\arraybackslash}p{5.4cm}}{an evaluation return whose path back retraces the path out within $0.5$} & camera-following check \\
\bottomrule
\end{tabular}
\end{table}

%% file: appendix/03_experiments.tex
\section{Additional Experiments}
\label{app:analysis}

We first give the remaining details of the revisit metrics of \hyperref[sec:exp]{Sec.~\ref*{sec:exp}}. The analyses that follow report the seen-scene ablation, where the memory of a model trained on $10$\,s clips comes from and how the gain splits by return interval, the cost of a training step, the baselines left out of \hyperref[tab:sota]{Table~\ref*{tab:sota}}, how closely each system follows the camera path, what the bank must hold at inference and why image quality falls with span; the last section is a qualitative comparison.

\subsection{Evaluation Metrics}
\label{app:eval}

\hyperref[sec:setup]{Sec.~\ref*{sec:setup}} defines a return, Revisit and Gain; this section gives the
rest of the rule. A pair $(i, j)$ counts as a return only if the camera has left in between by more than
$1.5$ units or $90^\circ$, and consecutive pairs from one pass are thinned to a single return, so a slow
walk past a place contributes once rather than once per latent. The control pairs matched to a return at
its own time gap are those whose cameras differ by more than $1$ unit or $30^\circ$.

The correlation is taken over RGB values: each frame, resized to $864\times480$ where a system renders at
another size, is flattened with its three channels into one vector, and the two vectors are centered on
their own means. This is a deliberately literal measure. It rewards a model for putting the same
intensities back in the same places and is therefore sensitive to spatial misalignment, so a model that
draws the right scene from a slightly wrong pose is penalized alongside one that draws the wrong scene.
\hyperref[app:camfollow]{App.~\ref*{app:camfollow}} reports how closely each system follows the prescribed
path, which is what separates those two failures.

The DINO column of \hyperref[tab:headline]{Table~\ref*{tab:headline}} scores the same returns in
feature space. Each frame is resized to $224\times224$, normalized with the ImageNet statistics and
passed through DINO ViT-B/16 \citep{dino}, the weights VBench uses for subject consistency; the feature
is the CLS token of the final layer, L2-normalized, and the similarity of two frames is the dot product
of their features. It is averaged over the returns of a video and then over videos, as for Revisit. The
control pairs are the same as for Gain, so a DINO gain is the return similarity minus the control
similarity; the table reports the return similarity. Because the CLS token pools over the whole frame,
this measure forgives a small shift in pose that the pixel correlation penalizes.

After a few seconds of open-loop rollout a generated scene is no longer pixel-aligned with the rendered
ground truth, so the PSNR and LPIPS we report against it measure how close a rollout stays to it rather
than reconstruction.

We call a return $(i, j)$ a \emph{fold-back} when the path back retraces the path out: with $\mu = (i + j)/2$, the positions over $[\mu, i]$, read backwards, follow those over $[j, \mu]$ within $0.5$ units in discrete Fr\'echet distance, the tolerance that already defines a return. Of the $173$ returns on the web walks, $125$ ($72\%$), in all $20$ clips, are fold-backs, and $151$ ($87\%$) leave the place by turning away by more than $90^\circ$ rather than by walking off; at $1$ unit the fold-backs are $98\%$.

\subsection{Ablation on Seen Scenes}
\label{app:seen}

\hyperref[tab:headline_seen]{Table~\ref*{tab:headline_seen}} gives the seen split of \hyperref[tab:headline]{Table~\ref*{tab:headline}}: clips cut from held-out episodes of the training scenes with new trajectory.

\input{tables/headline_seen}

\hyperref[tab:sota_std]{Table~\ref*{tab:sota_std}} repeats \hyperref[tab:sota]{Table~\ref*{tab:sota}} with the standard deviation of every entry over the five rollout seeds.

\input{tables/sota_std}

\subsection{Where the \texorpdfstring{10\,s}{10 s} Model's Memory Comes From}
\label{app:tenswhy}

The $10$\,s top-$K$ run of \hyperref[tab:headline]{Table~\ref*{tab:headline}} reaches most of the memory of
the $100$\,s run: its DINO gain, the DINO similarity at a return minus that of the control pairs, which \hyperref[tab:headline]{Table~\ref*{tab:headline}} does not list, is $0.144$, $0.161$ and $0.166$ on seen scenes, unseen scenes and web photographs, against $0.227$, $0.240$ and $0.265$ for the $100$\,s run ($63\%$ to $67\%$), and its pixel Gain in \hyperref[tab:headline]{Tables~\ref*{tab:headline}} and~\ref{tab:headline_seen} is $71\%$ to $85\%$ of the $100$\,s run's. Ten seconds is its training span, not its reach at
inference. Every run retrieves from the whole generated history, with no limit on how old a candidate may be,
so at inference the two runs draw on the same frames and differ only in how old the frames they were trained
to read could be: on the seen clips both retrieve frames about $14.5$\,s old on average and up to $59$\,s old. A model trained on $10$\,s clips can use older frames because nothing in its input tells
it a frame's age: retrieval is by pose, and every bank entry shares one temporal index
(\hyperref[sec:masks]{Sec.~\ref*{sec:masks}}), so what it learns on frames less than $10$\,s old transfers
to frames retrieved from much further back.

\subsection{Gain by Return Interval and Type}
\label{app:gapsplit}

\hyperref[tab:tenswhy]{Table~\ref*{tab:tenswhy}} splits the Gain by the interval between the two visits and
by the kind of return. Three things follow. The $10$\,s top-$K$ run has a real gain at returns $20$ to
$60$\,s apart, far outside its training window, so it does use frames retrieved from long ago. What
long-span training adds lies at returns more than $10$\,s apart: on web photographs the $100$\,s run leads
by about $0.09$ at both $10$--$20$\,s and $20$--$60$\,s, while at $8$--$10$\,s the $10$\,s run is slightly
ahead on every split. And returns to the starting pose carry a gain even without retrieval, because the
first frame is always attended, so they do not isolate long-range memory; the mid-path returns are the
cleaner test.

\input{tables/tenswhy}

\subsection{Training Cost}
\label{app:cost}

\hyperref[tab:cost]{Table~\ref*{tab:cost}} reports the cost of the eight runs of \hyperref[tab:headline]{Table~\ref*{tab:headline}}, each trained for $6{,}000$ steps on $32$ H200 GPUs with one sample per GPU. Time per step includes data loading, retrieval, mask construction and checkpointing, and is pooled over the training segments between the supervisor's restarts; peak memory is the largest allocation on one GPU.

\input{tables/cost}

Adding the bank raises the cost once: the sequence grows from $41$ slots to a median of $65$ to $72$, and time per step and peak memory rise by factors of $2.3$ and $1.8$ over the sliding window. Beyond that, the span matters little. Quadrupling the maximum span from $100$ to $400$\,s leaves the sequence at its bound of $105$ slots and peak memory unchanged at $28.45$\,GB, and adds $12\%$ to the time per step, spent on reading and ranking a longer history and on a median sequence seven slots longer. For comparison, attending to every earlier chunk of a $10$\,s window already costs $15\%$ more per step than the sliding window. A $400$\,s window trained directly would hold $1{,}601$ latents, $15\times$ the largest sequence here. Sharing one bank across the scored block instead of retrieving per chunk is the one arrangement that is cheaper than ours, at $51$ slots and $4.17$\,s a step, and it is the arm that loses most of the memory (\hyperref[sec:span]{Sec.~\ref*{sec:span}}).

\subsection{Baselines Not in Table~\ref{tab:sota}}
\label{app:omitted}

The two systems closest to ours are absent from \hyperref[tab:sota]{Table~\ref*{tab:sota}} for different reasons. WorldMem \citep{worldmem} releases its code and weights, but it is a Minecraft model driven by a discrete action vocabulary: it cannot be given a photograph and a continuous camera path, and a version retrained on our corpus would no longer be the system its numbers describe. Context-as-Memory \citep{contextasmemory} has released its dataset but neither its code nor its weights, so it cannot be run. What can be compared is its design: one row of \hyperref[tab:headline]{Table~\ref*{tab:headline}} retrieves once for the whole scored block instead of once per chunk, so every chunk reads the same frames, as in Context-as-Memory, with everything else as in our runs.

\subsection{Camera Following}
\label{app:camfollow}

A model that ignored the camera path could score well on Revisit by drawing the same view all the time, so we check how closely each system turns when the path turns. We avoid pose reconstruction, which would fail on some clips, and measure the image instead: for every quarter second we estimate the horizontal shift between consecutive frames by phase correlation, and compare it with the yaw the path prescribes. \emph{Correlation} is the Pearson correlation between the prescribed yaw change and the measured shift, and \emph{Direction} the fraction of steps whose image shifts the way the path turns; neither depends on a system's resolution, frame rate or field of view. The implied field of view reads image shift per degree of yaw as a pinhole focal length. The top row applies the same measurement to the rendered frames of the seen split along their engine poses, which move the way the web walks do.

\input{tables/camfollow}

Every system turns the right way almost always, so direction separates little. Our runs track the prescribed yaw with a mean correlation of $0.961$, $0.95$ to $0.97$ across the six runs, and realize the turns at about the scale the paths assume, with a mean implied field of view of $90.6^\circ$ against $89^\circ$ for rendered frames. The open world models follow the path less closely, with correlations of $0.70$ to $0.88$,
and read the prescribed yaw at their own scale: LingBot-World~2.0 turns as though its camera
were narrower than the paths assume, $73^\circ$, and DreamX-World~1.0 and HY-WorldPlay~1.5 as
though it were far wider, $133^\circ$ and $124^\circ$. HY-WorldPlay~1.5 follows least closely of
all, so part of its low Revisit may be path following rather than memory; the others turn
the right way on $95\%$ of steps or more. Systems can therefore differ in the scale at which
they realize the prescribed motion, which the last column records. That difference matters little for most returns, which retrace the path out (\hyperref[app:eval]{App.~\ref*{app:eval}}): a model that applies a roughly consistent gain to its turns and steps comes back close to the pose of its first visit, and a lower Revisit there reflects what it draws rather than where it stands.

\subsection{Bank Contents}
\label{app:bankuse}

An unused retrieval mechanism is the standing failure mode in this area, so we test use directly. Holding the checkpoint, trajectory and seed fixed, we vary \emph{only the contents of the bank's slots} at inference and keep everything else in the sequence as trained. Each arm answers a different question. If the bank were ignored, filling it from \emph{another episode} would change nothing; if it were read but unnecessary, \emph{emptying} it would cost nothing; and if what the criterion selects mattered only as extra frames, $K$ \emph{random} frames from the same history would do as well as the retrieved ones.

\input{tables/bankuse}

\hyperref[tab:bankuse]{Table~\ref*{tab:bankuse}} answers all three, on the returns that need the bank. Mid-path Revisit falls from $0.440$ with the retrieved bank to $0.207$ with random frames from the same history, $0.105$ with an empty bank and $0.073$ with frames from another episode, and Gain from $0.319$ to $0.109$, $0.019$ and $0.016$. A bank from the wrong episode leaves the model no better off than an empty one and far below the retrieved bank, so the model reads what the bank holds rather than treating it as filler. Retrieved frames also beat random ones from the same history by more than a factor of two on both columns, so what the criterion selects matters beyond the number of frames it supplies.

\input{figures/fig_sota_more}

\subsection{Why Image Quality Falls with Span}
\label{app:selfcond}

\hyperref[tab:selfcond]{Table~\ref*{tab:selfcond}} replaces what a rollout conditions on,
all else as in \hyperref[sec:setup]{Sec.~\ref*{sec:setup}}. At inference every slot but the first
frame is generated; in training all are clean. Given a clean history, every model from $10$ to $200$\,s comes within about a point of the ground-truth videos'
imaging quality, so the loss comes from conditioning on generated frames. Every model pays it, two to three times more on unseen scenes, and only the $200$\,s run
loses clearly more. With a bank, most of it comes through the bank: a ground-truth bank alone
recovers three quarters to four fifths of the gap on unseen scenes. Guidance is not the cause:
without it, at seed $0$, the $200$\,s run still scores below $10$\,s top-$K$.

\input{tables/selfcond}

\makeatletter\setlength{\@fptop}{0pt}\makeatother
\input{figures/fig_teaser_grid}

%% file: tables/headline_seen.tex
\begin{table}[!htbp]
\caption{\textbf{Ablation of \hyperref[tab:headline]{Table~\ref*{tab:headline}} on seen scenes}, held-out episodes of the training scenes along new trajectories. Same runs, protocol and columns; best in bold, second underlined.}
\label{tab:headline_seen}
\centering\scriptsize
\setlength{\tabcolsep}{4pt}
\renewcommand{\arraystretch}{1.0}
\resizebox{\textwidth}{!}{%
\begin{tabular}{@{}l cc ccc ccc@{}}
\toprule
& \multicolumn{2}{c}{Fidelity to GT} & \multicolumn{3}{c}{Memory} & \multicolumn{3}{c}{VBench} \\
\cmidrule(lr){2-3}\cmidrule(lr){4-6}\cmidrule(l){7-9}
Training & PSNR$\uparrow$ & LPIPS$\downarrow$ & Revisit$\uparrow$ & Gain$\uparrow$ & DINO$\uparrow$ & Subj.$\uparrow$ & Backg.$\uparrow$ & Imag.$\uparrow$ \\
\midrule
\rowcolor{grouprow}
\multicolumn{9}{@{}l}{\textit{Seen scenes, new trajectories}} \\
$10$\,s, no retrieval & 11.48{\fontsize{5}{5}\selectfont$\,\pm$.18} & 0.610{\fontsize{5}{5}\selectfont$\,\pm$.008} & 0.333{\fontsize{5}{5}\selectfont$\,\pm$.017} & 0.074{\fontsize{5}{5}\selectfont$\,\pm$.021} & 0.633{\fontsize{5}{5}\selectfont$\,\pm$.013} & 0.801{\fontsize{5}{5}\selectfont$\,\pm$.002} & 0.882{\fontsize{5}{5}\selectfont$\,\pm$.003} & 0.648{\fontsize{5}{5}\selectfont$\,\pm$.005} \\
$10$\,s, $10$-chunk window & 11.08{\fontsize{5}{5}\selectfont$\,\pm$.14} & 0.641{\fontsize{5}{5}\selectfont$\,\pm$.008} & 0.321{\fontsize{5}{5}\selectfont$\,\pm$.019} & 0.081{\fontsize{5}{5}\selectfont$\,\pm$.017} & 0.599{\fontsize{5}{5}\selectfont$\,\pm$.008} & 0.774{\fontsize{5}{5}\selectfont$\,\pm$.004} & 0.875{\fontsize{5}{5}\selectfont$\,\pm$.002} & 0.635{\fontsize{5}{5}\selectfont$\,\pm$.005} \\
$10$\,s, top-$K$, no bank & 11.41{\fontsize{5}{5}\selectfont$\,\pm$.28} & \second{0.609}{\fontsize{5}{5}\selectfont$\,\pm$.008} & 0.427{\fontsize{5}{5}\selectfont$\,\pm$.026} & 0.186{\fontsize{5}{5}\selectfont$\,\pm$.026} & 0.744{\fontsize{5}{5}\selectfont$\,\pm$.011} & 0.799{\fontsize{5}{5}\selectfont$\,\pm$.006} & 0.883{\fontsize{5}{5}\selectfont$\,\pm$.004} & 0.654{\fontsize{5}{5}\selectfont$\,\pm$.008} \\
$100$\,s, per-segment retrieval, bank & 11.92{\fontsize{5}{5}\selectfont$\,\pm$.10} & \second{0.609}{\fontsize{5}{5}\selectfont$\,\pm$.006} & 0.454{\fontsize{5}{5}\selectfont$\,\pm$.025} & 0.226{\fontsize{5}{5}\selectfont$\,\pm$.031} & 0.761{\fontsize{5}{5}\selectfont$\,\pm$.020} & 0.797{\fontsize{5}{5}\selectfont$\,\pm$.005} & 0.879{\fontsize{5}{5}\selectfont$\,\pm$.003} & \second{0.658}{\fontsize{5}{5}\selectfont$\,\pm$.008} \\
$100$\,s, top-$K$, ordered bank & 12.13{\fontsize{5}{5}\selectfont$\,\pm$.18} & \best{0.603}{\fontsize{5}{5}\selectfont$\,\pm$.004} & \second{0.536}{\fontsize{5}{5}\selectfont$\,\pm$.017} & \second{0.280}{\fontsize{5}{5}\selectfont$\,\pm$.019} & \second{0.834}{\fontsize{5}{5}\selectfont$\,\pm$.007} & 0.800{\fontsize{5}{5}\selectfont$\,\pm$.002} & \second{0.884}{\fontsize{5}{5}\selectfont$\,\pm$.002} & 0.639{\fontsize{5}{5}\selectfont$\,\pm$.011} \\
$100$\,s, top-$K$, bank & \second{12.21}{\fontsize{5}{5}\selectfont$\,\pm$.15} & 0.619{\fontsize{5}{5}\selectfont$\,\pm$.003} & 0.492{\fontsize{5}{5}\selectfont$\,\pm$.021} & 0.258{\fontsize{5}{5}\selectfont$\,\pm$.027} & 0.820{\fontsize{5}{5}\selectfont$\,\pm$.010} & \second{0.804}{\fontsize{5}{5}\selectfont$\,\pm$.001} & 0.880{\fontsize{5}{5}\selectfont$\,\pm$.002} & 0.646{\fontsize{5}{5}\selectfont$\,\pm$.015} \\
$200$\,s, top-$K$, bank & \best{12.24}{\fontsize{5}{5}\selectfont$\,\pm$.13} & \best{0.603}{\fontsize{5}{5}\selectfont$\,\pm$.005} & \best{0.548}{\fontsize{5}{5}\selectfont$\,\pm$.006} & \best{0.308}{\fontsize{5}{5}\selectfont$\,\pm$.022} & \best{0.858}{\fontsize{5}{5}\selectfont$\,\pm$.009} & \best{0.806}{\fontsize{5}{5}\selectfont$\,\pm$.006} & \best{0.886}{\fontsize{5}{5}\selectfont$\,\pm$.003} & 0.637{\fontsize{5}{5}\selectfont$\,\pm$.012} \\
$400$\,s, top-$K$, bank & 11.72{\fontsize{5}{5}\selectfont$\,\pm$.09} & 0.621{\fontsize{5}{5}\selectfont$\,\pm$.005} & 0.487{\fontsize{5}{5}\selectfont$\,\pm$.015} & 0.250{\fontsize{5}{5}\selectfont$\,\pm$.014} & 0.820{\fontsize{5}{5}\selectfont$\,\pm$.010} & 0.801{\fontsize{5}{5}\selectfont$\,\pm$.004} & 0.882{\fontsize{5}{5}\selectfont$\,\pm$.004} & \best{0.684}{\fontsize{5}{5}\selectfont$\,\pm$.010} \\
\bottomrule
\end{tabular}}
\end{table}

%% file: tables/sota_std.tex
\begin{table}[!htbp]
\caption{\textbf{\hyperref[tab:sota]{Table~\ref*{tab:sota}} with its spreads.} Mean over five rollout seeds, with the standard deviation of the per-seed means in small type.}
\label{tab:sota_std}
\centering\scriptsize
\setlength{\tabcolsep}{4pt}
\renewcommand{\arraystretch}{1.12}
\resizebox{\textwidth}{!}{%
\begin{tabular}{@{}l cccc cccc ccc@{}}
\toprule
& \multicolumn{4}{c}{Starting pose} & \multicolumn{4}{c}{Mid-path} & \multicolumn{3}{c}{VBench} \\
\cmidrule(lr){2-5}\cmidrule(lr){6-9}\cmidrule(l){10-12}
Method & Revisit$\uparrow$ & Gain$\uparrow$ & DINO$\uparrow$ & D.\,Gain$\uparrow$ & Revisit$\uparrow$ & Gain$\uparrow$ & DINO$\uparrow$ & D.\,Gain$\uparrow$ & Subj.$\uparrow$ & Backg.$\uparrow$ & Imag.$\uparrow$ \\
\midrule
LingBot-World~2.0 & 0.143{\fontsize{5}{5}\selectfont$\,\pm$.026} & $-$0.006{\fontsize{5}{5}\selectfont$\,\pm$.032} & 0.526{\fontsize{5}{5}\selectfont$\,\pm$.013} & $-$0.001{\fontsize{5}{5}\selectfont$\,\pm$.029} & 0.147{\fontsize{5}{5}\selectfont$\,\pm$.017} & 0.011{\fontsize{5}{5}\selectfont$\,\pm$.015} & 0.546{\fontsize{5}{5}\selectfont$\,\pm$.020} & 0.017{\fontsize{5}{5}\selectfont$\,\pm$.019} & 0.765{\fontsize{5}{5}\selectfont$\,\pm$.004} & 0.852{\fontsize{5}{5}\selectfont$\,\pm$.004} & 0.741{\fontsize{5}{5}\selectfont$\,\pm$.004} \\
DreamX-World~1.0 & 0.146{\fontsize{5}{5}\selectfont$\,\pm$.009} & 0.033{\fontsize{5}{5}\selectfont$\,\pm$.021} & 0.547{\fontsize{5}{5}\selectfont$\,\pm$.016} & $-$0.045{\fontsize{5}{5}\selectfont$\,\pm$.013} & 0.109{\fontsize{5}{5}\selectfont$\,\pm$.006} & 0.009{\fontsize{5}{5}\selectfont$\,\pm$.007} & 0.591{\fontsize{5}{5}\selectfont$\,\pm$.013} & 0.007{\fontsize{5}{5}\selectfont$\,\pm$.013} & 0.781{\fontsize{5}{5}\selectfont$\,\pm$.001} & 0.868{\fontsize{5}{5}\selectfont$\,\pm$.001} & 0.684{\fontsize{5}{5}\selectfont$\,\pm$.006} \\
HY-WorldPlay~1.5$^\dagger$ & 0.243{\fontsize{5}{5}\selectfont$\,\pm$.014} & 0.040{\fontsize{5}{5}\selectfont$\,\pm$.034} & 0.679{\fontsize{5}{5}\selectfont$\,\pm$.022} & 0.025{\fontsize{5}{5}\selectfont$\,\pm$.031} & 0.194{\fontsize{5}{5}\selectfont$\,\pm$.011} & 0.037{\fontsize{5}{5}\selectfont$\,\pm$.010} & 0.674{\fontsize{5}{5}\selectfont$\,\pm$.004} & 0.027{\fontsize{5}{5}\selectfont$\,\pm$.004} & \best{0.839}{\fontsize{5}{5}\selectfont$\,\pm$.002} & \best{0.908}{\fontsize{5}{5}\selectfont$\,\pm$.001} & 0.702{\fontsize{5}{5}\selectfont$\,\pm$.008} \\
Matrix-Game~3.0 & 0.254{\fontsize{5}{5}\selectfont$\,\pm$.018} & 0.075{\fontsize{5}{5}\selectfont$\,\pm$.017} & 0.717{\fontsize{5}{5}\selectfont$\,\pm$.013} & 0.042{\fontsize{5}{5}\selectfont$\,\pm$.019} & 0.283{\fontsize{5}{5}\selectfont$\,\pm$.012} & 0.089{\fontsize{5}{5}\selectfont$\,\pm$.020} & \second{0.760}{\fontsize{5}{5}\selectfont$\,\pm$.016} & 0.100{\fontsize{5}{5}\selectfont$\,\pm$.016} & \second{0.813}{\fontsize{5}{5}\selectfont$\,\pm$.004} & \second{0.895}{\fontsize{5}{5}\selectfont$\,\pm$.002} & \best{0.761}{\fontsize{5}{5}\selectfont$\,\pm$.003} \\
Infinite-World & 0.109{\fontsize{5}{5}\selectfont$\,\pm$.022} & 0.019{\fontsize{5}{5}\selectfont$\,\pm$.031} & 0.610{\fontsize{5}{5}\selectfont$\,\pm$.026} & 0.013{\fontsize{5}{5}\selectfont$\,\pm$.022} & 0.168{\fontsize{5}{5}\selectfont$\,\pm$.017} & 0.070{\fontsize{5}{5}\selectfont$\,\pm$.012} & 0.646{\fontsize{5}{5}\selectfont$\,\pm$.013} & 0.052{\fontsize{5}{5}\selectfont$\,\pm$.007} & 0.789{\fontsize{5}{5}\selectfont$\,\pm$.005} & 0.856{\fontsize{5}{5}\selectfont$\,\pm$.003} & \second{0.746}{\fontsize{5}{5}\selectfont$\,\pm$.006} \\
CaR & \second{0.350}{\fontsize{5}{5}\selectfont$\,\pm$.017} & \second{0.220}{\fontsize{5}{5}\selectfont$\,\pm$.048} & \best{0.843}{\fontsize{5}{5}\selectfont$\,\pm$.032} & \best{0.305}{\fontsize{5}{5}\selectfont$\,\pm$.028} & \best{0.450}{\fontsize{5}{5}\selectfont$\,\pm$.010} & \second{0.312}{\fontsize{5}{5}\selectfont$\,\pm$.035} & 0.759{\fontsize{5}{5}\selectfont$\,\pm$.008} & \second{0.211}{\fontsize{5}{5}\selectfont$\,\pm$.011} & 0.787{\fontsize{5}{5}\selectfont$\,\pm$.007} & 0.846{\fontsize{5}{5}\selectfont$\,\pm$.003} & 0.715{\fontsize{5}{5}\selectfont$\,\pm$.003} \\
\rowcolor{ourrow}
\multicolumn{1}{@{}>{\columncolor{ourrow}[0pt][\tabcolsep]}l}{\textbf{\ours (ours)}} & \best{0.365}{\fontsize{5}{5}\selectfont$\,\pm$.013} & \best{0.230}{\fontsize{5}{5}\selectfont$\,\pm$.024} & \second{0.787}{\fontsize{5}{5}\selectfont$\,\pm$.017} & \second{0.219}{\fontsize{5}{5}\selectfont$\,\pm$.031} & \second{0.447}{\fontsize{5}{5}\selectfont$\,\pm$.011} & \best{0.329}{\fontsize{5}{5}\selectfont$\,\pm$.015} & \best{0.816}{\fontsize{5}{5}\selectfont$\,\pm$.020} & \best{0.258}{\fontsize{5}{5}\selectfont$\,\pm$.015} & 0.778{\fontsize{5}{5}\selectfont$\,\pm$.007} & 0.857{\fontsize{5}{5}\selectfont$\,\pm$.004} & \multicolumn{1}{>{\columncolor{ourrow}[\tabcolsep][0pt]}c@{}}{0.705{\fontsize{5}{5}\selectfont$\,\pm$.012}} \\
\bottomrule
\end{tabular}}
\end{table}

%% file: tables/tenswhy.tex
\begin{table}[!htbp]
\caption{\textbf{Gain by return interval and type.} Pixel Gain with the five rollout seeds pooled;
seen and unseen scenes as in \hyperref[tab:headline]{Table~\ref*{tab:headline}}, web photographs as in
\hyperref[tab:sota]{Table~\ref*{tab:sota}}. \emph{Start}: the earlier frame lies in the first second;
\emph{mid}: every other return. Pooling pairs rather than clips makes \emph{all} differ slightly from
\hyperref[tab:headline]{Table~\ref*{tab:headline}}. For the $100$\,s run the three interval bins hold
$25$, $140$ and $470$ returns on seen scenes,
$30$, $105$ and $230$ on unseen ones and
$20$, $100$ and $605$ on web photographs.}
\label{tab:tenswhy}
\centering\scriptsize
\setlength{\tabcolsep}{5pt}
\renewcommand{\arraystretch}{1.0}
\begin{tabular}{@{}l ccc ccc@{}}
\toprule
& \multicolumn{3}{c}{Interval between the visits} & \multicolumn{3}{c}{Type} \\
\cmidrule(lr){2-4}\cmidrule(l){5-7}
Training & $8$--$10$\,s & $10$--$20$\,s & $20$--$60$\,s & Start & Mid & All \\
\midrule
\rowcolor{grouprow}
\multicolumn{7}{@{}l}{\textit{Seen scenes}} \\
$10$\,s, no retrieval & 0.230 & 0.064 & 0.073 & 0.160 & 0.067 & 0.078 \\
$10$\,s, top-$K$ & 0.397 & 0.174 & 0.185 & 0.130 & 0.199 & 0.191 \\
$100$\,s, top-$K$, bank & 0.337 & 0.216 & 0.253 & 0.213 & 0.252 & 0.248 \\
\addlinespace[1pt]
\rowcolor{grouprow}
\multicolumn{7}{@{}l}{\textit{Unseen scenes}} \\
$10$\,s, no retrieval & 0.100 & 0.069 & 0.016 & 0.220 & 0.031 & 0.038 \\
$10$\,s, top-$K$ & 0.466 & 0.241 & 0.242 & 0.250 & 0.260 & 0.260 \\
$100$\,s, top-$K$, bank & 0.414 & 0.279 & 0.270 & 0.222 & 0.288 & 0.285 \\
\addlinespace[1pt]
\rowcolor{grouprow}
\multicolumn{7}{@{}l}{\textit{Web photographs}} \\
$10$\,s, no retrieval & 0.062 & 0.015 & 0.065 & 0.135 & 0.046 & 0.058 \\
$10$\,s, top-$K$ & 0.515 & 0.284 & 0.221 & 0.182 & 0.247 & 0.238 \\
$100$\,s, top-$K$, bank & 0.458 & 0.374 & 0.307 & 0.249 & 0.332 & 0.321 \\
\bottomrule
\end{tabular}
\end{table}

%% file: tables/cost.tex
\begin{table}[!htbp]
\caption{\textbf{Training cost of the runs in
\hyperref[tab:headline]{Table~\ref*{tab:headline}}}, from their logs. Slots: median and
maximum over logged steps. Wall clock is the step time over the $6{,}000$ steps every run trains for.}
\label{tab:cost}
\centering\scriptsize
\setlength{\tabcolsep}{5pt}
\renewcommand{\arraystretch}{1.0}
\begin{tabular}{@{}l cccc@{}}
\toprule
Training & Sequence slots & Step time (s) & Wall clock (h) & Peak mem.\ (GB) \\
\midrule
$10$\,s, no retrieval & $41$ / $41$ & $3.49$ & $5.8$ & $16.05$ \\
$10$\,s, $10$-chunk window & $41$ / $41$ & $4.02$ & $6.7$ & $16.05$ \\
$10$\,s, top-$K$, no bank & $41$ / $41$ & $3.68$ & $6.1$ & $16.05$ \\
$100$\,s, per-segment retrieval, bank & $51$ / $51$ & $4.17$ & $6.9$ & $17.99$ \\
$100$\,s, top-$K$, ordered bank & $65$ / $105$ & $8.18$ & $13.6$ & $28.45$ \\
\rowcolor{ourrow}
\multicolumn{1}{@{}>{\columncolor{ourrow}[0pt][\tabcolsep]}l}{$100$\,s, top-$K$, bank} & $65$ / $105$ & $8.02$ & $13.4$ & \multicolumn{1}{>{\columncolor{ourrow}[\tabcolsep][0pt]}c@{}}{$28.45$} \\
\rowcolor{ourrow}
\multicolumn{1}{@{}>{\columncolor{ourrow}[0pt][\tabcolsep]}l}{$200$\,s, top-$K$, bank} & $70$ / $105$ & $8.67$ & $14.4$ & \multicolumn{1}{>{\columncolor{ourrow}[\tabcolsep][0pt]}c@{}}{$28.45$} \\
\rowcolor{ourrow}
\multicolumn{1}{@{}>{\columncolor{ourrow}[0pt][\tabcolsep]}l}{$400$\,s, top-$K$, bank} & $72$ / $105$ & $9.00$ & $15.0$ & \multicolumn{1}{>{\columncolor{ourrow}[\tabcolsep][0pt]}c@{}}{$28.45$} \\
\bottomrule
\end{tabular}
\end{table}

%% file: tables/camfollow.tex
\begin{table}[!htbp]
\caption{\textbf{Camera following}: yaw only, web
photographs, first $60$\,s, rollout seed $0$. $^\dagger$Field of view implied by image
shift per degree of prescribed yaw; the paths assume $90^\circ$. \ours: mean over six runs of \hyperref[tab:headline]{Table~\ref*{tab:headline}}, the three trained on $10$\,s clips and the $100$, $200$ and $400$\,s runs.}
\label{tab:camfollow}
\centering\scriptsize
\setlength{\tabcolsep}{6pt}
\renewcommand{\arraystretch}{1.0}
\begin{tabular}{@{}l ccc@{}}
\toprule
System & Correlation$\uparrow$ & Direction$\uparrow$ & Implied hfov$^\dagger$ \\
\midrule
\rowcolor{grouprow}
\multicolumn{4}{@{}l}{\textit{Calibration}} \\
Rendered frames, engine poses & $0.906$ & $0.974$ & $89.1^\circ$ \\
\rowcolor{grouprow}
\multicolumn{4}{@{}l}{\textit{Open world models}} \\
LingBot-World~2.0 & $0.855$ & $0.966$ & $73.3^\circ$ \\
DreamX-World~1.0 & $0.825$ & $0.955$ & $132.5^\circ$ \\
HY-WorldPlay~1.5 & $0.697$ & $0.909$ & $123.6^\circ$ \\
Matrix-Game~3.0 & $0.849$ & $0.968$ & $98.4^\circ$ \\
Infinite-World & $0.877$ & $0.958$ & $107.5^\circ$ \\
\midrule
\textbf{\ours} (mean of $6$ runs) & $0.961$ & $0.986$ & $90.6^\circ$ \\
\bottomrule
\end{tabular}
\end{table}

%% file: tables/bankuse.tex
\begin{table}[!htbp]
\caption{\textbf{Bank contents at inference.} The $100$\,s run with only the bank's frames
replaced: its top-$K$ (ours), nothing, $K$ random frames from the same history, or $K$
frames from another episode. Web photographs, first $60$\,s, returns split as in
\hyperref[tab:sota]{Table~\ref*{tab:sota}}, DINO columns as there; rollout seed $0$.}
\label{tab:bankuse}
\centering\scriptsize
\setlength{\tabcolsep}{4pt}
\renewcommand{\arraystretch}{1.0}
\begin{tabular}{@{}l cccc cccc@{}}
\toprule
& \multicolumn{4}{c}{Starting pose} & \multicolumn{4}{c}{Mid-path} \\
\cmidrule(lr){2-5}\cmidrule(l){6-9}
Bank contents & Revisit$\uparrow$ & Gain$\uparrow$ & DINO$\uparrow$ & D.\,Gain$\uparrow$ & Revisit$\uparrow$ & Gain$\uparrow$ & DINO$\uparrow$ & D.\,Gain$\uparrow$ \\
\midrule
\rowcolor{ourrow}
\multicolumn{1}{@{}>{\columncolor{ourrow}[0pt][\tabcolsep]}l}{Top-$K$ (ours)} & \textbf{0.377} & \textbf{0.213} & \textbf{0.794} & \textbf{0.218} & \textbf{0.440} & \textbf{0.319} & \textbf{0.826} & \multicolumn{1}{>{\columncolor{ourrow}[\tabcolsep][0pt]}c@{}}{\textbf{0.256}} \\
Empty & 0.176 & 0.053 & 0.553 & 0.085 & 0.105 & 0.019 & 0.504 & 0.035 \\
$K$ random frames & 0.153 & 0.055 & 0.583 & 0.044 & 0.207 & 0.109 & 0.634 & 0.087 \\
$K$ frames, another episode & 0.071 & 0.012 & 0.445 & $-$0.020 & 0.073 & 0.016 & 0.459 & $-$0.014 \\
\bottomrule
\end{tabular}
\par\vspace{5pt}
\end{table}

%% file: figures/fig_sota_more.tex
\begin{figure}[!tbp]
\centering
\resizebox{\textwidth}{!}{%
\begin{tikzpicture}[
  img/.style={inner sep=0pt,outer sep=0pt},
  hd/.style={font=\fontsize{4.4}{5.2}\selectfont,anchor=south,inner sep=1.5pt,text=black!85,text height=1.7ex,text depth=0.5ex},
  tag/.style={font=\fontsize{5}{6}\selectfont,text=white,fill opacity=0.85,text opacity=1,rounded corners=1.5pt,inner sep=2pt,anchor=north west},
  sc/.style={font=\tiny,anchor=north,inner sep=2pt,text=black!55},
  blk/.style={font=\fontsize{6}{7}\selectfont\bfseries,anchor=south west,inner sep=2.5pt,text=black!75,fill=black!6,rounded corners=2pt},
]
\fill[ourrow,rounded corners=2.5pt] (13.860,0.30) rectangle (15.940,-11.020);
\node[img,anchor=north west] (sv00in) at (0,-1.243) {\includegraphics[width=1.62cm]{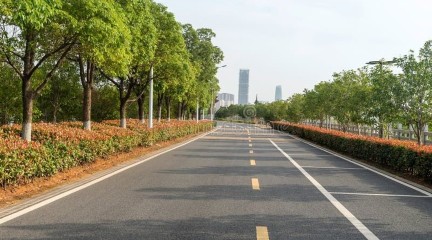}};
\draw[black!22,line width=0.4pt] (sv00in.north west) rectangle (sv00in.south east);
\node[tag,fill=selMix,anchor=north] at ([yshift=-0.06cm]sv00in.south) {$t = 0$\,s};
\draw[black!20,line width=0.4pt] (1.740,0.050) -- (1.740,-3.437);
\node[img,anchor=north west] (sv00r1lingbot) at (1.860,0.000) {\includegraphics[width=1.96cm]{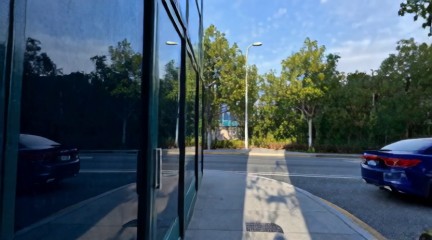}};
\draw[black!22,line width=0.4pt] (sv00r1lingbot.north west) rectangle (sv00r1lingbot.south east);
\node[tag,fill=selMix] at ([shift={(0.05,-0.05)}]sv00r1lingbot.north west) {$t = 46.25$\,s};
\node[hd] at ([yshift=0.03cm]sv00r1lingbot.north) {LingBot-World~2.0};
\node[img,anchor=north west] (sv00r1dreamx) at (3.870,0.000) {\includegraphics[width=1.96cm]{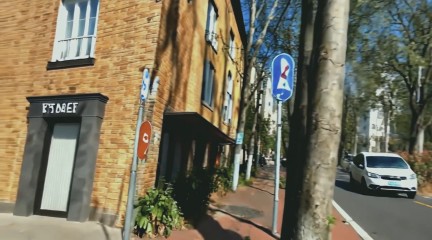}};
\draw[black!22,line width=0.4pt] (sv00r1dreamx.north west) rectangle (sv00r1dreamx.south east);
\node[hd] at ([yshift=0.03cm]sv00r1dreamx.north) {DreamX-World~1.0};
\node[img,anchor=north west] (sv00r1hyworld) at (5.880,0.000) {\includegraphics[width=1.96cm]{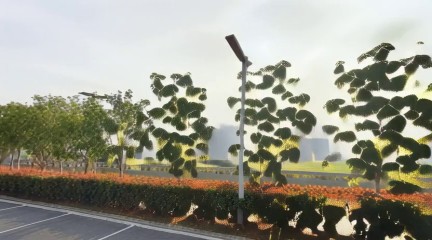}};
\draw[black!22,line width=0.4pt] (sv00r1hyworld.north west) rectangle (sv00r1hyworld.south east);
\node[hd] at ([yshift=0.03cm]sv00r1hyworld.north) {HY-WorldPlay~1.5};
\node[img,anchor=north west] (sv00r1matrix) at (7.890,0.000) {\includegraphics[width=1.96cm]{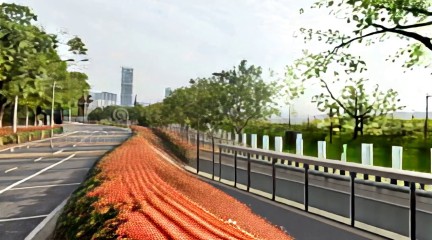}};
\draw[black!22,line width=0.4pt] (sv00r1matrix.north west) rectangle (sv00r1matrix.south east);
\node[hd] at ([yshift=0.03cm]sv00r1matrix.north) {Matrix-Game~3.0};
\node[img,anchor=north west] (sv00r1infworld) at (9.900,0.000) {\includegraphics[width=1.96cm]{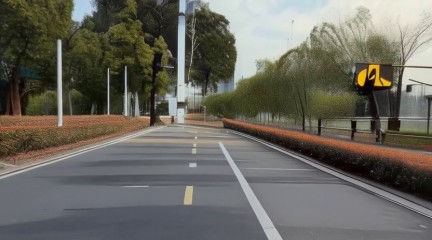}};
\draw[black!22,line width=0.4pt] (sv00r1infworld.north west) rectangle (sv00r1infworld.south east);
\node[hd] at ([yshift=0.03cm]sv00r1infworld.north) {Infinite-World};
\node[img,anchor=north west] (sv00r1car) at (11.910,0.000) {\includegraphics[width=1.96cm]{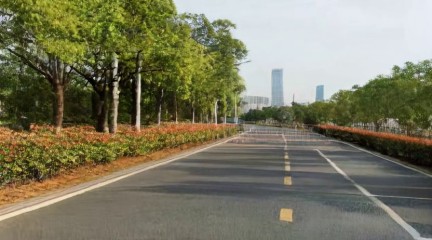}};
\draw[black!22,line width=0.4pt] (sv00r1car.north west) rectangle (sv00r1car.south east);
\node[hd] at ([yshift=0.03cm]sv00r1car.north) {CaR};
\node[img,anchor=north west] (sv00r1ours) at (13.920,0.000) {\includegraphics[width=1.96cm]{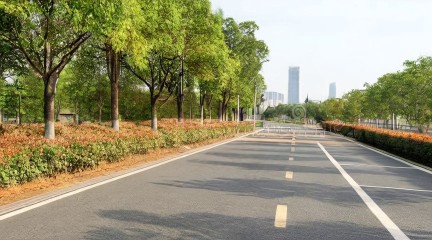}};
\draw[ourline,line width=0.8pt] (sv00r1ours.north west) rectangle (sv00r1ours.south east);
\node[hd,font=\fontsize{4.4}{5.2}\selectfont\bfseries,text=black] at ([yshift=0.03cm]sv00r1ours.north) {\ours};
\draw[black!30,line width=0.4pt,densely dotted] (1.800,-1.119) -- (15.940,-1.119);
\node[img,anchor=north west] (sv00r2lingbot) at (1.860,-1.149) {\includegraphics[width=1.96cm]{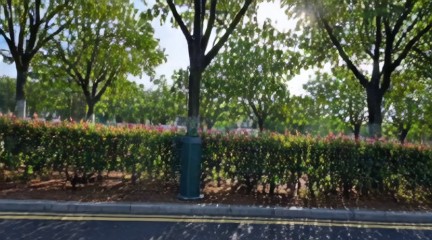}};
\draw[black!22,line width=0.4pt] (sv00r2lingbot.north west) rectangle (sv00r2lingbot.south east);
\node[tag,fill=selOv] at ([shift={(0.05,-0.05)}]sv00r2lingbot.north west) {$t = 12$\,s};
\node[img,anchor=north west] (sv00r2dreamx) at (3.870,-1.149) {\includegraphics[width=1.96cm]{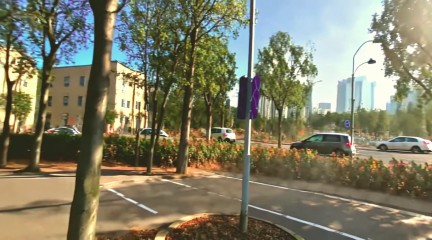}};
\draw[black!22,line width=0.4pt] (sv00r2dreamx.north west) rectangle (sv00r2dreamx.south east);
\node[img,anchor=north west] (sv00r2hyworld) at (5.880,-1.149) {\includegraphics[width=1.96cm]{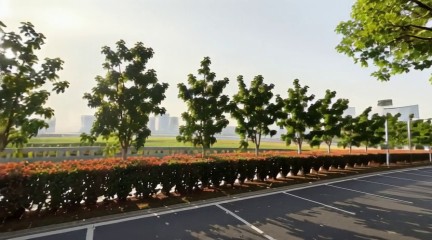}};
\draw[black!22,line width=0.4pt] (sv00r2hyworld.north west) rectangle (sv00r2hyworld.south east);
\node[img,anchor=north west] (sv00r2matrix) at (7.890,-1.149) {\includegraphics[width=1.96cm]{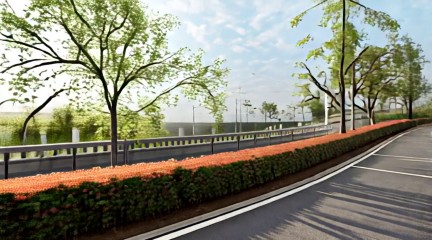}};
\draw[black!22,line width=0.4pt] (sv00r2matrix.north west) rectangle (sv00r2matrix.south east);
\node[img,anchor=north west] (sv00r2infworld) at (9.900,-1.149) {\includegraphics[width=1.96cm]{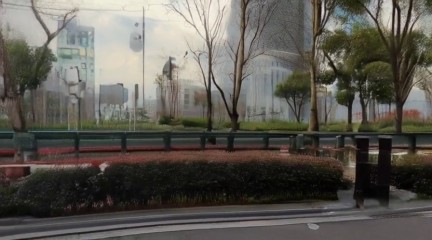}};
\draw[black!22,line width=0.4pt] (sv00r2infworld.north west) rectangle (sv00r2infworld.south east);
\node[img,anchor=north west] (sv00r2car) at (11.910,-1.149) {\includegraphics[width=1.96cm]{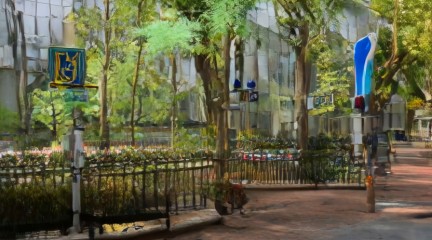}};
\draw[black!22,line width=0.4pt] (sv00r2car.north west) rectangle (sv00r2car.south east);
\node[img,anchor=north west] (sv00r2ours) at (13.920,-1.149) {\includegraphics[width=1.96cm]{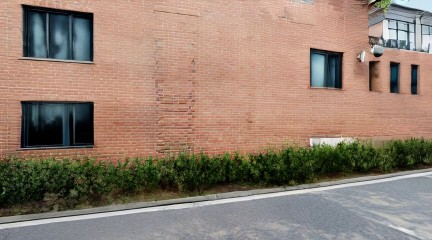}};
\draw[ourline,line width=0.8pt] (sv00r2ours.north west) rectangle (sv00r2ours.south east);
\node[img,anchor=north west] (sv00r3lingbot) at (1.860,-2.298) {\includegraphics[width=1.96cm]{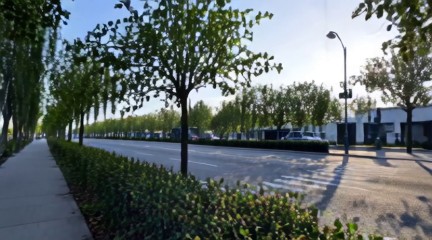}};
\draw[black!22,line width=0.4pt] (sv00r3lingbot.north west) rectangle (sv00r3lingbot.south east);
\node[tag,fill=selOv] at ([shift={(0.05,-0.05)}]sv00r3lingbot.north west) {$t = 51$\,s};
\node[img,anchor=north west] (sv00r3dreamx) at (3.870,-2.298) {\includegraphics[width=1.96cm]{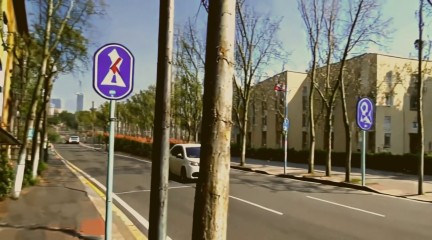}};
\draw[black!22,line width=0.4pt] (sv00r3dreamx.north west) rectangle (sv00r3dreamx.south east);
\node[img,anchor=north west] (sv00r3hyworld) at (5.880,-2.298) {\includegraphics[width=1.96cm]{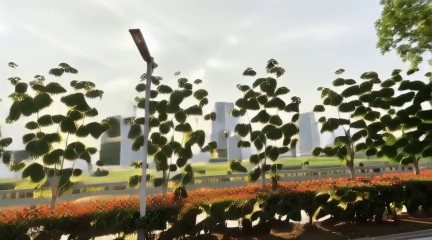}};
\draw[black!22,line width=0.4pt] (sv00r3hyworld.north west) rectangle (sv00r3hyworld.south east);
\node[img,anchor=north west] (sv00r3matrix) at (7.890,-2.298) {\includegraphics[width=1.96cm]{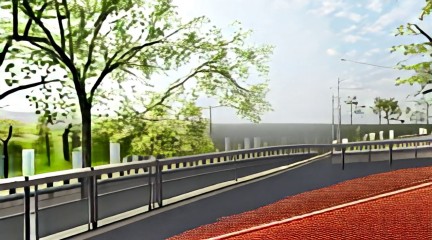}};
\draw[black!22,line width=0.4pt] (sv00r3matrix.north west) rectangle (sv00r3matrix.south east);
\node[img,anchor=north west] (sv00r3infworld) at (9.900,-2.298) {\includegraphics[width=1.96cm]{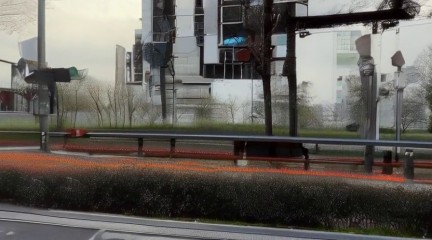}};
\draw[black!22,line width=0.4pt] (sv00r3infworld.north west) rectangle (sv00r3infworld.south east);
\node[img,anchor=north west] (sv00r3car) at (11.910,-2.298) {\includegraphics[width=1.96cm]{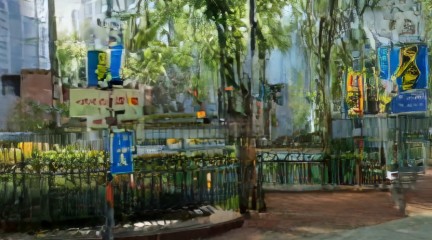}};
\draw[black!22,line width=0.4pt] (sv00r3car.north west) rectangle (sv00r3car.south east);
\node[img,anchor=north west] (sv00r3ours) at (13.920,-2.298) {\includegraphics[width=1.96cm]{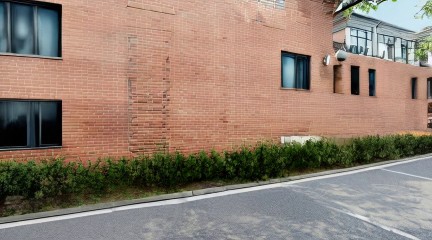}};
\draw[ourline,line width=0.8pt] (sv00r3ours.north west) rectangle (sv00r3ours.south east);
\node[img,anchor=north west] (sv01in) at (0,-5.030) {\includegraphics[width=1.62cm]{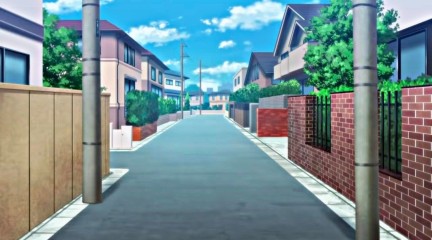}};
\draw[black!22,line width=0.4pt] (sv01in.north west) rectangle (sv01in.south east);
\node[tag,fill=selMix,anchor=north] at ([yshift=-0.06cm]sv01in.south) {$t = 0$\,s};
\draw[black!20,line width=0.4pt] (1.740,-3.737) -- (1.740,-7.223);
\node[img,anchor=north west] (sv01r1lingbot) at (1.860,-3.787) {\includegraphics[width=1.96cm]{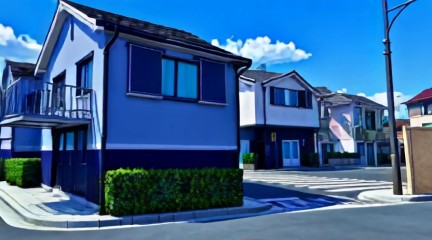}};
\draw[black!22,line width=0.4pt] (sv01r1lingbot.north west) rectangle (sv01r1lingbot.south east);
\node[tag,fill=selMix] at ([shift={(0.05,-0.05)}]sv01r1lingbot.north west) {$t = 28.5$\,s};
\node[img,anchor=north west] (sv01r1dreamx) at (3.870,-3.787) {\includegraphics[width=1.96cm]{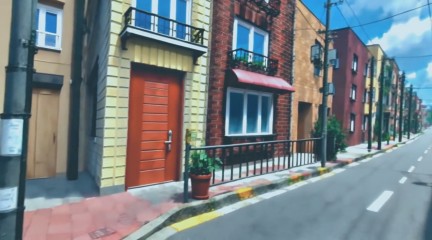}};
\draw[black!22,line width=0.4pt] (sv01r1dreamx.north west) rectangle (sv01r1dreamx.south east);
\node[img,anchor=north west] (sv01r1hyworld) at (5.880,-3.787) {\includegraphics[width=1.96cm]{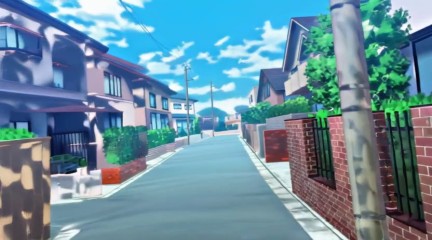}};
\draw[black!22,line width=0.4pt] (sv01r1hyworld.north west) rectangle (sv01r1hyworld.south east);
\node[img,anchor=north west] (sv01r1matrix) at (7.890,-3.787) {\includegraphics[width=1.96cm]{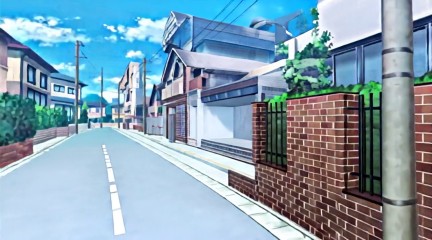}};
\draw[black!22,line width=0.4pt] (sv01r1matrix.north west) rectangle (sv01r1matrix.south east);
\node[img,anchor=north west] (sv01r1infworld) at (9.900,-3.787) {\includegraphics[width=1.96cm]{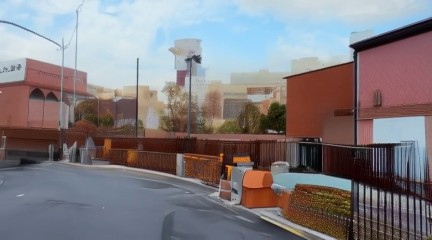}};
\draw[black!22,line width=0.4pt] (sv01r1infworld.north west) rectangle (sv01r1infworld.south east);
\node[img,anchor=north west] (sv01r1car) at (11.910,-3.787) {\includegraphics[width=1.96cm]{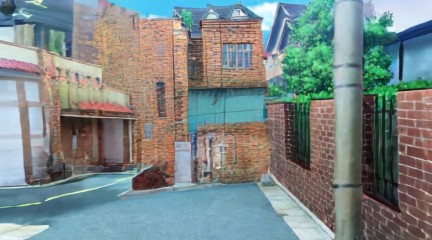}};
\draw[black!22,line width=0.4pt] (sv01r1car.north west) rectangle (sv01r1car.south east);
\node[img,anchor=north west] (sv01r1ours) at (13.920,-3.787) {\includegraphics[width=1.96cm]{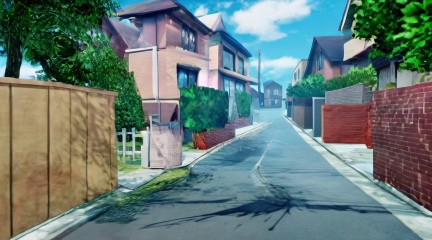}};
\draw[ourline,line width=0.8pt] (sv01r1ours.north west) rectangle (sv01r1ours.south east);
\draw[black!30,line width=0.4pt,densely dotted] (1.800,-4.906) -- (15.940,-4.906);
\node[img,anchor=north west] (sv01r2lingbot) at (1.860,-4.936) {\includegraphics[width=1.96cm]{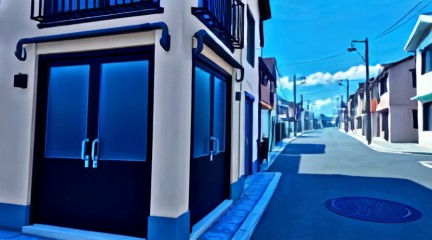}};
\draw[black!22,line width=0.4pt] (sv01r2lingbot.north west) rectangle (sv01r2lingbot.south east);
\node[tag,fill=selOv] at ([shift={(0.05,-0.05)}]sv01r2lingbot.north west) {$t = 14.5$\,s};
\node[img,anchor=north west] (sv01r2dreamx) at (3.870,-4.936) {\includegraphics[width=1.96cm]{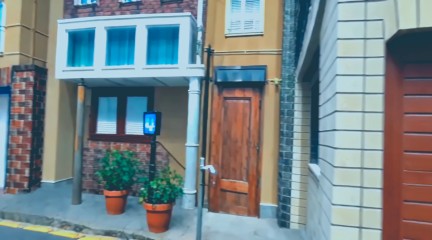}};
\draw[black!22,line width=0.4pt] (sv01r2dreamx.north west) rectangle (sv01r2dreamx.south east);
\node[img,anchor=north west] (sv01r2hyworld) at (5.880,-4.936) {\includegraphics[width=1.96cm]{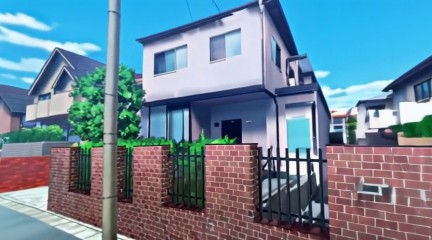}};
\draw[black!22,line width=0.4pt] (sv01r2hyworld.north west) rectangle (sv01r2hyworld.south east);
\node[img,anchor=north west] (sv01r2matrix) at (7.890,-4.936) {\includegraphics[width=1.96cm]{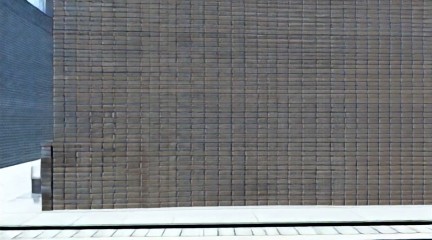}};
\draw[black!22,line width=0.4pt] (sv01r2matrix.north west) rectangle (sv01r2matrix.south east);
\node[img,anchor=north west] (sv01r2infworld) at (9.900,-4.936) {\includegraphics[width=1.96cm]{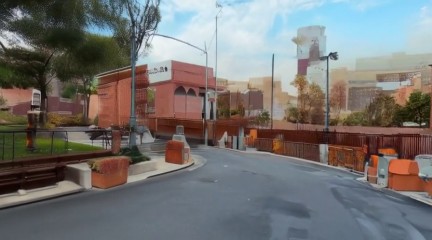}};
\draw[black!22,line width=0.4pt] (sv01r2infworld.north west) rectangle (sv01r2infworld.south east);
\node[img,anchor=north west] (sv01r2car) at (11.910,-4.936) {\includegraphics[width=1.96cm]{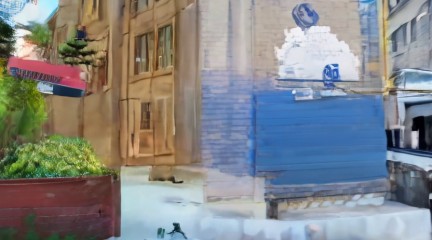}};
\draw[black!22,line width=0.4pt] (sv01r2car.north west) rectangle (sv01r2car.south east);
\node[img,anchor=north west] (sv01r2ours) at (13.920,-4.936) {\includegraphics[width=1.96cm]{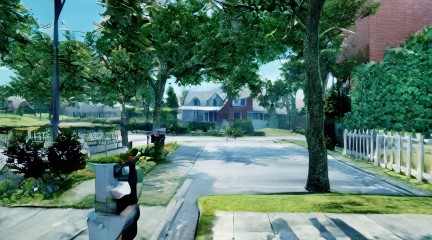}};
\draw[ourline,line width=0.8pt] (sv01r2ours.north west) rectangle (sv01r2ours.south east);
\node[img,anchor=north west] (sv01r3lingbot) at (1.860,-6.084) {\includegraphics[width=1.96cm]{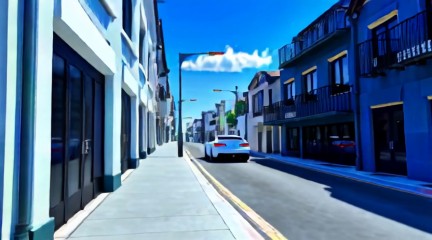}};
\draw[black!22,line width=0.4pt] (sv01r3lingbot.north west) rectangle (sv01r3lingbot.south east);
\node[tag,fill=selOv] at ([shift={(0.05,-0.05)}]sv01r3lingbot.north west) {$t = 50.75$\,s};
\node[img,anchor=north west] (sv01r3dreamx) at (3.870,-6.084) {\includegraphics[width=1.96cm]{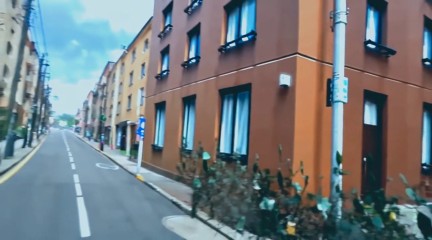}};
\draw[black!22,line width=0.4pt] (sv01r3dreamx.north west) rectangle (sv01r3dreamx.south east);
\node[img,anchor=north west] (sv01r3hyworld) at (5.880,-6.084) {\includegraphics[width=1.96cm]{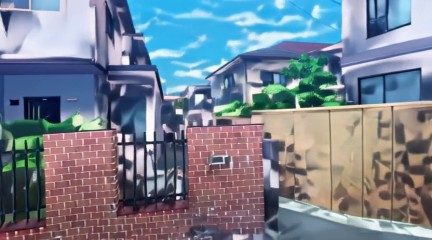}};
\draw[black!22,line width=0.4pt] (sv01r3hyworld.north west) rectangle (sv01r3hyworld.south east);
\node[img,anchor=north west] (sv01r3matrix) at (7.890,-6.084) {\includegraphics[width=1.96cm]{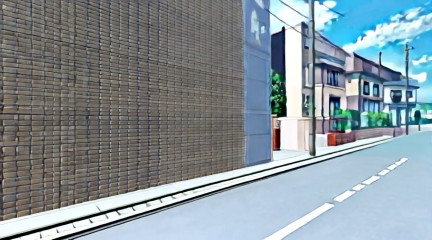}};
\draw[black!22,line width=0.4pt] (sv01r3matrix.north west) rectangle (sv01r3matrix.south east);
\node[img,anchor=north west] (sv01r3infworld) at (9.900,-6.084) {\includegraphics[width=1.96cm]{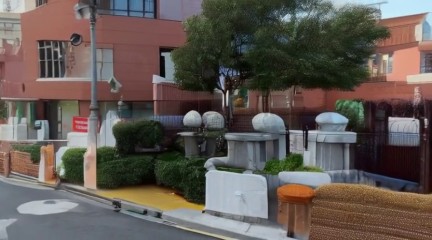}};
\draw[black!22,line width=0.4pt] (sv01r3infworld.north west) rectangle (sv01r3infworld.south east);
\node[img,anchor=north west] (sv01r3car) at (11.910,-6.084) {\includegraphics[width=1.96cm]{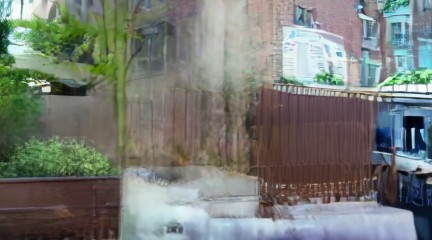}};
\draw[black!22,line width=0.4pt] (sv01r3car.north west) rectangle (sv01r3car.south east);
\node[img,anchor=north west] (sv01r3ours) at (13.920,-6.084) {\includegraphics[width=1.96cm]{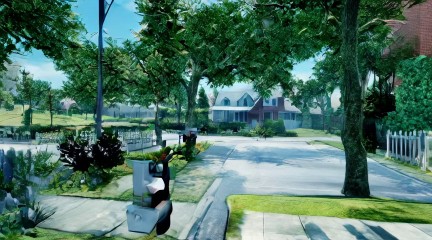}};
\draw[ourline,line width=0.8pt] (sv01r3ours.north west) rectangle (sv01r3ours.south east);
\node[img,anchor=north west] (sv17in) at (0,-8.817) {\includegraphics[width=1.62cm]{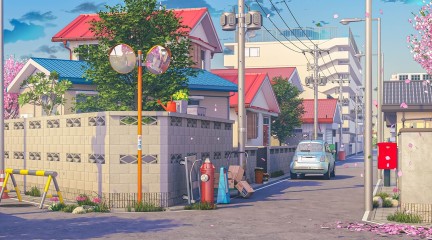}};
\draw[black!22,line width=0.4pt] (sv17in.north west) rectangle (sv17in.south east);
\node[tag,fill=selMix,anchor=north] at ([yshift=-0.06cm]sv17in.south) {$t = 0$\,s};
\draw[black!20,line width=0.4pt] (1.740,-7.523) -- (1.740,-11.010);
\node[img,anchor=north west] (sv17r1lingbot) at (1.860,-7.573) {\includegraphics[width=1.96cm]{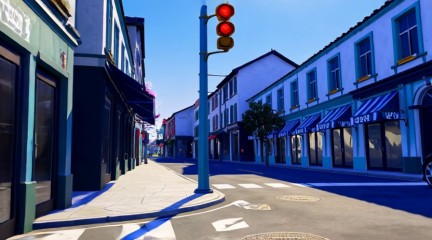}};
\draw[black!22,line width=0.4pt] (sv17r1lingbot.north west) rectangle (sv17r1lingbot.south east);
\node[tag,fill=selMix] at ([shift={(0.05,-0.05)}]sv17r1lingbot.north west) {$t = 59.75$\,s};
\node[img,anchor=north west] (sv17r1dreamx) at (3.870,-7.573) {\includegraphics[width=1.96cm]{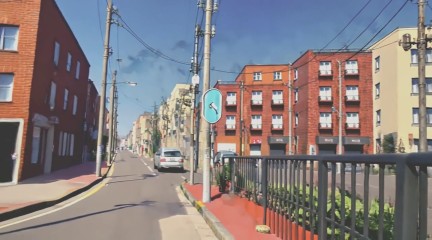}};
\draw[black!22,line width=0.4pt] (sv17r1dreamx.north west) rectangle (sv17r1dreamx.south east);
\node[img,anchor=north west] (sv17r1hyworld) at (5.880,-7.573) {\includegraphics[width=1.96cm]{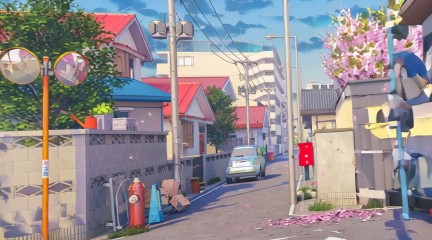}};
\draw[black!22,line width=0.4pt] (sv17r1hyworld.north west) rectangle (sv17r1hyworld.south east);
\node[img,anchor=north west] (sv17r1matrix) at (7.890,-7.573) {\includegraphics[width=1.96cm]{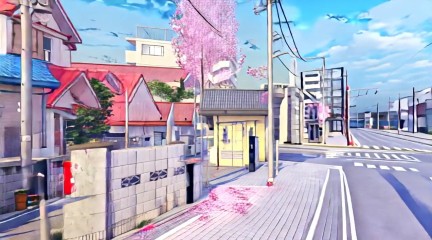}};
\draw[black!22,line width=0.4pt] (sv17r1matrix.north west) rectangle (sv17r1matrix.south east);
\node[img,anchor=north west] (sv17r1infworld) at (9.900,-7.573) {\includegraphics[width=1.96cm]{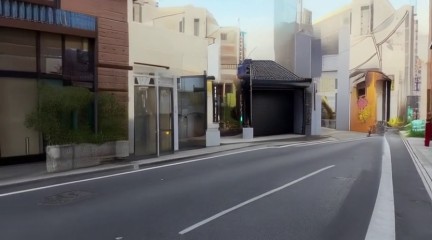}};
\draw[black!22,line width=0.4pt] (sv17r1infworld.north west) rectangle (sv17r1infworld.south east);
\node[img,anchor=north west] (sv17r1car) at (11.910,-7.573) {\includegraphics[width=1.96cm]{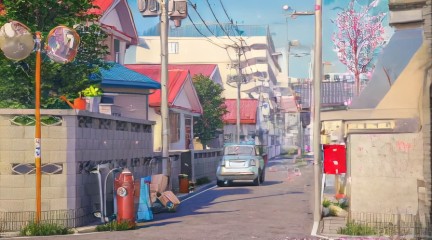}};
\draw[black!22,line width=0.4pt] (sv17r1car.north west) rectangle (sv17r1car.south east);
\node[img,anchor=north west] (sv17r1ours) at (13.920,-7.573) {\includegraphics[width=1.96cm]{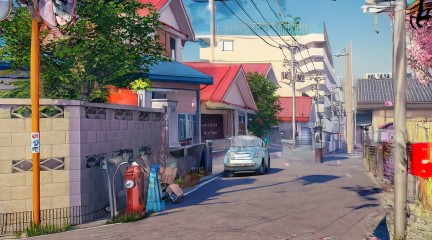}};
\draw[ourline,line width=0.8pt] (sv17r1ours.north west) rectangle (sv17r1ours.south east);
\draw[black!30,line width=0.4pt,densely dotted] (1.800,-8.692) -- (15.940,-8.692);
\node[img,anchor=north west] (sv17r2lingbot) at (1.860,-8.722) {\includegraphics[width=1.96cm]{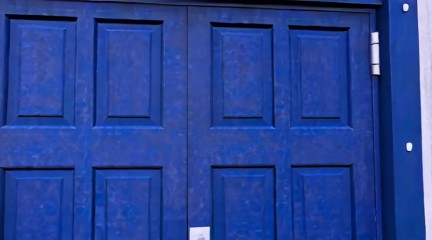}};
\draw[black!22,line width=0.4pt] (sv17r2lingbot.north west) rectangle (sv17r2lingbot.south east);
\node[tag,fill=selOv] at ([shift={(0.05,-0.05)}]sv17r2lingbot.north west) {$t = 6$\,s};
\node[img,anchor=north west] (sv17r2dreamx) at (3.870,-8.722) {\includegraphics[width=1.96cm]{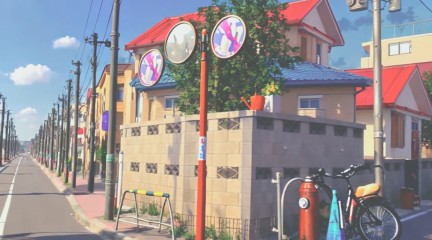}};
\draw[black!22,line width=0.4pt] (sv17r2dreamx.north west) rectangle (sv17r2dreamx.south east);
\node[img,anchor=north west] (sv17r2hyworld) at (5.880,-8.722) {\includegraphics[width=1.96cm]{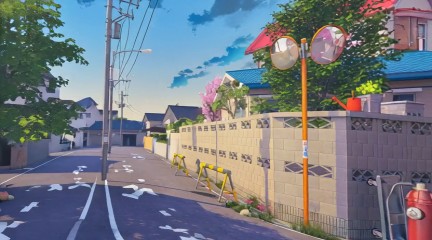}};
\draw[black!22,line width=0.4pt] (sv17r2hyworld.north west) rectangle (sv17r2hyworld.south east);
\node[img,anchor=north west] (sv17r2matrix) at (7.890,-8.722) {\includegraphics[width=1.96cm]{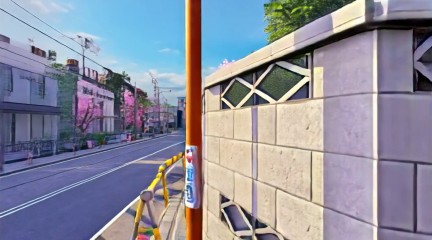}};
\draw[black!22,line width=0.4pt] (sv17r2matrix.north west) rectangle (sv17r2matrix.south east);
\node[img,anchor=north west] (sv17r2infworld) at (9.900,-8.722) {\includegraphics[width=1.96cm]{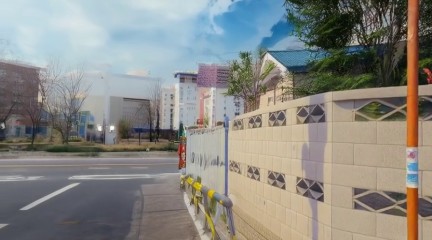}};
\draw[black!22,line width=0.4pt] (sv17r2infworld.north west) rectangle (sv17r2infworld.south east);
\node[img,anchor=north west] (sv17r2car) at (11.910,-8.722) {\includegraphics[width=1.96cm]{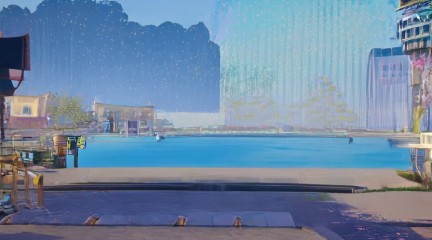}};
\draw[black!22,line width=0.4pt] (sv17r2car.north west) rectangle (sv17r2car.south east);
\node[img,anchor=north west] (sv17r2ours) at (13.920,-8.722) {\includegraphics[width=1.96cm]{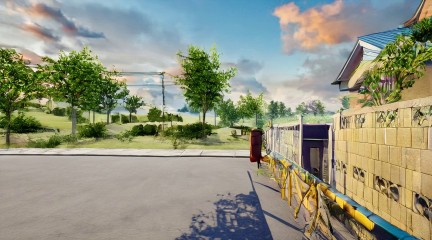}};
\draw[ourline,line width=0.8pt] (sv17r2ours.north west) rectangle (sv17r2ours.south east);
\node[img,anchor=north west] (sv17r3lingbot) at (1.860,-9.871) {\includegraphics[width=1.96cm]{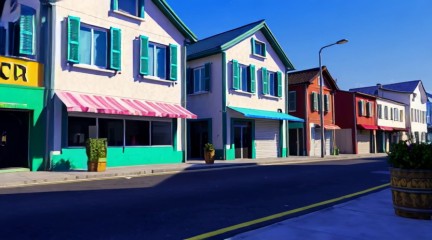}};
\draw[black!22,line width=0.4pt] (sv17r3lingbot.north west) rectangle (sv17r3lingbot.south east);
\node[tag,fill=selOv] at ([shift={(0.05,-0.05)}]sv17r3lingbot.north west) {$t = 47.5$\,s};
\node[img,anchor=north west] (sv17r3dreamx) at (3.870,-9.871) {\includegraphics[width=1.96cm]{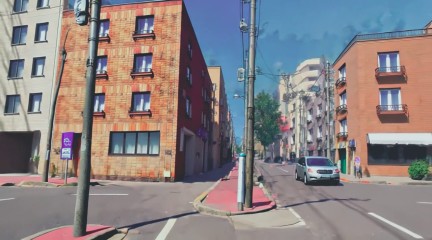}};
\draw[black!22,line width=0.4pt] (sv17r3dreamx.north west) rectangle (sv17r3dreamx.south east);
\node[img,anchor=north west] (sv17r3hyworld) at (5.880,-9.871) {\includegraphics[width=1.96cm]{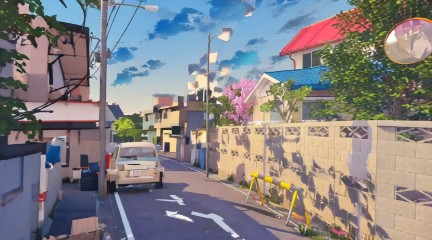}};
\draw[black!22,line width=0.4pt] (sv17r3hyworld.north west) rectangle (sv17r3hyworld.south east);
\node[img,anchor=north west] (sv17r3matrix) at (7.890,-9.871) {\includegraphics[width=1.96cm]{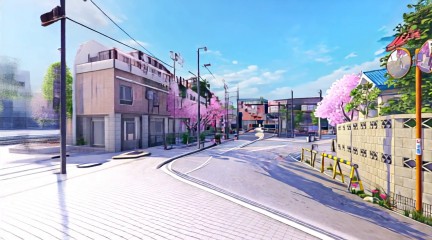}};
\draw[black!22,line width=0.4pt] (sv17r3matrix.north west) rectangle (sv17r3matrix.south east);
\node[img,anchor=north west] (sv17r3infworld) at (9.900,-9.871) {\includegraphics[width=1.96cm]{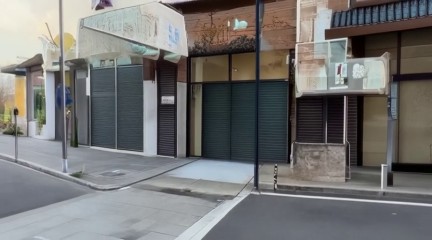}};
\draw[black!22,line width=0.4pt] (sv17r3infworld.north west) rectangle (sv17r3infworld.south east);
\node[img,anchor=north west] (sv17r3car) at (11.910,-9.871) {\includegraphics[width=1.96cm]{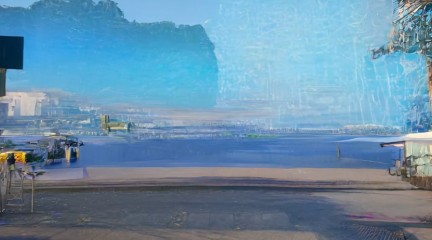}};
\draw[black!22,line width=0.4pt] (sv17r3car.north west) rectangle (sv17r3car.south east);
\node[img,anchor=north west] (sv17r3ours) at (13.920,-9.871) {\includegraphics[width=1.96cm]{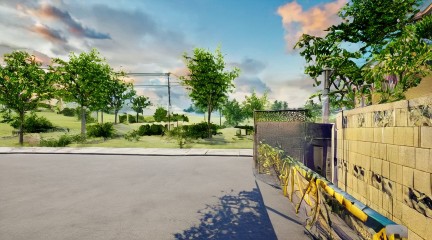}};
\draw[ourline,line width=0.8pt] (sv17r3ours.north west) rectangle (sv17r3ours.south east);
\end{tikzpicture}}
\caption{\textbf{More returns from open world models}, extending \hyperref[fig:revisit]{Figure~\ref*{fig:revisit}} to three more web photographs. Each block follows one photograph's
walk. Blue tags mark returns to the starting pose; orange tags the two ends of a return
to a pose met mid-path. Rollout seed $0$.}
\label{fig:sotamore}
\end{figure}

%% file: tables/selfcond.tex
\begin{table}[!htbp]
\caption{\textbf{Quality against what fills the context} on seen and unseen scenes, first
$60$\,s, five seeds. \emph{Generated}: ordinary rollout. \emph{GT bank}: the bank holds
ground-truth latents. \emph{GT context}: bank and recent window both ground truth. The
ground-truth videos themselves score $0.678$ and $0.803$ on seen scenes and $0.719$ and
$0.774$ on unseen ones. With a clean history each chunk generates
only four latents, so the seed spread rounds to zero.}
\label{tab:selfcond}
\centering\scriptsize
\setlength{\tabcolsep}{4pt}
\renewcommand{\arraystretch}{1.0}
\begin{tabular}{@{}l ccc ccc@{}}
\toprule
& \multicolumn{3}{c}{Imaging$\uparrow$} & \multicolumn{3}{c}{Subject$\uparrow$} \\
\cmidrule(lr){2-4}\cmidrule(l){5-7}
Training & Generated & GT bank & GT context & Generated & GT bank & GT context \\
\midrule
\rowcolor{grouprow}
\multicolumn{7}{@{}l}{\textit{Seen scenes}} \\
$10$\,s, no retrieval & 0.648{\fontsize{5}{5}\selectfont$\,\pm$.005} & -- & 0.671{\fontsize{5}{5}\selectfont$\,\pm$.001} & 0.801{\fontsize{5}{5}\selectfont$\,\pm$.002} & -- & 0.820{\fontsize{5}{5}\selectfont$\,\pm$.000} \\
$10$\,s, top-$K$, no bank in training & 0.654{\fontsize{5}{5}\selectfont$\,\pm$.008} & 0.663{\fontsize{5}{5}\selectfont$\,\pm$.004} & 0.673{\fontsize{5}{5}\selectfont$\,\pm$.000} & 0.799{\fontsize{5}{5}\selectfont$\,\pm$.006} & 0.811{\fontsize{5}{5}\selectfont$\,\pm$.002} & 0.824{\fontsize{5}{5}\selectfont$\,\pm$.000} \\
$100$\,s, top-$K$, bank & 0.646{\fontsize{5}{5}\selectfont$\,\pm$.015} & 0.666{\fontsize{5}{5}\selectfont$\,\pm$.004} & 0.668{\fontsize{5}{5}\selectfont$\,\pm$.000} & 0.804{\fontsize{5}{5}\selectfont$\,\pm$.001} & 0.818{\fontsize{5}{5}\selectfont$\,\pm$.002} & 0.820{\fontsize{5}{5}\selectfont$\,\pm$.000} \\
$200$\,s, top-$K$, bank & 0.637{\fontsize{5}{5}\selectfont$\,\pm$.012} & 0.656{\fontsize{5}{5}\selectfont$\,\pm$.003} & 0.666{\fontsize{5}{5}\selectfont$\,\pm$.000} & 0.806{\fontsize{5}{5}\selectfont$\,\pm$.006} & 0.820{\fontsize{5}{5}\selectfont$\,\pm$.001} & 0.821{\fontsize{5}{5}\selectfont$\,\pm$.000} \\
\addlinespace[1pt]
\rowcolor{grouprow}
\multicolumn{7}{@{}l}{\textit{Unseen scenes}} \\
$10$\,s, no retrieval & 0.666{\fontsize{5}{5}\selectfont$\,\pm$.014} & -- & 0.720{\fontsize{5}{5}\selectfont$\,\pm$.001} & 0.751{\fontsize{5}{5}\selectfont$\,\pm$.006} & -- & 0.803{\fontsize{5}{5}\selectfont$\,\pm$.000} \\
$10$\,s, top-$K$, no bank in training & 0.675{\fontsize{5}{5}\selectfont$\,\pm$.016} & 0.704{\fontsize{5}{5}\selectfont$\,\pm$.003} & 0.721{\fontsize{5}{5}\selectfont$\,\pm$.001} & 0.757{\fontsize{5}{5}\selectfont$\,\pm$.003} & 0.780{\fontsize{5}{5}\selectfont$\,\pm$.005} & 0.809{\fontsize{5}{5}\selectfont$\,\pm$.000} \\
$100$\,s, top-$K$, bank & 0.668{\fontsize{5}{5}\selectfont$\,\pm$.013} & 0.706{\fontsize{5}{5}\selectfont$\,\pm$.000} & 0.714{\fontsize{5}{5}\selectfont$\,\pm$.001} & 0.772{\fontsize{5}{5}\selectfont$\,\pm$.006} & 0.801{\fontsize{5}{5}\selectfont$\,\pm$.004} & 0.805{\fontsize{5}{5}\selectfont$\,\pm$.000} \\
$200$\,s, top-$K$, bank & 0.627{\fontsize{5}{5}\selectfont$\,\pm$.013} & 0.689{\fontsize{5}{5}\selectfont$\,\pm$.007} & 0.710{\fontsize{5}{5}\selectfont$\,\pm$.001} & 0.776{\fontsize{5}{5}\selectfont$\,\pm$.008} & 0.801{\fontsize{5}{5}\selectfont$\,\pm$.002} & 0.806{\fontsize{5}{5}\selectfont$\,\pm$.000} \\
\bottomrule
\end{tabular}
\end{table}

%% file: figures/fig_teaser_grid.tex
\definecolor{selTwo}{HTML}{1BAF7A}
\begin{figure}[!tbp]
\centering
\resizebox{\textwidth}{!}{%
\begin{tikzpicture}[
  img/.style={inner sep=0pt,outer sep=0pt},
  hd/.style={font=\fontsize{4.6}{5.4}\selectfont,anchor=south,inner sep=1.2pt,text=black!85,align=center},
  tag/.style={font=\fontsize{3.0}{3.6}\selectfont,text=white,fill opacity=0.85,text opacity=1,rounded corners=0.9pt,inner sep=0.9pt,anchor=north west},
]
\node[img,anchor=north west] (g0c0) at (0.000,0.000) {\includegraphics[width=1.62cm]{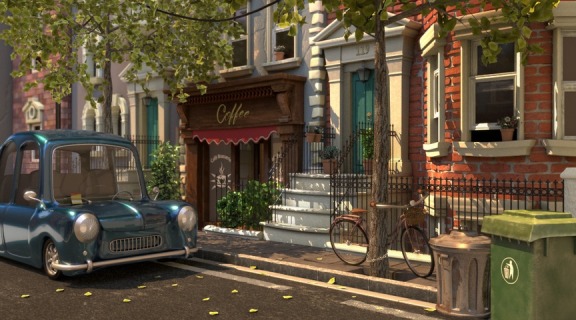}};
\node[hd] at ([yshift=0.03cm]g0c0.north) {Input};
\node[img,anchor=north west] (g0c1) at (1.900,0.000) {\includegraphics[width=1.62cm]{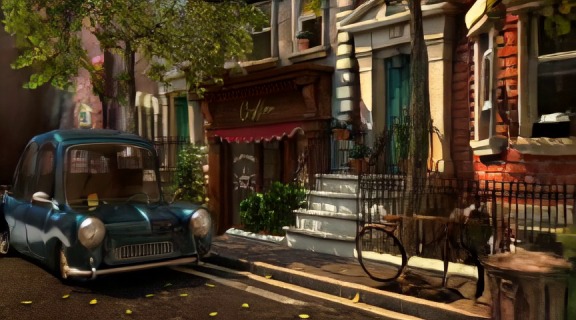}};
\node[tag,fill=selMix] at ([shift={(0.03,-0.03)}]g0c1.north west) {$t = 22.75$\,s};
\node[hd] at ([yshift=0.03cm]g0c1.north) {Near first pose};
\node[img,anchor=north west] (g0c2) at (3.580,0.000) {\includegraphics[width=1.62cm]{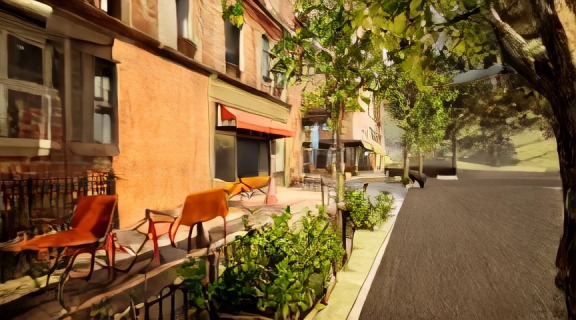}};
\node[tag,fill=selOv] at ([shift={(0.03,-0.03)}]g0c2.north west) {$t = 10$\,s};
\node[hd] at ([yshift=0.03cm]g0c2.north) {Return 1, first};
\node[img,anchor=north west] (g0c3) at (5.260,0.000) {\includegraphics[width=1.62cm]{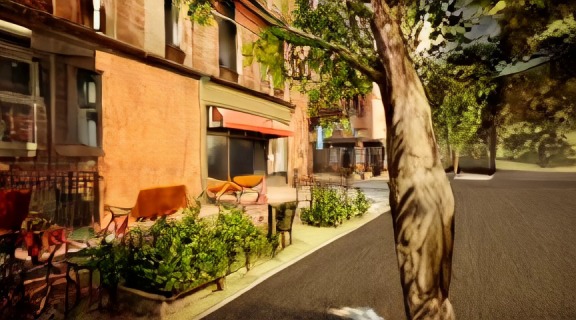}};
\node[tag,fill=selOv] at ([shift={(0.03,-0.03)}]g0c3.north west) {$t = 18$\,s};
\node[hd] at ([yshift=0.03cm]g0c3.north) {Return 1, back};
\node[img,anchor=north west] (g0c4) at (6.940,0.000) {\includegraphics[width=1.62cm]{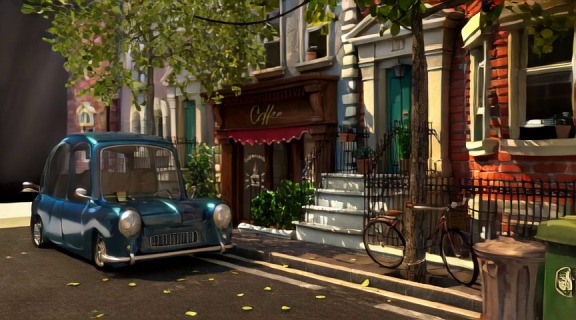}};
\node[tag,fill=selTwo] at ([shift={(0.03,-0.03)}]g0c4.north west) {$t = 0$\,s};
\node[hd] at ([yshift=0.03cm]g0c4.north) {Return 2, first};
\node[img,anchor=north west] (g0c5) at (8.620,0.000) {\includegraphics[width=1.62cm]{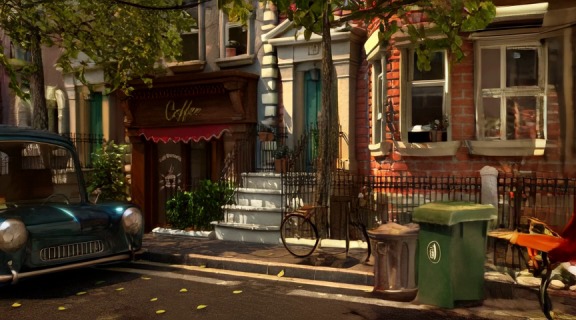}};
\node[tag,fill=selTwo] at ([shift={(0.03,-0.03)}]g0c5.north west) {$t = 22$\,s};
\node[hd] at ([yshift=0.03cm]g0c5.north) {Return 2, back};
\node[img,anchor=north west] (g1c0) at (0.000,-0.980) {\includegraphics[width=1.62cm]{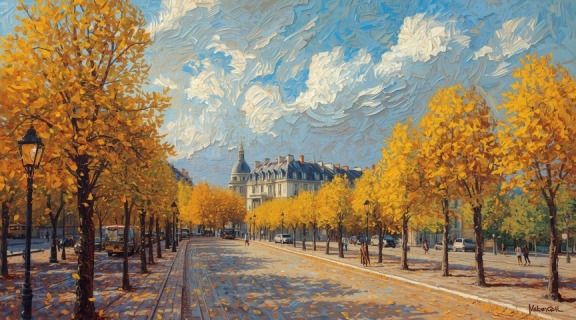}};
\node[img,anchor=north west] (g1c1) at (1.900,-0.980) {\includegraphics[width=1.62cm]{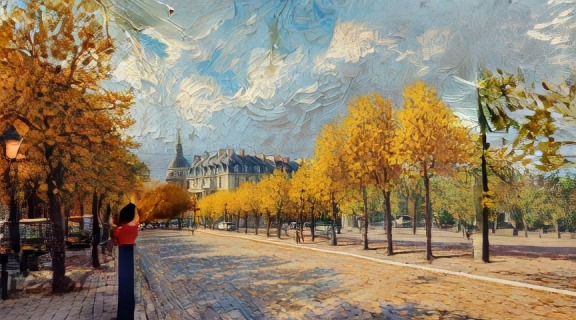}};
\node[tag,fill=selMix] at ([shift={(0.03,-0.03)}]g1c1.north west) {$t = 14.75$\,s};
\node[img,anchor=north west] (g1c2) at (3.580,-0.980) {\includegraphics[width=1.62cm]{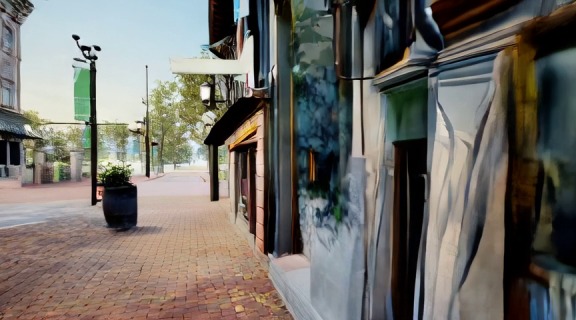}};
\node[tag,fill=selOv] at ([shift={(0.03,-0.03)}]g1c2.north west) {$t = 6.75$\,s};
\node[img,anchor=north west] (g1c3) at (5.260,-0.980) {\includegraphics[width=1.62cm]{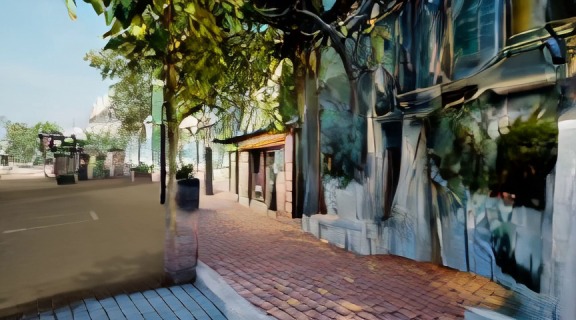}};
\node[tag,fill=selOv] at ([shift={(0.03,-0.03)}]g1c3.north west) {$t = 21.25$\,s};
\node[img,anchor=north west] (g1c4) at (6.940,-0.980) {\includegraphics[width=1.62cm]{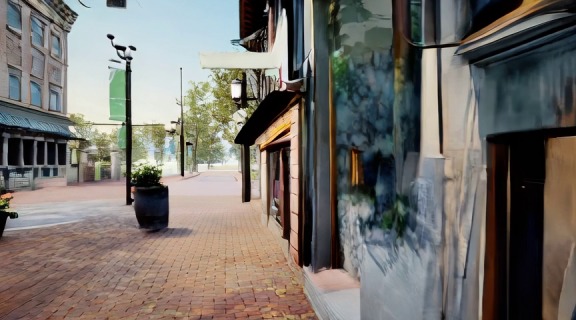}};
\node[tag,fill=selTwo] at ([shift={(0.03,-0.03)}]g1c4.north west) {$t = 7.75$\,s};
\node[img,anchor=north west] (g1c5) at (8.620,-0.980) {\includegraphics[width=1.62cm]{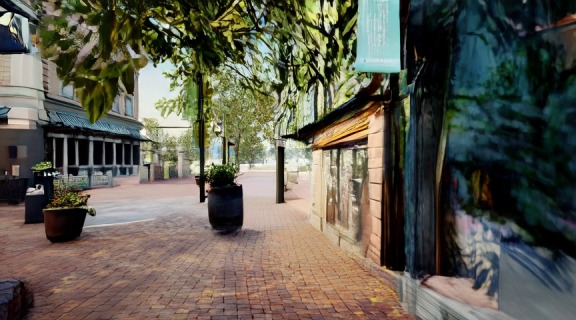}};
\node[tag,fill=selTwo] at ([shift={(0.03,-0.03)}]g1c5.north west) {$t = 25.25$\,s};
\node[img,anchor=north west] (g2c0) at (0.000,-1.960) {\includegraphics[width=1.62cm]{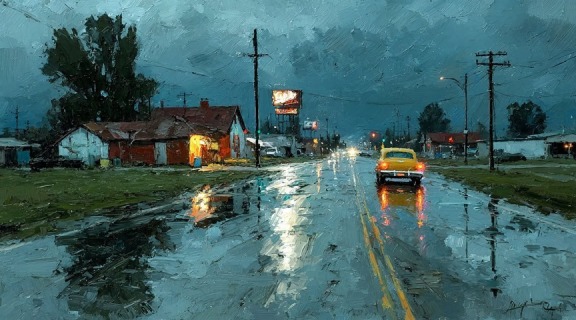}};
\node[img,anchor=north west] (g2c1) at (1.900,-1.960) {\includegraphics[width=1.62cm]{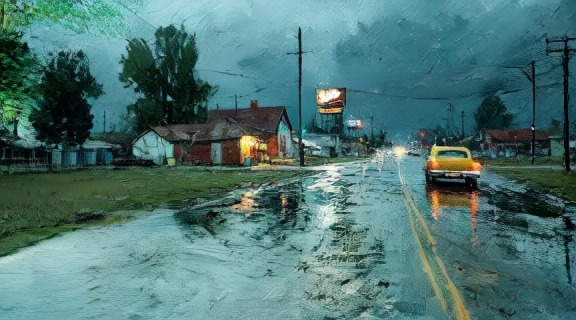}};
\node[tag,fill=selMix] at ([shift={(0.03,-0.03)}]g2c1.north west) {$t = 17.25$\,s};
\node[img,anchor=north west] (g2c2) at (3.580,-1.960) {\includegraphics[width=1.62cm]{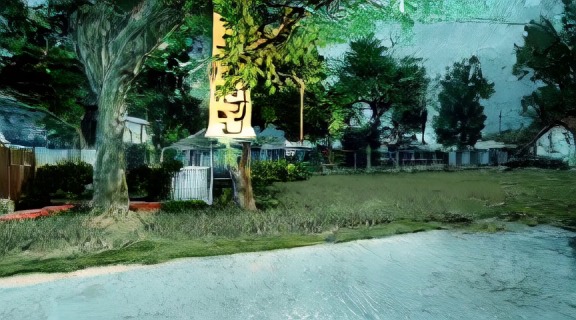}};
\node[tag,fill=selOv] at ([shift={(0.03,-0.03)}]g2c2.north west) {$t = 3.75$\,s};
\node[img,anchor=north west] (g2c3) at (5.260,-1.960) {\includegraphics[width=1.62cm]{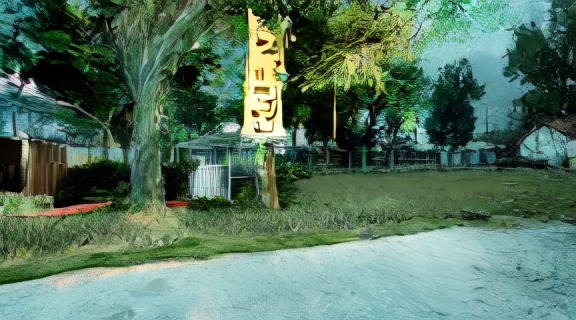}};
\node[tag,fill=selOv] at ([shift={(0.03,-0.03)}]g2c3.north west) {$t = 15$\,s};
\node[img,anchor=north west] (g2c4) at (6.940,-1.960) {\includegraphics[width=1.62cm]{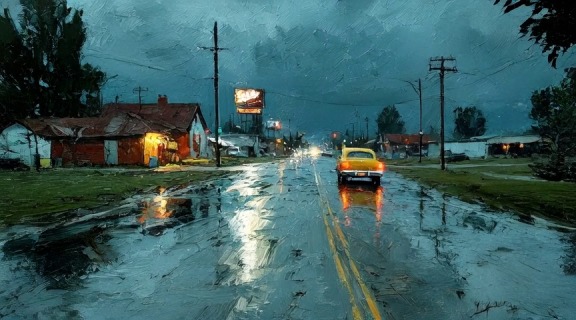}};
\node[tag,fill=selTwo] at ([shift={(0.03,-0.03)}]g2c4.north west) {$t = 0.5$\,s};
\node[img,anchor=north west] (g2c5) at (8.620,-1.960) {\includegraphics[width=1.62cm]{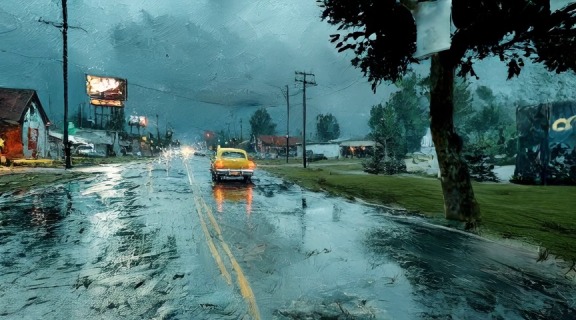}};
\node[tag,fill=selTwo] at ([shift={(0.03,-0.03)}]g2c5.north west) {$t = 18.75$\,s};
\node[img,anchor=north west] (g3c0) at (0.000,-2.940) {\includegraphics[width=1.62cm]{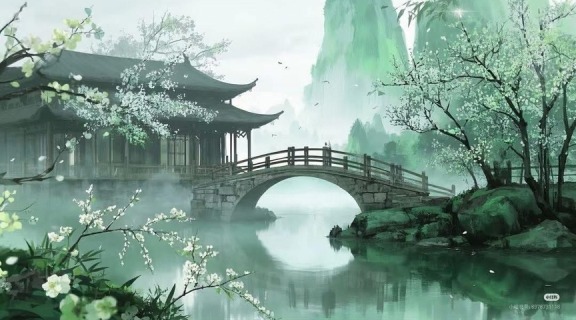}};
\node[img,anchor=north west] (g3c1) at (1.900,-2.940) {\includegraphics[width=1.62cm]{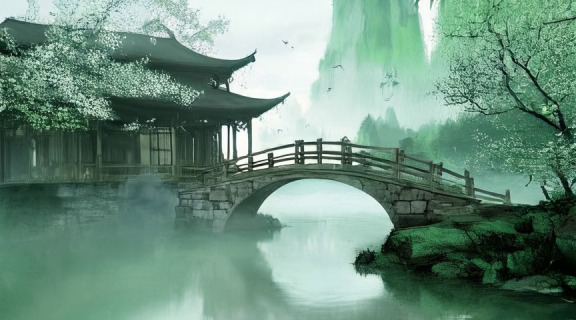}};
\node[tag,fill=selMix] at ([shift={(0.03,-0.03)}]g3c1.north west) {$t = 17$\,s};
\node[img,anchor=north west] (g3c2) at (3.580,-2.940) {\includegraphics[width=1.62cm]{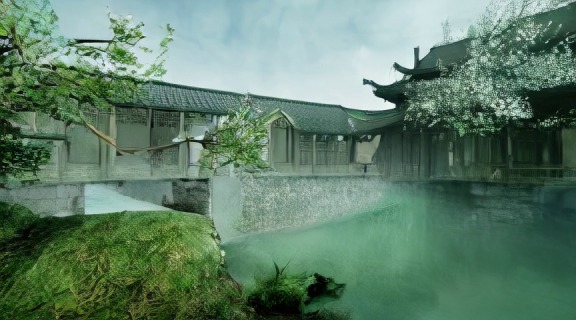}};
\node[tag,fill=selOv] at ([shift={(0.03,-0.03)}]g3c2.north west) {$t = 6$\,s};
\node[img,anchor=north west] (g3c3) at (5.260,-2.940) {\includegraphics[width=1.62cm]{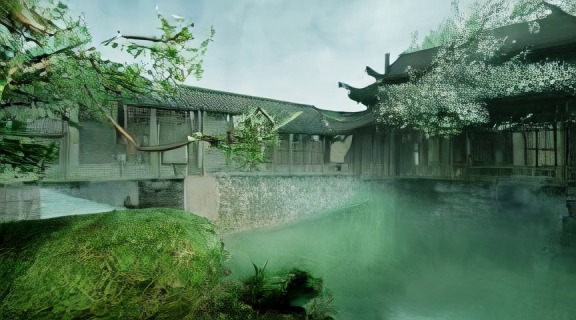}};
\node[tag,fill=selOv] at ([shift={(0.03,-0.03)}]g3c3.north west) {$t = 14.25$\,s};
\node[img,anchor=north west] (g3c4) at (6.940,-2.940) {\includegraphics[width=1.62cm]{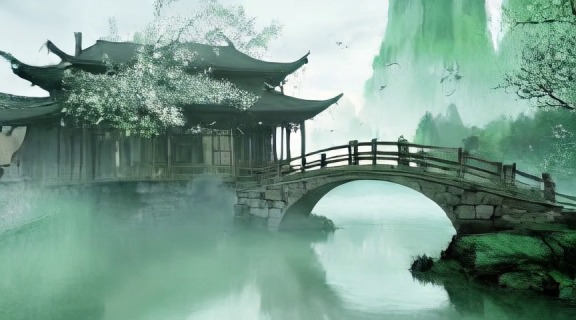}};
\node[tag,fill=selTwo] at ([shift={(0.03,-0.03)}]g3c4.north west) {$t = 4$\,s};
\node[img,anchor=north west] (g3c5) at (8.620,-2.940) {\includegraphics[width=1.62cm]{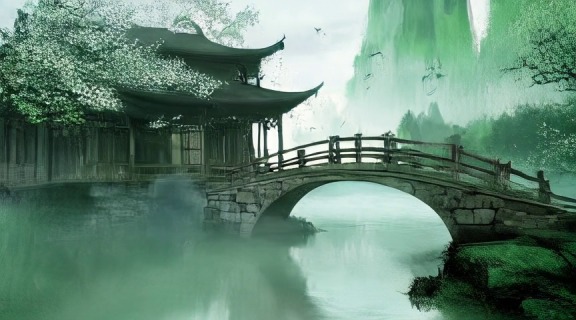}};
\node[tag,fill=selTwo] at ([shift={(0.03,-0.03)}]g3c5.north west) {$t = 18.25$\,s};
\node[img,anchor=north west] (g4c0) at (0.000,-3.920) {\includegraphics[width=1.62cm]{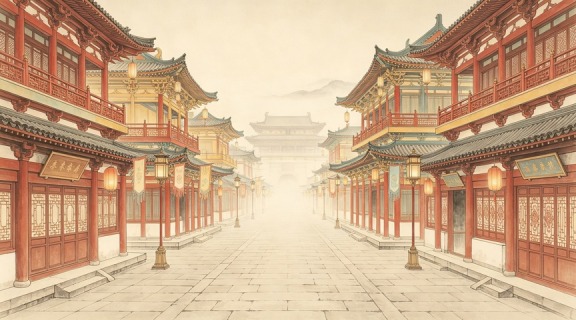}};
\node[img,anchor=north west] (g4c1) at (1.900,-3.920) {\includegraphics[width=1.62cm]{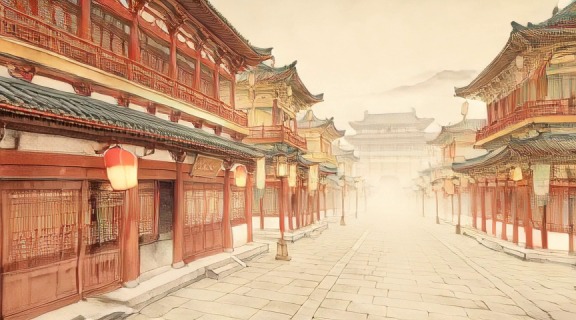}};
\node[tag,fill=selMix] at ([shift={(0.03,-0.03)}]g4c1.north west) {$t = 8$\,s};
\node[img,anchor=north west] (g4c2) at (3.580,-3.920) {\includegraphics[width=1.62cm]{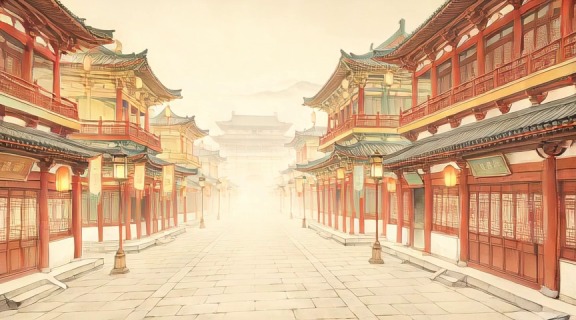}};
\node[tag,fill=selOv] at ([shift={(0.03,-0.03)}]g4c2.north west) {$t = 0$\,s};
\node[img,anchor=north west] (g4c3) at (5.260,-3.920) {\includegraphics[width=1.62cm]{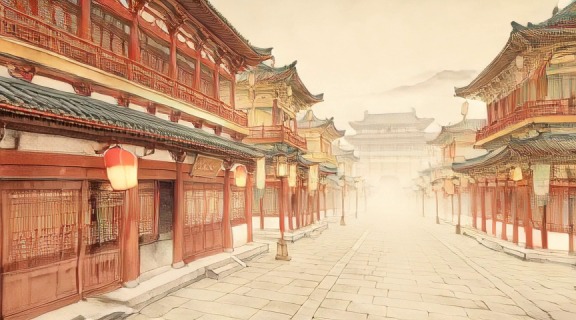}};
\node[tag,fill=selOv] at ([shift={(0.03,-0.03)}]g4c3.north west) {$t = 8$\,s};
\node[img,anchor=north west] (g4c4) at (6.940,-3.920) {\includegraphics[width=1.62cm]{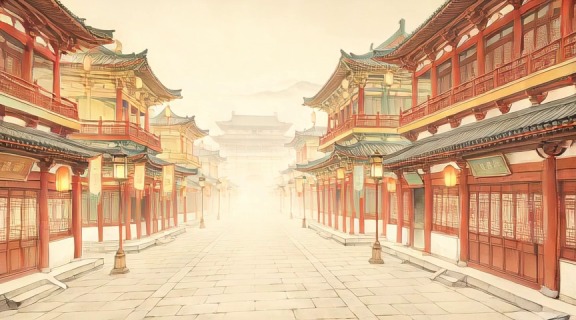}};
\node[tag,fill=selTwo] at ([shift={(0.03,-0.03)}]g4c4.north west) {$t = 0$\,s};
\node[img,anchor=north west] (g4c5) at (8.620,-3.920) {\includegraphics[width=1.62cm]{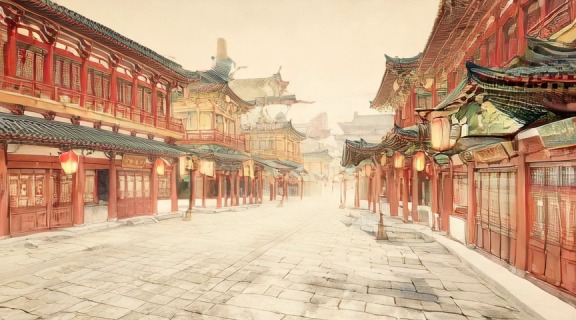}};
\node[tag,fill=selTwo] at ([shift={(0.03,-0.03)}]g4c5.north west) {$t = 14.5$\,s};
\node[img,anchor=north west] (g5c0) at (0.000,-4.900) {\includegraphics[width=1.62cm]{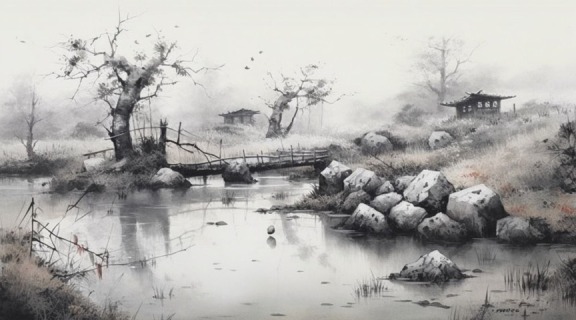}};
\node[img,anchor=north west] (g5c1) at (1.900,-4.900) {\includegraphics[width=1.62cm]{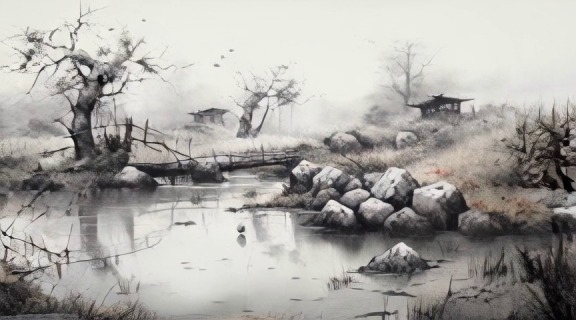}};
\node[tag,fill=selMix] at ([shift={(0.03,-0.03)}]g5c1.north west) {$t = 8$\,s};
\node[img,anchor=north west] (g5c2) at (3.580,-4.900) {\includegraphics[width=1.62cm]{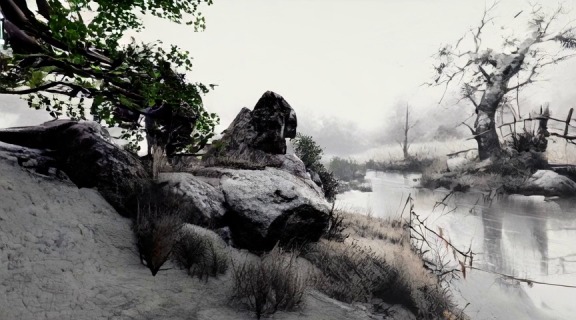}};
\node[tag,fill=selOv] at ([shift={(0.03,-0.03)}]g5c2.north west) {$t = 2.5$\,s};
\node[img,anchor=north west] (g5c3) at (5.260,-4.900) {\includegraphics[width=1.62cm]{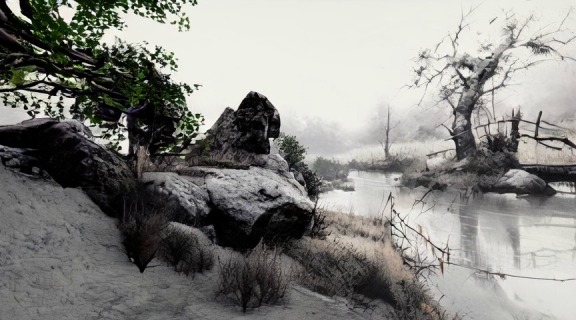}};
\node[tag,fill=selOv] at ([shift={(0.03,-0.03)}]g5c3.north west) {$t = 10.5$\,s};
\node[img,anchor=north west] (g5c4) at (6.940,-4.900) {\includegraphics[width=1.62cm]{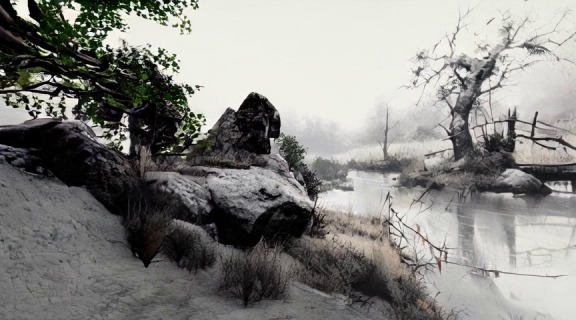}};
\node[tag,fill=selTwo] at ([shift={(0.03,-0.03)}]g5c4.north west) {$t = 3.25$\,s};
\node[img,anchor=north west] (g5c5) at (8.620,-4.900) {\includegraphics[width=1.62cm]{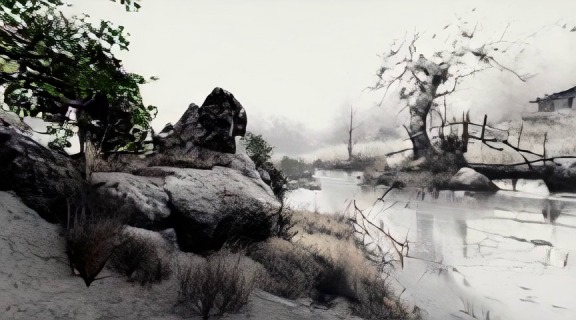}};
\node[tag,fill=selTwo] at ([shift={(0.03,-0.03)}]g5c5.north west) {$t = 19.5$\,s};
\end{tikzpicture}}
\caption{\textbf{More returns from \ours} on six web photographs. Blue: a pose close to the first frame's. Orange and green: two returns, each shown at its first visit and on coming back; they meet the Revisit rule of \hyperref[sec:setup]{Sec.~\ref*{sec:setup}}, or, where a path has only one such return, the same rule at twice its tolerance.}
\label{fig:teasergrid0}
\end{figure}

%% file: appendix/04_visual.tex
\subsection{Additional Visual Comparisons}
\label{app:visual}

\hyperref[fig:sotamore]{Figure~\ref*{fig:sotamore}} extends \hyperref[fig:revisit]{Figure~\ref*{fig:revisit}} to three more web photographs; \hyperref[fig:teasergrid0]{Figure~\ref*{fig:teasergrid0}} shows more returns from \ours alone.